\newlist{enumerate*}{enumerate*}{1}
\setlist[enumerate*]{label=(\arabic*),
  itemjoin={{, }}, itemjoin*={{, and }}, after={.}}
\newcommand{\R}{\mathbb{R}}
\newcommand{\N}{\mathbb{N}}
\newcommand{\bp}{\mathbf{p}}
\newcommand{\br}{\mathbf{r}}
\DeclareMathOperator*{\argmax}{arg\,max}
\NewDocumentCommand{\conv}{mo}{\operatorname{conv}\left(#1%
    \IfValueT{#2}{\,\middle|\,#2}%
  \right)}
\NewDocumentCommand{\cone}{mo}{\operatorname{cone}\left(#1%
    \IfValueT{#2}{\,\middle|\,#2}%
  \right)}
\DeclarePairedDelimiterX\inp[2]{\langle}{\rangle}{#1,#2}
\NewDocumentCommand{\probability}{d()om}{%
  \operatorname{\mathbb{P}}%
  \IfValueT{#1}{\sb{#1}}%
  \left[#3%
    \IfValueT{#2}{\,\middle|\,#2}\right]}
\NewDocumentCommand{\expectation}{d()om}%
  {\operatorname{\mathbb{E}}%
    \IfValueT{#1}{\sb{#1}}%
    \left[#3%
      \IfValueT{#2}{\,\middle|\,#2}\right]}
\NewDocumentCommand{\variance}{d()om}%
  {\operatorname{Var}%
    \IfValueT{#1}{\sb{#1}}%
    \left[#3%
      \IfValueT{#2}{\,\middle|\,#2}\right]}
\NewDocumentCommand{\norm}{osm}{%
  \IfBooleanT{#2}{\left}\lVert
    #3%
    \IfBooleanT{#2}{\right}\rVert
  \IfValueT{#1}{\sb{#1}}}
\newcommand*{\size}[1]{\left|#1\right|}
\newcommand{\mbx}{\mathbf{x}}
\newcommand{\mbu}{\mathbf{u}}
\newcommand{\mbe}{\mathbf{e}}
\newcommand{\mby}{\mathbf{y}}
\newcommand{\blambda}{\boldsymbol{\lambda}}
\newtheorem{theorem}{Theorem}[section]
\newtheorem{proposition}[theorem]{Proposition}
\newtheorem{corollary}[theorem]{Corollary}
\theoremstyle{remark}
\newtheorem{remark}[theorem]{Remark}
\newtheorem{example}[theorem]{Example}
\theoremstyle{definition}
\newtheorem{definition}[theorem]{Definition}
\newtheorem{assumption}[theorem]{Assumption}
\let\citet\cite
\title{Adversaries in Online Learning Revisited:\\
with applications in Robust Optimization\\ and Adversarial training}
\date{January 27, 2021 \\ (first version: Feburary 12, 2017)}
\author[1]{Sebastian Pokutta}
\affil[1]{Zuse Institute Berlin and Technische Universität Berlin, Germany.
  \textit{Email:}~pokutta@zib.de}
\author[2]{Huan Xu}
\affil[2]{Alibaba Inc, USA.
  \textit{Email:}~huan.xu@alibaba-inc.com}
\begin{document}
\maketitle

\begin{abstract}
  We revisit the concept of ``adversary'' in online learning,
  motivated by solving robust optimization and adversarial training
  using online learning methods.  While one of the classical setups in
  online learning deals with the ``adversarial'' setup, it appears that
  this concept is used less rigorously, causing confusion in
  applying results and insights from online learning.
  Specifically, there are two fundamentally different types of
  adversaries, depending on whether the ``adversary'' is able to
  anticipate the exogenous randomness of the online learning
  algorithms. This is particularly relevant to robust optimization and
  adversarial training because the adversarial sequences are often
  anticipative, and many online learning algorithms do not achieve
  diminishing regret in such a case.
  
  We then apply this to solving robust optimization problems or
  (equivalently) adversarial training problems via online learning and
  establish a general approach for a large variety of problem classes
  using \emph{imaginary play}. Here two players play against each
  other, the primal player playing the decisions and the dual player
  playing realizations of uncertain data. When the game terminates,
  the primal player has obtained an approximately robust solution.
  This meta-game allows for solving a large variety of robust
  optimization and multi-objective optimization problems and
  generalizes the approach of \citet{ben2015oracle}.

\end{abstract}

\section{Introduction}
\label{sec:introduction}
This paper, motivated by solving robust optimization and
(equivalently) adversarial training, revisits the concept of
``adversary'' in online learning. A significant amount of literature
in online learning focuses on the so-called ``adversarial'' setup, where a
learner picks a sequence of solutions against a sequence of inputs
``chosen adversarially'', and achieves accumulated utility almost as
good as the best fixed solution in hindsight. While those results are
widely known and applied, we observe that the concept of ``adversary''
is often understood and applied in an incorrect way, partly because of
a lack of a rigorous definition, which we address in this paper. Our
observation is largely motivated by recent works applying online
learning to solve robust optimization and adversarial training, where
diminishing regret, contrary to the claim, is not guaranteed to be achieved.

Robust optimization \citep[see e.g.,
][]{ben1998robust,BN00,ben2002robust,Bertsimas04,BGN09,bertsimas2011theory}
is a powerful approach to account for data uncertainty when
distributional information is not available.  Taking a worst-case
perspective in \emph{Robust Optimization (RO)} we are interested in
finding a good solution against the worst-case data realization, which
leads to problems of the form
\begin{equation}
  \label{eq:robOptProblemIntro}
\mbox{Minimize}_{\mbx\in \mathcal{X}}\,\, \mbox{Maximize}_{\mbu\in \mathcal{U}} \,\, f(\mbx, \mbu),
\end{equation}
where $\mathcal{X}$ is the decision set and $\mathcal{U}$ is the
uncertainty set and it is well known that robust optimization can be
expressed in this standard form via appropriate choice of $f$. We thus
look for a \emph{robust solution} $\mbx$ that minimizes the cost $f$
under the worst-case realization $\mbu$. In particular, when the
function \(f\)
is convex in the decisions \(\mbx\)
and concave in the uncertainty \(\mbu\)
the resulting problem is convex and there is a plethora of approaches
to solve these problems. In fact, robust optimization is tractable in
many cases and can be solved often via methods from convex
optimization as well as integer programming in the context of discrete
decision; we refer the interested reader to \citet{BGN09} for an
introduction and to, e.g., \citet{bertsimas2003robust} for robust
optimization with discrete decisions.

Robust optimization is also closely related to {\em adversarial
  training}, a subject that has drawn significant attention in machine
learning research, and particularly in deep learning.  It has been
observed that a well-trained deep neural network can often be
``fooled'' by small perturbations. That is, if a data point (an
image for example) is perturbed by a carefully designed adversarial
noise, potentially imperceivable to the human being (i.e., with a
small $\ell_{\infty}$ norm in the case of images) , the classification result can be
completely changed. To mitigate this issue, adversarial training aims
to train a neural network to be robust to such perturbations, which can be
formulated as follows:
$$\underset{\mathbf{w}}{\mbox{minimize}} \underset{(\delta_1, \delta_2,\cdots, \delta_n)}{\mbox{maximize}}  \frac{1}{n} \sum_{i=1}^n l(y_i, \phi(\mathbf{w}, x_i+\delta_i)),
$$
where $\mathbf{w}$ is the weight vector of the neural network,
$(x_i, y_i)$ are the $i$-th data point, $\phi(\mathbf{w}, x)$ is the
prediction of the neural network on an input $x$ and $l(\cdot, \cdot)$
is the loss function.  Thus, adversarial training essentially is an
attempt to solve an (often non-convex and hard) robust optimization
problem. Due to the lack of convexity, exactly solving the above
formulation is intractable, and numerous heuristics has been proposed
to address the computational issue.

Recently, several works explored a general framework to solve robust
optimization and adversarial training via online learning. The main
idea is the following: instead of solving the robust problem
one-shot by exploiting convex duality, a sequence of scenarios
$\mbu_i\in \mathcal{U}$ is generated using online learning, and
optimal decisions (or approximate solutions for very complicated
functions, in the adversarial training case) for each scenarios are
then averaged as the final output. Using theorems from online learning,
it is shown that the final output is close to optimal (or achieves
same approximation ratio, for adversarial training) for \emph{all}
scenarios in $\mathcal{U}$.

The framework outlined above can be very appealing
computationally. However, a close examination of the argument shows
that because of the ambiguity on the concept ``adversary'' in online
learning, some of the claimed results are invalid (see
Section~\ref{sub.counter} for a concrete counter-example). Thus, we
feel that it is necessary to characterize the concept ``adversary'' in
a more rigorous way to avoid future confusion. This also enables us
to develop new methods for solving robust optimization using online
learning.

\paragraph{Contribution.}
We now summarize our contribution:

\emph{Clarification of concepts.} The main contribution in this paper
is to distinguish two types of adversaries in an online learning
setup, which we termed ``anticipatory'' and ``non-anticipatory''. In a
nutshell, anticipatory adversaries are those that have access to
inherent randomness of the online learning algorithm {\em at the
  current step}. The example that motivates this concept is when the
adversary is chosen by solving an optimization problem whose
parameters are the output of the online learning
algorithm. Non-anticipatory adversaries do not have access to the
inherent randomness of the current step (however can still be
adaptive). Based on that, we further distinguish two types of online
learning algorithms, which are both known in the literature to achieve
diminishing regret with respect to adversarial input. Depending on
whether such adversarial input can be anticipatory or not, we call the
two classes strong learners and weak learners.

\emph{One-sided minimax problems via imaginary play.} We then apply
our model to the special case of solving robust optimization
problems. We show how to solve problems of the form
\eqref{eq:robOptProblemIntro} by means of online learning with two
weak learners. Slightly simplified, two learners play against each
other solving Problem~\eqref{eq:robOptProblemIntro}. However, in
contrast to general saddle point problems, only one of the players can
extract a feasible solution, as we considerably weaken convexity
requirements both for the domains as well as the functions $f$ (or
even drop them altogether). For this we present a general primal-dual
setup that is then later instantiated for specific applications by
means of plugging-in the desired learners.

\emph{Biased play with asymmetric learners.} We then show how to
further gain flexibility by allowing asymmetry between the
learners. Here one learner is weakened (in terms of requirements) to an
optimization oracle, and consequently the other player is strengthened
to allow anticipatory inputs.

\emph{Applications.} Finally, we demonstrate how our approach can be
used to solved a large variety of robust optimization problems. For
example, we show how to solve robust optimization problems with
complicated feasible sets involves integer programming. Another
example is robust MDPs with {\em non-rectangular} uncertainty sets,
where {\em only the reward parameters} are subject to uncertainty. Due
to space constraints, we defer the applications to the appendix.

\section{Preliminaries and motivation}
\label{sec:prelim}

In the following let \(\Delta(n)\) denote the unit simplex in
\(\R^n\). We will use the shorthand \([m]\) to denote the set
\(\{1, \dots, m\}\). For the sake of exhibition we will differentiate
between \emph{\(\mbox{Maximize}\)} and \emph{\(max\)}, where the
former indicates that we maximize a function via an algorithm, whereas
the latter is simply indicating the maximum function without any
algorithmic reference. Moreover, we will denote both the decision
vector \(\mbx\) as well as the uncertainty vector \(\mbu\) in bold
letters. All other notations are standard if not defined otherwise.

\subsection{Conventional wisdom}

In this work we consider games between two player and we will use
robust optimization or adversarial training of the form \eqref{eq:robOptProblemIntro}
as our running example. For the sake of continuity, we adapt the notation
of \cite{ben2015oracle}, however we stress that we later
will selectively relax some of the assumptions. Consider:
\begin{align*}
  \text{Minimize}_{\mbx} \ & \left\{f_0(\mbx)\,\,\right|\left.
  f_i(\mbx,\mbu_i) \leq 0 ,\,\, i \in [m];\quad
 \mbx \in \mathcal X\right\},
\end{align*}
where \(\mathcal X \subseteq \R^n\)
is the domain of feasible decisions and the \(f_i\)
with \(i \in [m]\)
are convex functions in \(\mbx\)
that are parametrized via vectors \(\mbu_i \in \R^d\)
for some \(d \in \N\) for \(i \in [m]\). The problem above is
parametrized by a fixed choice of vectors \(\mbu_i\) with \(i \in
[m]\) and we will refer to a problem in this form as the \emph{nominal
problem} (with parameters \(\{\mbu_i\}_i\)), which corresponds to the
outer minimization problem given a realization of the adversary's
choice \(\{\mbu_i\}_i\). 

In robust optimization we \emph{robustify} the nominal problem against
the worst-case choice of  \(\{\mbu_i\}_i\)   via the
formulation:
\begin{align*}
  \underset{\mbx}{\text{Minimize}} \ & \left\{f_0(\mbx)\,\,\right|\left.
  f_i(\mbx,\mbu_i) \leq 0 ,\,\forall \mbu_i \in \mathcal U_i,\,  i \in [m];
 \mbx \in \mathcal X\right\},
\end{align*}
where the \emph{uncertainty set} \(\mathcal U_i\)   is the set of possible choices of parameter $\mbu_i$. Thus, denote $\mathcal{U}= \prod_{i \in
  [m]} \mathcal U_i$ and we have \((\mbu_1, \dots, \mbu_m) \in
\mathcal U\).
 
Recently there has been a line of work proposing methods to solve
adversarial training and robust optimization via online learning
methods or in an equivalent fashion (see e.g.,
\cite{ben2015oracle,madry2017towards,chen2017robust,sinha2017certifiable}). In
all cases the underlying meta-algorithm works as follows: the
\(\mbu\)-player takes $\{\mbx\}$ as input and generate a sequence of
$\{\mbu\}$ according to an online learning algorithm which achieves
diminishing regret against {\em adversarial input}. The
\(\mbx\)-player on the other hand, computes $\mbx_t$ by minimizing the
loss function with $\mbu_t$ as input; the interpretation of the roles
of the players depends on the considered problem. 

In particular, in \cite{ben2015oracle}, the authors proposed two
methods along this line, using {\em online convex optimization} and
{\em Follow the Perturbed Leader (FPL)} as the online learning
algorithm, respectively. In \cite{chen2017robust}, the authors
consider the case where $\mathcal{U}$ is a finite set, and proposed to
use {\em exponential weighting} as the online learning algorithm (in
the infinite case, they use online gradient descent), and then solve
$\mbx_t$ by minimizing the loss function for the distributional
problem. While superficially similar, these two approaches are
markedly different as we will see.
 
\subsection{A motivating counter example}\label{sub.counter}
Unfortunately, the outlined approach above can be easily flawed, for reasons that will be
made clear later. We start with the following counter example, and apply
the second method (i.e., FPL based one) proposed in
\cite{ben2015oracle}.

Consider the following robust feasibility problem: Let
$\mathcal{U}=conv\{(1,1),(-1, -1), (2,1)\}$, and $\mathcal{X} =conv\{(0,1), (-2, -1), (0,0)\}$
does there exist $\mathbf{x} \in\mathcal{X} $
such that
$$\min_{\mbu\in \mathcal{U}} \mbu^\top \mbx \geq 1?
$$
The answer is clearly negative, as for any $\mbx$, at least one of
$(1,1)^\top \mbx$ and $(-1,-1)^\top \mbx$ is less than or equal to
\(0\). However, Theorem~2 of \cite{ben2015oracle} asserts that if we
update $\mbu_t$ according to \emph{Follow the Perturbed Leader},
compute $\mbx_t$ via
$$\mbx_t:=\argmax_{\mbx\in \mathcal{X}}  \mbu_t^\top \mbx,
$$
and if $\mbu_t^\top \mbx_t \geq 1$ for $t=1,2,\cdots,T$ (which is true
here as the objectives are linear and thus \(\mbu_t\) is a vertex of
\(\mathcal U\)),  then
$\overline\mbx=\sum_{t=1}^T \mbx_t \in \mathcal{X}$ is $\epsilon$-robust feasible,
i.e.,
$$\min_{u\in \mathcal{U}} \mbu^\top \overline{\mbx} \geq 1-\epsilon,
$$
where $\epsilon=O(1/\sqrt{T})$; this is clearly not true and we obtain
the desired contradiction.

Furthermore, we also show that $\overline\mbx$ does not converge to
the minimax solution: Since $\mbu_t$ is obtained by FPL and the
objective is a linear function as before we have that $\mbu_t$ is a
vertex of $\mathcal{U}$. Further notice that for
$\mbu_t \in \{(1,1),(-1, -1), (2,1)\}$ we have
$\mbx_t=\argmax_{\mbx\in \mathcal{X}} \mbu_t^\top \mbx \in \{(0,1),
(-2, -1)\}$. Therefore $\overline\mbx$ is on the line segment
between $(0,1)$ and $(-2, -1)$. One can easily check that for any
$\mbx$ on this line segment, we have
$$\min_{\mbu\in \mathcal{U}}  \mbu^\top \mbx \leq -1/5.
$$
On the other hand, clearly for $\mbx^*=(0,0)$, we have
$$\min_{\mbu\in \mathcal{U}}  \mbu^\top \mbx^* =0.
$$
Thus, $\overline\mbx$ does not converge to the minimax solution.

Interestingly, the first method proposed in \cite{ben2015oracle} turns
out to be a valid method for this example. Also, the approach in
\cite{chen2017robust} does not suffer from this weakness as the
Bayesian optimization oracle is applied to the \emph{output
  distribution}, rather than a sampled solution (which would be
problematic). Indeed, this is no coincidence. To clearly explain the
different behaviors for various methods proposed in literature is the
main motivation of this work.

\section{Anticipatory and non-anticipatory adversaries}
In this section we provide definitions of the main concepts that we
are introducing in this paper. There are two types of ``adversaries'',
namely \emph{anticipatory adversaries} and \emph{non-anticipatory
  adversaries} for online learning setups that need to be clearly
distinguished. In a nutshell, slightly simplifying, the distinction is whether the
adversary's decision (who is potentially computationally unbounded) is
independent of the private randomness \(\xi_t\) of the current round \(t\); if
not the adversary might be able to anticipate the player's decision \(\mbx_t\). 

\begin{definition}
  An {\em online learning setup} is as follows: for $t=1,2,\cdots$,
  the algorithm is given access to an external signal $\mby_t$, and an
  exogenous random variable $\xi_t$ which are independent with
  everything else, and furthermore $\xi_i$ and $\xi_j$ are independent
  for $i\not=j$. An {\em online learning algorithm} is a mapping:
$$  \mbx_t := \mathcal{L}_{x}(\mbx_1,\mby_1,\mbx_2, \mby_2,\cdots, \mbx_{t-1},
  \mby_{t-1},\xi_t).
$$
We say an online learning algorithm is {\em deterministic} if it is a mapping 
$$  \mbx_t := \mathcal{L}_{x}(\mbx_1,\mby_1,\mbx_2, \mby_2,\cdots, \mbx_{t-1},
  \mby_{t-1}).
$$
\end{definition}
In other words, an online learning algorithm picks an action at time
$t$ depending on past actions $\mbx_1,\cdots,\mbx_{t-1}$, past signals
$\mby_1,\cdots, \mby_{t-1}$, and an exogenous randomness $\xi_t$. And
a deterministic online learning algorithm is independent of the
exogenous randomness.

Existing analyses for online learning algorithms focus on two cases:
either the external signal $\mby_t$ is generated stochastically (and
typically in an iid fashion), or it is generated
``adversarially''. However, as we will show later, the term
``adversarially'' is loosely defined and causes significant
confusion. Instead, we now define two types of adversary signals.

\begin{definition}
Recall an online learning algorithm is given access
  to exogenous random variables $\xi_1, \xi_2,\cdots, \xi_t,\cdots,$
  and its output $\mbx_t$ may depend on $\xi_1,\cdots \xi_t$, but is
  independent to $\xi_{t+1}, \xi_{t+2}, \cdots$. A sequence
  $\mby_1,\mby_2,\cdots, \mby_t, \cdots$ is called {\em non-oblivious
    non-anticipatory} (NONA) with respect to $\{\mbx\}$ if $\mby_t$
  may depend on $\xi_1, \cdots, \xi_{t-1}$, but is independent of 
  $\xi_t, \xi_{t+1},\cdots$, for all $t$. A sequence
  $\mby_1,\mby_2,\cdots, \mby_t, \cdots$ is called {\em anticipatory}
  wrt $\{\mbx\}$ if $\mby_t$ may depend on $\xi_1, \cdots, \xi_{t}$,
  but is independent of $\xi_{t+1}, \xi_{t+2},\cdots$, for all $t$.
\end{definition}

We now provide some examples to illustrate the concept.

\begin{enumerate}
\item If $\{\mby\}$ is a sequence chosen arbitrarily, independent of
  $\{\xi\}$, then it is a  NONA sequence.
\item If $\mby_t$ is chosen according to
$$\mby_t= \mathcal{F}_t(\mby_1, \mbx_1,\cdots, \mby_{t-1}, \mbx_{t-1}),
$$
for some function $\mathcal{F}_t(\cdot)$, then $\{\mby\}$  is a NONA sequence. 
\item If $\mby_t$ is chosen according to
$$\mby_t= \mathcal{F}_t(\mbx_1, \mby_1,\cdots, \mbx_{t-1}, \mby_{t-1}, \mbx_t),
$$
for some function $\mathcal{F}_t(\cdot)$, then $\{\mby\}$  is an
anticipatory sequence. This is because $\mbx_t$ is (potentially) dependent to $\xi_t$, and so is $\mby_t$. As a special case, suppose $$\mby_t= \arg\max_{\mby\in \mathcal{Y}} f(\mbx_t, \mby),$$ then  $\{\mby\}$  is an anticipatory sequence.
\item If $\{\mbx\}$ is the output of a deterministic online learning algorithm, and $\mby_t$ is chosen according to
$$\mby_t= \mathcal{F}_t(\mbx_1, \mby_1,\cdots, \mbx_{t-1}, \mby_{t-1}, \mbx_t),
$$
for some function $\mathcal{F}_t(\cdot)$, then $\{\mby\}$  is a NONA sequence. This is because $\mbx_t$ is independent of $\xi_t$ since the online learning algorithm is deterministic. In this case, the following sequence is NONA as well:  
$$\mby_t= \arg\max_{\mby\in \mathcal{Y}} f(\mbx_t, \mby).$$ 
\end{enumerate}

The standard target of online learning algorithms is to achieve diminishing regret {\em vis a vis} a sequence of external signals. Thus, depending on whether the external signal is anticipatory or not, we define two class of learning algorithms.
\begin{definition}
Suppose $\mathcal{X}$ is the feasible set of actions, and for $\mbx_t$ the action chosen at time $t$, it is evaluated by $f(\mbx_t, \mby_t)$, with a smaller value being more desirable. \begin{enumerate}
\item We call an online learning algorithm $\mathcal{L}_{x}$ for
  $\mbx$ a {\em weak learning algorithm} with regret $R(\cdot, \cdot)$,  if for any NONA sequence $\{\mby_t\}_{t=1}^\infty$,
  the following holds with a probability $1-\delta$ (over the exogenous randomness of the algorithm), where $\{\mbx_t\}$ are the output of $\mathcal{L}_{x}$:
$$ \sum_{t=1}^T f(\mbx_t, \mby_t) -\min_{\mbx\in \mathcal{X}}    \sum_{t=1}^T f(\mbx, \mby_t) \leq R(T, \delta).$$
\item We call an online learning algorithm $\mathcal{L}_{x}$ for
  $\mbx$ a {\em strong learning algorithm}  with regret $R(\cdot, \cdot)$,  if for any anticipatory sequence $\{\mby_t\}_{t=1}^\infty$,
  the following holds with a probability $1-\delta$ (over the exogenous randomness of the algorithm), where $\{\mbx_t\}$ are the output of $\mathcal{L}_{x}$:
$$ \sum_{t=1}^T f(\mbx_t, \mby_t) -\min_{\mbx\in \mathcal{X}}    \sum_{t=1}^T f(\mbx, \mby_t) \leq R(T, \delta).$$
\end{enumerate}
\end{definition}
To illustrate the subtle difference, let us consider the classical
exponential weighting algorithms, where a set of experts are given,
and the goal of the online learning algorithm is to predict unseen
$\mby_t$ according to the prediction of the experts, such that the
algorithm does as good as the best among the experts. The algorithm
maintains a weight vector over all experts depending on their
performance in previous rounds, and outputs the weighted average of the
prediction from the experts. Notice that this is a deterministic
learning algorithm, i.e., the learning algorithm is independent of the
exogenous randomness $\xi_t$. Thus, whether $\mby_t$ has access to
$\xi_t$ or not has no influence on the performance of the
algorithm. As such, the exponential weighting algorithm (in this
specific form) is a strong learning algorithm.

On the other hand, there is a variant of exponential weighting
algorithm where instead of outputting the weighted average of the
prediction, the algorithm outputs the prediction of one expert, based
on a probability proportional to the weight. The common proof for this
technique is that through {\em randomization}, the {\em expected loss}
is upper bounded by the loss of the weighted average, and hence the
regret of this variant is upper bounded by the regret of the vanilla
version. Clearly, this argument implicitly uses an assumption that the
realized $\mby_t$ is independent of this randomness, and breaks down
otherwise. Hence, this form of exponential weighting algorithm is a
weak learning algorithm.

As a rule of thumb, it appears that for online learning algorithms
that ``work in the adversarial case'', all deterministic algorithms
(e.g., \emph{Online Gradient Descent}) are strong learning algorithms;
whereas all algorithms which inherently require randomness (e.g.,
\emph{Follow the Perturbed Leader}) are weak learning algorithms.

We remark that in the online learning literature, there is the concept
of {\em adaptive adversaries}, which is a relevant concept that can
better highlight the observation made in the paper. An adaptive
adversary in online learning is allowed to adapt its choice at time
$t$ to the output of the online learning algorithm until time $t-1$,
but is independent of the exogenous randomness at time $t$. Thus, it
generates a non-anticipatory sequence. An online learning algorithm that
achieves a diminishing regret against such an adversary is thus a weak
learner, and not necessarily a strong learner.

\section{Warmup: Minimax problem via Online Learning}
\label{sec:robOnl}

We will first consider the case where we have one function \(f\).
In principle this function \(f\) can be highly complex and could be, e.g., the
maximum of a family of functions \(f_i\), however here the reader
should be thinking of \(f\) as a relatively simple function. This will
be made more precise below, where we specify the learnability
requirements for \(f\), which ultimately limits the complexity of the
considered functions. In Section~\ref{sec.multi} we will then consider
the more general case of a family of (simple) functions \(\{f_i\}_i\), which arises naturally in robust optimization. 

Thus, we are solving the following optimization problem
\begin{equation}
  \label{eq:robOptProblem}
\underset{\mbx\in \mathcal{X}}{\mbox{Minimize}}\,\, \underset{\mbu\in \mathcal{U}}{\mbox{Maximize}} \,\, f(\mbx, \mbu).
\end{equation}
\begin{assumption}[Problem structure] \label{ass:setup}We will make the following
  assumptions regarding the domains and function \(f\) if not stated
  otherwise. Note that these assumptions only affect the \(\mbx\)-player. 
(1) For any $\mbu\in \mathcal{U}$, the function $f(\cdot, \mbu)$ is convex.
(2) The set $\mathcal{X}$ is convex.
\end{assumption}

\subsection{Parallel Weak Learners}
Our first framework solves Problem~\eqref{eq:robOptProblem} via weak
online learning algorithms and \emph{imaginary play} (i.e., both
players \emph{can} have full knowledge about the function \(f\)) to update $\mbx$ and
$\mbu$ in parallel. In the following we will always assume that the
\(\mbx\)-sequence is a sequence of elements in \(\mathcal X\) and the
\(\mbu\)-sequence is a sequence of elements in \(\mathcal U\).
 
\begin{assumption}[Weak Learnability] \label{ass:learn}\ 
\begin{enumerate} 
\item There exists a weak online learning algorithm $\mathcal{L}_{x}$
  for $\mbx$ with regret $R_x(\cdot, \cdot)$. That is, for any NONA
  sequence $\{\mbu'_t\}_{t=1}^\infty$, the following holds with a
  probability $1-\delta$, where $\{\mbx_t\}$ is the output of
  $\mathcal{L}_{x}$:
$$ \sum_{t=1}^T f(\mbx_t, \mbu'_t) -\min_{\mbx\in \mathcal{X}}    \sum_{t=1}^T f(\mbx, \mbu'_t) \leq R_x(T, \delta).
$$
\item There exists a weak online learning algorithm
  $\mathcal{L}_{u}$ for $\mbu$ with regret $R_u(\cdot, \cdot)$. That is, for any NONA sequence
  $\{\mbx'_t\}_{t=1}^\infty$, the following holds with a probability
  $1-\delta$, where $\{\mbu_t\}$ is the output of $\mathcal{L}_{u}$:
$$ \max_{\mbu\in \mathcal{U}}    \sum_{t=1}^T f(\mbx'_t, \mbu) - \sum_{t=1}^T f(\mbx'_t, \mbu_t)  \leq R_u(T, \delta).
$$
\end{enumerate}
\end{assumption}

As mentioned above, the learnability assumption constrains the
complexity of the function \(f\). For example if \(f(\mbx,\mbu) = \max_{i \in
  [\ell]} f_i(\mbx,\mbu)\) for some family of functions \(\{f_i\}_i\)
that are convex in \(\mbx\) and concave in \(\mbu\),
then \(f\) might not be concave in \(\mbu\) and the resulting Problem~\eqref{eq:robOptProblem} might be
intractable and the learnability assumption for \(\mbu\) might be
violated. 

We are now ready to present the meta-algorithm, which is given in
Algorithm~\ref{alg:rool}. We would like to remark that we refer to
these minimax problems as \emph{robust optimization} as we only require to be able to
produce an explicit (stationary) solution \(\bar\mbx\) for the
\(\mbx\)-player. 

\begin{algorithm}
  \caption{Robust Optimization via Online Learning (ROOL)}
  \label{alg:rool}
  \begin{algorithmic}[1]
    \REQUIRE function \(f\), learners \(\mathcal L_u,\mathcal L_x\)
    satisfying Assumptions~\ref{ass:setup} and~\ref{ass:learn}. 
    \ENSURE point $\overline{\mbx}$
    \FOR{\(t = 1, \dots, T\)}
      \STATE \(\mbx_t \leftarrow \mathcal L_x (\mbx_1,\mbu_1,\cdots,\mbx_{t-1}, \mbu_{t-1})\)
      \STATE \(\mbu_t \leftarrow \mathcal L_u
      (\mbx_1,\mbu_1,\cdots,\mbx_{t-1}, \mbu_{t-1})\)
    \ENDFOR
      \STATE \(\overline{\mbx} \leftarrow \frac{1}{T}\sum_{t=1}^T \mbx_t\)

  \end{algorithmic}
\end{algorithm}

\begin{remark}[Dependence on \(f\)]
  Note that in Algorithm~\ref{alg:rool} the function \(f\)
  does not explicitly occur. In fact, \(f\) is captured in
  Assumptions~\ref{ass:setup} and~\ref{ass:learn} and in particular,
  we make \emph{a priori} no distinction what type of feedback (full
  information, semi-bandit, bandit, etc.) the learner observes. In
  principle, since we are assuming imaginary play both learners can
  have full knowledge about the function \(f\)
  while in actual applications the learners will only require limited
  information. For example, a learner might only require bandit
  feedback to ensure the learnability assumption with a given regret
  bound, while another might require full information depending on the
  setup. In the formulation above, Algorithm~\ref{alg:rool} is
  completely agnostic to this; also in all other algorithms, the situation
  will be analogous.
\end{remark}
 
Observe that due to convexity of $\mathcal{X}$, we have
  $\overline{\mbx} \in \mathcal{X}$. The theorem below shows that
  $\overline{\mbx}$ converges to $\mbx^*$, which achieves the best
  worst-case performance. Note that the guarantee is asymmetric as a
  saddle point may not exist as no assumptions on $\mathcal{U}$ or
  $f(\mathbf{x},\cdot)$ are made.  If indeed $f$ is concave with
  respect to the second argument and $\mathcal{U}$ is convex, then
  the theorem reduces to the well known result of solving a zero-sum
  game via online learning in parallel~\cite{freund1999adaptive}. The proof is similar
  and included in the supplementary material for completeness.

\begin{theorem} \label{thm:main-result}
With probability $1-2\delta$, Algorithm~\ref{alg:rool} returns a
point $\overline{\mbx}=\frac{1}{T}\sum_{t=1}^T \mbx_t$ satisfying
\begin{align}
  \label{eq:2}
\max_{\mbu'\in \mathcal{U}} f(\overline{\mbx}, \mbu')
-\min_{\mbx^*\in \mathcal{X}} \max_{\mbu\in \mathcal{U}} f(\mbx^*,
\mbu) 
\leq & \frac{ R_x(T, \delta)+R_u(T, \delta)}{T}.  
\end{align}
\end{theorem}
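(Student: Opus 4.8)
The plan is to run the two weak-learner regret bounds in parallel, add them so the common cumulative term cancels, and then convert the resulting cumulative-loss inequality into the desired single-shot guarantee on $\overline{\mbx}$ using only convexity in the first argument, together with the trivial fact that every realized $\mbu_t$ lies in $\mathcal U$.

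The first and conceptually central step is to verify that the sequences fed to each learner are NONA, so that Assumption~\ref{ass:learn} actually applies. Write $\xi^x_t$ and $\xi^u_t$ for the private randomness of $\mathcal L_x$ and $\mathcal L_u$ at round $t$; these are independent across learners and across rounds. In Algorithm~\ref{alg:rool} both updates use only the history through round $t-1$, so $\mbu_t = \mathcal L_u(\mbx_1,\mbu_1,\dots,\mbx_{t-1},\mbu_{t-1})$ is a function of $\xi^u_1,\dots,\xi^u_t$ and $\xi^x_1,\dots,\xi^x_{t-1}$ only --- crucially \emph{not} of $\xi^x_t$. Hence $\{\mbu_t\}$ is NONA with respect to the randomness of $\mathcal L_x$. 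Symmetrically, $\mbx_t$ depends on $\xi^x_1,\dots,\xi^x_t$ and $\xi^u_1,\dots,\xi^u_{t-1}$ but not on $\xi^u_t$, so $\{\mbx_t\}$ is NONA with respect to the randomness of $\mathcal L_u$. This is exactly the payoff of the parallel, lagged-information update: each opponent sequence is non-anticipatory, so weak learners suffice and neither a strong-learner hypothesis nor any concavity of $f$ in $\mbu$ is needed.

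Given this, I would invoke the two bounds. From the $\mbx$-learner applied to the NONA sequence $\{\mbu_t\}$, with probability $1-\delta$ we get $\sum_{t} f(\mbx_t,\mbu_t) - \min_{\mbx\in\mathcal X}\sum_t f(\mbx,\mbu_t) \le R_x(T,\delta)$; from the $\mbu$-learner applied to the NONA sequence $\{\mbx_t\}$, with probability $1-\delta$ we get $\max_{\mbu\in\mathcal U}\sum_t f(\mbx_t,\mbu) - \sum_t f(\mbx_t,\mbu_t) \le R_u(T,\delta)$. A union bound gives both simultaneously with probability $1-2\delta$; adding them the $\sum_t f(\mbx_t,\mbu_t)$ terms cancel, leaving $\max_{\mbu\in\mathcal U}\sum_t f(\mbx_t,\mbu) - \min_{\mbx\in\mathcal X}\sum_t f(\mbx,\mbu_t) \le R_x(T,\delta)+R_u(T,\delta)$.

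It then remains to divide by $T$ and sandwich the two sides by the quantities in \eqref{eq:2}. For the maximization term, Assumption~\ref{ass:setup} makes $f(\cdot,\mbu)$ convex, so Jensen's inequality gives $f(\overline{\mbx},\mbu)\le \tfrac1T\sum_t f(\mbx_t,\mbu)$ for every fixed $\mbu$; taking $\max_{\mbu\in\mathcal U}$ yields $\max_{\mbu}f(\overline{\mbx},\mbu)\le \tfrac1T\max_{\mbu}\sum_t f(\mbx_t,\mbu)$. For the minimization term, since each $\mbu_t\in\mathcal U$ we have $f(\mbx,\mbu_t)\le \max_{\mbu\in\mathcal U}f(\mbx,\mbu)$ pointwise, hence $\tfrac1T\min_{\mbx}\sum_t f(\mbx,\mbu_t)\le \min_{\mbx}\max_{\mbu}f(\mbx,\mbu)$. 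Chaining these two bounds through the summed regret inequality produces exactly \eqref{eq:2}. I expect no real obstacle in this last computation; the only thing to watch is the direction of each inequality, since the $\min$--$\max$ step is where a sign slip is easy. The genuinely load-bearing step is the NONA verification above: it is what licenses using weak learners with the realized, adaptively generated opponent sequences, and it is precisely the distinction the paper is built around.
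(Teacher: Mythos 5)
Your proposal is correct and follows essentially the same route as the paper's own proof: the same union bound over the two weak-learner regret inequalities, cancellation of the common term $\sum_t f(\mbx_t,\mbu_t)$, Jensen's inequality via convexity of $f(\cdot,\mbu)$ for the max side, and the pointwise bound $f(\mbx,\mbu_t)\le\max_{\mbu\in\mathcal U}f(\mbx,\mbu)$ for the min side. Your explicit verification that each learner's opponent sequence is NONA (by tracking which private randomness each iterate can depend on) is in fact spelled out more carefully than in the paper, which simply asserts it, but it is the same argument.
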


We remark that the {\em two weak learners} framework superficially
resembles the online learning based method to solve zero-sum
games~\cite{freund1999adaptive} where both players run an online
learning algorithm. Yet, our setup and results depart from those
in~\cite{freund1999adaptive}, as we drop any requirement for the
uncertainty \(\mbu\) and in particular \(f\) is not necessarily
concave with respect to \(\mbu\). In short, the minimax problem we
solve is not a saddle-point problem, and as such only the $\mbx$
player is able to extract a near-optimal solution. Notice that the
lack of concavity with respect to \(\mbu\) arises naturally in robust
optimization formulations and adversarial training (see
Section~\ref{sec.multi} for details).

\subsection{Biased Play with a  Strong Learner}
\label{sec:bias}
In our second framework,  the
structure of the problem is ``biased'' toward one player. Here one of
the learners is particularly strong, allowing the other to break NONA-ness. 
We will consider the case where the \(\mbu\) learner is
particularly strong. The case for \(\mbx\) is
symmetric. 

\begin{assumption}[Strong Learnability of $\mbu$] \label{ass:slearn-x}\ 
\begin{enumerate} 
\item There exists a strong online learning algorithm $\mathcal{L}_{u}$ for
  $\mbu$ with regret $R_u(\cdot, \cdot)$. That is,  for any anticipatory sequence $\{\mbx'_t\}_{t=1}^\infty$,
  the following holds with a probability $1-\delta$, where $\{\mbu_t\}$ is the output of $\mathcal{L}_{x}$:
$$ \max_{\mbu}\sum_{t=1}^T  f(\mbx'_t, \mbu) - \sum_{t=1}^T f(\mbx'_t, \mbu_t)   \leq R_u(T, \delta).
$$
\item Given $\mbu$, there is an optimization oracle that computes
$\mbx^*=\arg\min_{\mbx\in \mathcal{X}} f(\mbx, \mbu)$.
\end{enumerate}
\end{assumption}

\begin{algorithm}
  \caption{Robust Optimization via Strong Primal Learner  }
  \label{alg:spl}
  \begin{algorithmic}[1]
    \REQUIRE function \(f\), learners \(\mathcal L_x\)
    satisfying Assumptions~\ref{ass:setup} and~\ref{ass:slearn-x}. 
    \ENSURE point $\overline{\mbx}$
    \FOR{\(t = 1, \dots, T\)}
      \STATE \(\mbu_t \leftarrow \mathcal L_u (\mbx_1,\mbu_1,\cdots,\mbx_{t-1}, \mbu_{t-1})\)
      \STATE \(\mbx_t \leftarrow \arg\min_{\mbx\in \mathcal{X}} f(\mbx, \mbu_t) \)
    \ENDFOR
      \STATE \(\overline{\mbx} \leftarrow \frac{1}{T}\sum_{t=1}^T \mbx_t\)

  \end{algorithmic}
\end{algorithm}

 We now a theorem similar to Theorem~\ref{thm:main-result},
 for the case where \(\mbx\) is a strong learner; the proof is to be
 found in Supplementary Material~\ref{sec:missing-proofs}.
 
\begin{theorem} \label{thm:main-result-strongX} 
With probability $1-\delta$, Algorithm~\ref{alg:spl} returns a
point $\overline{\mbx}=\frac{1}{T}\sum_{t=1}^T \mbx_t$ satisfying
$$\max_{\mbu'\in \mathcal{U}} f(\overline{\mbx}, \mbu') -\min_{\mbx^*\in \mathcal{X}} \max_{\mbu\in \mathcal{U}} f(\mbx^*, \mbu) \leq \frac{ R_u(T, \delta)}{T}.
$$
\end{theorem}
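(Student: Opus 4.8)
The plan is to chain together three elementary facts: Jensen's inequality for the convex maps $f(\cdot,\mbu')$, the regret guarantee of the strong learner $\mathcal L_u$, and the optimality of the per-round oracle producing $\mbx_t$. The single conceptual point that must be settled first is that the sequence $\{\mbx_t\}$ fed into the $\mbu$-learner is \emph{anticipatory}, not merely NONA. In Algorithm~\ref{alg:spl}, $\mbu_t$ is emitted by $\mathcal L_u$ using the exogenous randomness $\xi_t$ of round $t$, and only \emph{afterwards} is $\mbx_t=\arg\min_{\mbx\in\mathcal X} f(\mbx,\mbu_t)$ computed; hence $\mbx_t$ depends on $\xi_t$ through $\mbu_t$, which is exactly the anticipatory case (cf.\ Example~(iii)). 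This is precisely why Assumption~\ref{ass:slearn-x} must supply a \emph{strong} learner for $\mbu$: a weak learner offers no guarantee against such a sequence, and recognizing this is the real content of the theorem.

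Granting that, I would first invoke the strong-learnability clause of Assumption~\ref{ass:slearn-x} applied to the anticipatory sequence $\{\mbx_t\}$ that the algorithm actually generates: with probability $1-\delta$,
\begin{equation*}
\max_{\mbu'\in\mathcal U}\sum_{t=1}^T f(\mbx_t,\mbu') - \sum_{t=1}^T f(\mbx_t,\mbu_t) \le R_u(T,\delta).
\end{equation*}
Only the $\mbu$-learner is randomized here (the $\mbx$-step is a deterministic oracle), so a single failure event of probability $\delta$ appears; this accounts for the $1-\delta$ confidence, in contrast to the $1-2\delta$ of Theorem~\ref{thm:main-result}, where two randomized learners each contributed a $\delta$.

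Next I would bound the left-hand side of the claim from above. Since $\mathcal X$ is convex (Assumption~\ref{ass:setup}), $\overline\mbx\in\mathcal X$, so the max is well defined; by convexity of $f(\cdot,\mbu')$ and Jensen's inequality, $f(\overline\mbx,\mbu')\le \frac1T\sum_{t=1}^T f(\mbx_t,\mbu')$ for every $\mbu'$, hence $\max_{\mbu'} f(\overline\mbx,\mbu') \le \frac1T\max_{\mbu'}\sum_{t=1}^T f(\mbx_t,\mbu')$. Applying the regret bound above replaces the right-hand side by $\frac1T\sum_{t=1}^T f(\mbx_t,\mbu_t) + \frac{R_u(T,\delta)}{T}$. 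Now I use the oracle step: since $\mbx_t$ minimizes $f(\cdot,\mbu_t)$ over $\mathcal X$, for every fixed $\mbx^*\in\mathcal X$ we have termwise $f(\mbx_t,\mbu_t)\le f(\mbx^*,\mbu_t)\le \max_{\mbu} f(\mbx^*,\mbu)$, whence $\frac1T\sum_{t=1}^T f(\mbx_t,\mbu_t) \le \max_{\mbu} f(\mbx^*,\mbu)$. Combining gives $\max_{\mbu'} f(\overline\mbx,\mbu') \le \max_{\mbu} f(\mbx^*,\mbu) + \frac{R_u(T,\delta)}{T}$ for every $\mbx^*$, and minimizing the right-hand side over $\mbx^*\in\mathcal X$ yields the stated inequality.

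I do not anticipate a genuine technical obstacle once the anticipatory nature of $\{\mbx_t\}$ is identified and the correct (strong) regret bound is invoked; the remainder is a short monotonicity argument that uses neither concavity of $f$ in $\mbu$ nor any structure on $\mathcal U$. The only step that warrants careful justification, and the one the whole paper is built around, is why the strong learner is \emph{indispensable} here — equivalently, why substituting a weak learner (as in the FPL-based scheme of the counterexample in Section~\ref{sub.counter}) would break the first displayed inequality.
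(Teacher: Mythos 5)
Your proposal is correct and takes essentially the same route as the paper's proof: invoke the strong-learner regret bound on the (anticipatory) sequence $\{\mbx_t\}$, use the oracle optimality $f(\mbx_t,\mbu_t)=\min_{\mbx\in\mathcal X}f(\mbx,\mbu_t)$, and finish with convexity of $f(\cdot,\mbu)$ exactly as in Theorem~\ref{thm:main-result}. The only cosmetic difference is bookkeeping — you bound $\sum_{t}f(\mbx_t,\mbu_t)$ termwise against a fixed $\mbx^*$ and minimize at the end, while the paper routes through the intermediate quantity $\min_{\mbx\in\mathcal X}\sum_t f(\mbx,\mbu_t)$ — and your preliminary observation that $\{\mbx_t\}$ is anticipatory (so a weak learner would not suffice) matches the remark the paper makes right after the theorem.
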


Some remarks are in order.
\begin{enumerate}
\item Note that since $\mbu_t$ is updated via solving an optimization
  problem determined by $\mbx_t$, the sequence $\{\mbu_t\}$ is an
  anticipatory sequence. As such, it is crucial that a strong learning
  algorithm is required to update $\mbx_t$. This is explains the
  existence of the counter example in Section~\ref{sub.counter}: both
  FPL and exponential weighting (with output randomly chosen) are weak
  learners, as opposed to online gradient descent which is a strong learner.
  \item It is easy to extend the analysis to the case where an optimization oracle is replaced by an $C$-approximate optimization oracle, which given $\mbu$ computes $\tilde{\mbx}^*$ such that
  $$ f(\tilde{\mbx}^*, \mbu) \leq C\cdot \min_{\mbx\in \mathcal{X}} f(\mbx, \mbu),
  $$
  for some $C>1$. In such a case, the statement in Theorem~\ref{thm:main-result-strongX}  is replaced by
  $$\max_{\mbu'\in \mathcal{U}} f(\overline{\mbx}, \mbu') -C\cdot\min_{\mbx^*\in \mathcal{X}} \max_{\mbu\in \mathcal{U}} f(\mbx^*, \mbu) \leq \frac{ R_u(T, \delta)}{T}.
$$
That is, the algorithm will return a $C$-approximate solution for the
minimax problem.
\end{enumerate} 

Before concluding this section, we remark that the convexity
requirement for the $\mbx$ player can be relaxed if randomized actions
are allowed; see Appendix~\ref{sec:randomized-case} for
details. Consequently, this allows us to solve robust optimization
where both the feasible region $\mathcal{X}$ and the uncertainty set
$\mathcal{U}$ are represented as feasible regions of Integer
Programming problems (see Appendix~\ref{sec:robust-optim-with} for an
example).
 
\section{Multiple Objectives: Online Learning for Robust Optimization}\label{sec.multi}
Our general approach can be readily extended to the case where the
primal player needs to satisfy multiple objectives simultaneously.
Multi-objective decision-making naturally arises in many setups where
the preference of decisions are multi-dimensional. In particular
multi-objective decision-making can model robust optimization, where
typically the decision maker aims to find a decision $\mbx^*$ such
that a set of robust constraints are satisfied, i.e.,
$$\underset{\mbu^i\in \mathcal{U}^i}{\mbox{Maximize}}f^i(\mbx^*,\mbu^i) \leq 0,\quad i=1,2,\cdots, n.
$$

 We consider solving the following general case:
 \begin{equation}
  \label{eq:robOptProblem-multi}
\underset{\mbx\in \mathcal{X}}{\mbox{Minimize}} \,\,
\left\{\underset{\boldsymbol{\lambda} \in \Lambda }{\mbox{Maximize}}  \underset{\mbu^i\in \mathcal{U}^i, i=1,\cdots, n}{\mbox{Maximize}} \,\, \sum_{i=1}^n \lambda^i f^i(\mbx, \mbu^i)\right\},
\end{equation}
where $\Lambda\subseteq \Delta(n)$ is a closed convex set. For
example, if $\Lambda=\Delta(n)$, the \(n\)-dimensional unit simplex, then Problem~\eqref{eq:robOptProblem-multi} reduces to
\begin{equation*}\begin{split}
&\underset{\mbx\in \mathcal{X}}{\mbox{Minimize}}\,\,   \max \left(\underset{\mbu^1\in \mathcal{U}^1}{\mbox{Maximize}}    f^1(\mbx, \mbu^1),\right.\\ 
&\qquad\qquad\left.  \underset{\mbu^2\in \mathcal{U}^2 }{\mbox{Maximize} } f^2(\mbx, \mbu^2),\cdots, \underset{\mbu^n\in \mathcal{U}^n}{\mbox{Maximize}}   f^n(\mbx, \mbu_n)\right),
\end{split}\end{equation*}
which corresponds to the aforementioned case of robust
optimization. On the other hand, if $\Lambda=\{\boldsymbol{\lambda}\}$
is a singleton, then \eqref{eq:robOptProblem-multi} is equivalent to
$$ \underset{\mbx\in \mathcal{X}}{\mbox{Minimize}} \,\, \underset{\mbu^i\in \mathcal{U}^i, i=1,\cdots, n}{\mbox{Maximize}} \left(\sum_{i=1}^n \lambda^i f^i(\mbx, \mbu^i)\right),
$$
and we solve our problem for a specific preference or weighing among
the different objectives.

We first consider solving Problem~\eqref{eq:robOptProblem-multi} via
parallel weak learners. We present the following two approaches both
of which are based on imaginary play. Due to space constraints, the
biased case with a strong learner is deferred to the supplementary
material.

\subsubsection*{Approach via Explicit Maximum}
In the first approach we model the maximum over the different functions
\(f^i\)
explicitly. To this end, let $\vec{\mbu}\in \prod_i \mathcal{U}^i$
denote the concatenation of $\mbu^i$, i.e.,
$\vec{\mbu} =\{\mbu^1, \mbu^2,\cdots \mbu^n\} $ and define the function
$$F(\mbx, \vec{\mbu}) \triangleq \max_{\boldsymbol{\lambda}\in \Lambda} \lambda^i f^i(\mbx, \mbu^i).$$
Thus, roughly speaking, the optimal $\mbx$ is approachable if weak learnability holds for both $\mbx$ and $\mbu$ with respect to $F$.
Due to space constraints we defer detailed results into the supplementary material.

\subsubsection*{Approach via Distributional Maximum}\label{sss.distributional_maximum}
In this section we will present an alternative approach, where the
maximum is only implicitly modeled via a distributional approach,
which captures the maximum via a worst-case distribution. 

In the following let 
$$g(\mbx, \vec{\mbu}, \blambda) \triangleq \langle \blambda, f(\mbx, \vec{\mbu})\rangle.$$
With this Problem~\eqref{eq:robOptProblem-multi} can be rewritten as
$$\underset{\mbx\in \mathcal{X}}{\mbox{Minimize}}\,\, \underset{\vec{\mbu}\in \prod_i \mathcal{U}^i, \blambda\in \Lambda}{\mbox{Maximize}}\,\, g(\mbx, \vec{\mbu}, \blambda). 
$$
As before we specify the learnability requirement.

\begin{assumption}[Learnability]\label{assu.lambdalearn} We make the
  following assumptions for the learners:
\begin{enumerate} 
\item For every \(i\), there exists an online learning algorithm $\mathcal{L}^i_{u}$
  for $\mbu_i$ for $f^i(\cdot)$, i.e., for any NONA sequence of
  $\{\mbx'_t\}$ , the following holds with probability $1-\delta$:
$$ \max_{\mbu^i\in \mathcal{U}^i}    \sum_{t=1}^T f^i(\mbx'_t, \mbu^i) - \sum_{t=1}^T f(\mbx'_t, \mbu_t^i)  \leq R_u^i(T, \delta),
$$
where $\{\mbu^i_t\}$ is the output of $\mathcal{L}^i_{u}$.
\item There exists an online learning algorithm
  $\mathcal{L}_{\lambda}$ for $\blambda$ for $g(\cdot)$. That is, for
  any NONA sequences of $\{\mbx'_t\}$ and $\{\vec{\mbu}'_t\}$, the
  following holds with probability $1-\delta$:
$$\max_{\blambda\in \Lambda} \sum_{t=1}^T \langle \blambda, f(\mbx'_t,\vec{\mbu}'_t) \rangle -  \sum_{t=1}^T \langle \blambda_t, f(\mbx'_t,\vec{\mbu}'_t) \rangle \leq R_{\lambda}(T,\delta),
$$
where  $\{\blambda_t\}$ is the output of $\mathcal{L}_{\lambda}$.
\item There exists an online learning algorithm
  $\mathcal{L}_{x}$ for $\mbx$ for $g(\cdot)$. That is, for
  any NONA sequences of $\{\mbu'_t\}$ and $\{\blambda'_t\}$, the
  following holds with probability $1-\delta$:
$$\sum_{t=1}^T \langle \blambda'_t, f(\mbx_t,\vec{\mbu}'_t) \rangle - \min_{\mbx \in \mathcal X} \sum_{t=1}^T \langle \blambda'_t, f(\mbx,\vec{\mbu}'_t) \rangle  \leq R_{x}(T,\delta),
$$
where  $\{\mbx_t\}$ is the output of $\mathcal{L}_{x}$.
\end{enumerate}
\end{assumption}

Note that the \(\blambda\)-learner and \(\mbu\)-learner should be
considered as the dual learners and \(\mbx\) as the primal learner. In
fact, we show that the \(\blambda\)-learner and \(\mbu\)-learner
together give rise to a \((\vec{\mbu},\blambda)\)-learner, which allows us
then to reuse previous methodology. 
Further observe that,
$\langle \blambda, f(\mbx, \vec{\mbu})\rangle$ is a linear function of
$\blambda$ and thus the second part of the assumption, for example,
holds using the Follow the Perturbed Leader algorithm (see
\cite{kalai2005efficient}).

\begin{proposition}\label{pro.multi.dual}
  Suppose that Assumption~\ref{assu.lambdalearn} holds and that
  \(\Lambda \subseteq \Delta(n)\). Then running
  $\mathcal{L}^i_u$ and $\mathcal{L}_{\lambda}$ simultaneously is an
  online learning algorithm for $(\vec{\mbu}, \blambda)$ of function
  $g(\cdot,\cdot)$. That is, for any NONA sequence $\{\mbx'_t\}$, let
  $\{\mbu^i_t\}$ be the output of $\mathcal{L}^i_{u}$, and
  $\{\blambda_t\}$ be the output of $\mathcal{L}_{\lambda}$, then with probability $1-(n+1)\delta$, we have
\begin{eqnarray*}
&&\max_{\blambda\in \Lambda, \vec{\mbu}\in \prod_i\mathcal{U}^i} \sum_{t=1}^T \langle \blambda, f(\mbx'_t, \vec{\mbu}) \rangle -  \sum_{t=1}^T \langle \blambda_t, f(\mbx'_t, \vec{\mbu}_t)\\
&& \qquad \leq \max_i  R^i_{\mbu}(T,\delta)  +R_{\blambda}(T, \delta).
\end{eqnarray*}
\end{proposition}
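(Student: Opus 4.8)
The plan is to reduce the joint regret for the pair $(\vec{\mbu}, \blambda)$ to the two component regret bounds supplied by Assumption~\ref{assu.lambdalearn}, exploiting that $\Lambda \subseteq \Delta(n)$ forces the weights $\lambda^i$ to be nonnegative and to sum to one. Fix an arbitrary NONA sequence $\{\mbx'_t\}$, let $\{\mbu^i_t\}$ and $\{\blambda_t\}$ be the outputs of the simultaneously-run learners, and write $\vec{\mbu}_t = (\mbu^1_t, \dots, \mbu^n_t)$.

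First I would decompose the comparator term. Because $\langle \blambda, f(\mbx'_t, \vec{\mbu})\rangle = \sum_i \lambda^i f^i(\mbx'_t, \mbu^i)$ is separable across the coordinates $\mbu^i$ and each weight $\lambda^i \geq 0$, for any fixed $\blambda$ the inner maximization splits:
$$\max_{\vec{\mbu}\in \prod_i \mathcal{U}^i} \sum_{t=1}^T \langle \blambda, f(\mbx'_t, \vec{\mbu})\rangle = \sum_{i=1}^n \lambda^i \max_{\mbu^i \in \mathcal{U}^i} \sum_{t=1}^T f^i(\mbx'_t, \mbu^i).$$
Hence the joint maximum equals $\max_{\blambda \in \Lambda} \sum_i \lambda^i M^i$, where $M^i := \max_{\mbu^i} \sum_t f^i(\mbx'_t, \mbu^i)$.

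Next I would invoke the $n$ per-coordinate $\mbu$-learner bounds. Since $\{\mbx'_t\}$ is NONA, part~(i) of the assumption gives, for each $i$ with probability $1-\delta$, the inequality $M^i \leq \sum_t f^i(\mbx'_t, \mbu^i_t) + R^i_{\mbu}(T,\delta)$. Substituting and using $\sum_i \lambda^i = 1$ to absorb the regrets via $\sum_i \lambda^i R^i_{\mbu}(T,\delta) \leq \max_i R^i_{\mbu}(T,\delta)$ yields
$$\max_{\blambda \in \Lambda} \sum_i \lambda^i M^i \leq \max_{\blambda \in \Lambda} \sum_{t=1}^T \langle \blambda, f(\mbx'_t, \vec{\mbu}_t)\rangle + \max_i R^i_{\mbu}(T,\delta).$$
The first term on the right is precisely the comparator of the $\blambda$-learner evaluated on the realized sequences $\{\mbx'_t\}$ and $\{\vec{\mbu}_t\}$, so applying part~(ii) bounds it by $\sum_t \langle \blambda_t, f(\mbx'_t, \vec{\mbu}_t)\rangle + R_{\blambda}(T,\delta)$. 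Chaining the inequalities and rearranging gives the claimed bound.

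The main thing to verify — and where the anticipatory/non-anticipatory distinction of this paper is actually used — is that the $\blambda$-learner bound is legitimately applicable, i.e., that $\{\vec{\mbu}_t\}$ is a NONA sequence with respect to the $\blambda$-learner's exogenous randomness. This holds because each $\mathcal{L}^i_u$ observes only the primal signal $\{\mbx'_t\}$ and draws on randomness independent of the $\blambda$-learner, so $\vec{\mbu}_t$ never depends on the $\blambda$-learner's current draw; the given $\{\mbx'_t\}$ is NONA for the same reason. Finally, a union bound over the $n$ coordinate events and the single $\blambda$-event shows the combined bound holds with probability at least $1-(n+1)\delta$. I expect this NONA-ness check to be the only genuinely delicate point, the remaining steps being the routine decomposition and rearrangement above.
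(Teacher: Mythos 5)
Your proposal is correct and follows essentially the same route as the paper's own proof: decompose the joint regret via the separability $\langle\blambda, f(\mbx'_t,\vec{\mbu})\rangle = \sum_i \lambda^i f^i(\mbx'_t,\mbu^i)$ (valid since $\lambda^i \ge 0$), plug in the $n$ per-coordinate $\mbu$-learner bounds, absorb the regrets via $\sum_i \lambda^i R^i_u \le \max_i R^i_u$ using $\Lambda \subseteq \Delta(n)$, then apply the $\blambda$-learner bound on the realized sequence $\{\vec{\mbu}_t\}$ and union bound over the $n+1$ events. Your explicit check that $\{\vec{\mbu}_t\}$ is NONA for the $\blambda$-learner is a detail the paper leaves implicit, but it does not change the argument.
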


By Proposition~\ref{pro.multi.dual}, there exist weak learners for
both the primal and the dual player and solving
Problem~\ref{eq:robOptProblem-multi} reduces to solving
Problem~\ref{eq:robOptProblem}. Below we present the formal algorithm
and the corresponding theorem with performance guarantees.

\begin{algorithm}
  \caption{Robust Optimization via Online Learning (ROOL) for
    maximum over functions (adaptive $\lambda$)}
  \label{alg:rool-const-learnlambda}
  \begin{algorithmic}[1]
    \REQUIRE function \(f^1,\cdots, f^n\), learners \(\mathcal L_u^1,\cdots,  \mathcal L_u^n, \mathcal L_{\lambda}, \mathcal L_x\)
    satisfying Assumptions~\ref{ass:setup} and~\ref{assu.lambdalearn}. 
    \ENSURE point $\overline{\mbx}$
    \FOR{\(t = 1, \dots, T\)}
      \STATE \(\mbx_t \leftarrow \mathcal L_x (\mbx_1,\vec{\mbu}_1, \boldsymbol{\lambda}_1, \cdots,\mbx_{t-1}, \vec{\mbu}_{t-1}, \boldsymbol{\lambda}_{t-1})\)
      \STATE \(\mbu^i_t \leftarrow \mathcal L^i_u
      (\mbx_1,\mbu^i_1, \cdots,\mbx_{t-1}, \mbu^i_{t-1});\quad i=1,\cdots, n\)
      \STATE \(\boldsymbol{\lambda}_t \leftarrow \mathcal L_{\lambda} (\mbx_1, \vec{\mbu}_1,\boldsymbol{\lambda}_1,\cdots, \mbx_{t-1}, \vec{\mbu}_{t-1}, \boldsymbol{\lambda}_{t-1}) \)
    \ENDFOR
      \STATE \(\overline{\mbx} \leftarrow \frac{1}{T}\sum_{t=1}^T \mbx_t\)
  \end{algorithmic}
\end{algorithm}

\begin{theorem}\label{thm.multi.dist.main}  Suppose that Assumption~\ref{ass:setup} and~\ref{assu.lambdalearn} hold and that
  \(\Lambda \subseteq \Delta(n)\). Then with probability
  $1-(n+2)\delta$, Algorithm~\ref{alg:rool-const-learnlambda} returns
  a point $\bar{\mbx}$ satisfying
\begin{equation*}\begin{split}
&  \max_{\vec{\mbu}'\in \prod_i \mathcal{U}^i, \blambda'\in \Lambda} g(\overline{\mbx}, \vec{\mbu}', \blambda')- \min_{\mbx\in \mathcal{X}}   \max_{\vec{\mbu}\in \prod_i \mathcal{U}^i, \blambda\in \Lambda} g(\mbx, \vec{\mbu}, \blambda)\\
&\quad \le  \frac{R_x(T,\delta) + \max_i R^i_u(T, \delta) +R_{\lambda}(T, \delta)}{T}.
\end{split}\end{equation*}
\end{theorem}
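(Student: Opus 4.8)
The plan is to reduce Theorem~\ref{thm.multi.dist.main} to the two–weak–learner result of Theorem~\ref{thm:main-result} by treating the pair $(\vec{\mbu}, \blambda)$ as a single dual variable ranging over the convex set $\prod_i \mathcal{U}^i \times \Lambda$, and by putting $g(\mbx, \vec{\mbu}, \blambda)$ in place of the objective $f$. Under this identification the statement to be proved is exactly the conclusion of Theorem~\ref{thm:main-result} for the function $g$, with the combined dual regret playing the role of $R_u$ and the primal regret $R_x$ unchanged; so once the hypotheses of that theorem are verified for $g$, the bound follows directly from its proof, the only new work being a bookkeeping of failure probabilities.

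To apply Theorem~\ref{thm:main-result} I first check its structural hypotheses (Assumption~\ref{ass:setup}) for $g$. Convexity of $\mathcal{X}$ is given. For convexity of $g(\cdot, \vec{\mbu}, \blambda)$ in $\mbx$, note that $g(\mbx, \vec{\mbu}, \blambda) = \sum_{i=1}^n \lambda^i f^i(\mbx, \mbu^i)$; since $\blambda \in \Lambda \subseteq \Delta(n)$ forces $\lambda^i \ge 0$, this is a nonnegative combination of the functions $f^i(\cdot, \mbu^i)$, each convex in $\mbx$, hence convex. This is precisely the step where the hypothesis $\Lambda \subseteq \Delta(n)$ is essential. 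Next I identify the two weak learners: the primal learner $\mathcal{L}_x$ is a weak learner for $\mbx$ with respect to $g$ by Assumption~\ref{assu.lambdalearn}(3), while Proposition~\ref{pro.multi.dual} shows that running $\mathcal{L}^i_u$ and $\mathcal{L}_\lambda$ in parallel constitutes a weak learner for $(\vec{\mbu}, \blambda)$ with respect to $g$, with regret $\max_i R^i_u(T, \delta) + R_\lambda(T, \delta)$.

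The remaining point is to confirm the NONA structure required by weak learnability, which is guaranteed by the parallel (rather than sequential) nature of Algorithm~\ref{alg:rool-const-learnlambda}: at each round $t$ all three updates depend only on the history through round $t-1$. Consequently the dual sequence $\{(\vec{\mbu}_t, \blambda_t)\}$ fed to $\mathcal{L}_x$ is independent of the primal randomness at round $t$, so it is NONA for the primal learner, and symmetrically the primal sequence $\{\mbx_t\}$ fed to the dual learners is NONA for them; this is exactly the setting in which both players can be weak learners. The failure probabilities then combine by a union bound: $\mathcal{L}_x$ violates its bound with probability at most $\delta$ and the combined dual learner with probability at most $(n+1)\delta$ by Proposition~\ref{pro.multi.dual}, so both regret bounds hold simultaneously with probability at least $1-(n+2)\delta$, on which event the conclusion of Theorem~\ref{thm:main-result} applied to $g$ yields the claimed inequality. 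I expect the main obstacle to be bookkeeping the NONA dependencies cleanly across the three interleaved learners, namely verifying that the combined dual learner really meets the single-$\mbu$-learner interface of Theorem~\ref{thm:main-result} and that no learner inadvertently anticipates another's current-round randomness, rather than any analytic difficulty.
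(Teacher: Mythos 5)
Your proposal is correct and follows essentially the same route as the paper's own proof: the paper likewise invokes Proposition~\ref{pro.multi.dual} to treat $(\vec{\mbu},\blambda)$ as a single weak dual learner with regret $\max_i R^i_u(T,\delta)+R_\lambda(T,\delta)$, uses Assumption~\ref{assu.lambdalearn}(iii) for the primal learner, notes that all three sequences produced by Algorithm~\ref{alg:rool-const-learnlambda} are NONA, and then repeats the averaging/min-max argument of Theorem~\ref{thm:main-result} with the stated union bound over failure probabilities. Your explicit verification that $g(\cdot,\vec{\mbu},\blambda)$ is convex in $\mbx$ (via $\lambda^i\ge 0$ from $\Lambda\subseteq\Delta(n)$) is a welcome detail the paper leaves implicit; note only that convexity of the dual domain is never needed, since Theorem~\ref{thm:main-result} places no assumptions on $\mathcal{U}$.
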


To illustrate the result, let us consider the following
example. Suppose all $f_i(\mbx, \mbu^i)$ are bilinear with respect to
$\mbx$ and $\mbu^i$, as in the case of a robust linear programming,
$\mathcal{X}$ and $\mathcal{U}^i$ are subsets of the Euclidean space, and
further suppose $\mathcal{X} $ is a convex set (notice that we make no
such assumptions on the uncertainty sets $\mathcal{U}_i$). We say a set
$\mathcal{Z}\subset \mathbb{R}^m$ is equipped with a linear
optimization oracle, if given any
$\boldsymbol{\theta}\in \mathbb{R}^m$, we can compute
$\underset{\mathbf{z}\in \mathcal{Z}}{\mbox{Minimize}} \quad
\boldsymbol{\theta}^\top \mathbf{z}.  $ Thus,
Assumption~\ref{assu.lambdalearn} holds if the followings are true:
 \begin{enumerate}
 \item $\mathcal{U}^i$ is equipped with a linear optimization oracle for $i=1,\cdots, n$.
 \item $\Lambda$ is equipped with a linear optimization oracle.
 \item $\mathcal{X}$ is equipped with a linear optimization oracle. 
 \end{enumerate}

 Indeed, each of the three conditions ensures the learnability in
 Assumption~\ref{assu.lambdalearn} for $\mbu^i$, $\blambda$, and
 $\mbx$ respectively, via e.g., the Follow the Perturbed Leader
 algorithm. This is due to the fact that for each argument, its
 respective objective function is linear.

\bibliography{bibliography}

\begin{thebibliography}{19}
\providecommand{\natexlab}[1]{#1}
\providecommand{\url}[1]{\texttt{#1}}
\expandafter\ifx\csname urlstyle\endcsname\relax
  \providecommand{\doi}[1]{doi: #1}\else
  \providecommand{\doi}{doi: \begingroup \urlstyle{rm}\Url}\fi

\bibitem[Ben-Tal and Nemirovski(1998)]{ben1998robust}
A.~Ben-Tal and A.~Nemirovski.
\newblock Robust convex optimization.
\newblock \emph{Mathematics of operations research}, 23\penalty0 (4):\penalty0
  769--805, 1998.

\bibitem[Ben-Tal and Nemirovski(2000)]{BN00}
A.~Ben-Tal and A.~Nemirovski.
\newblock Robust solutions of linear programming problems contaminated with
  uncertain data.
\newblock \emph{Mathematical Programming A}, 88:\penalty0 411--424, 2000.

\bibitem[Ben-Tal and Nemirovski(2002)]{ben2002robust}
A.~Ben-Tal and A.~Nemirovski.
\newblock Robust optimization--methodology and applications.
\newblock \emph{Mathematical Programming}, 92\penalty0 (3):\penalty0 453--480,
  2002.

\bibitem[Ben-Tal et~al.(2009)Ben-Tal, El~Ghaoui, and Nemirovski]{BGN09}
A.~Ben-Tal, L.~El~Ghaoui, and A.~Nemirovski.
\newblock \emph{Robust Optimization}.
\newblock Princeton University Press, 2009.

\bibitem[Ben-Tal et~al.(2015)Ben-Tal, Hazan, Koren, and Mannor]{ben2015oracle}
A.~Ben-Tal, E.~Hazan, T.~Koren, and S.~Mannor.
\newblock Oracle-based robust optimization via online learning.
\newblock \emph{Operations Research}, 63\penalty0 (3):\penalty0 628--638, 2015.

\bibitem[Bertsimas and Sim(2003)]{bertsimas2003robust}
D.~Bertsimas and M.~Sim.
\newblock Robust discrete optimization and network flows.
\newblock \emph{Mathematical programming}, 98\penalty0 (1-3):\penalty0 49--71,
  2003.

\bibitem[Bertsimas and Sim(2004)]{Bertsimas04}
D.~Bertsimas and M.~Sim.
\newblock The price of robustness.
\newblock 52\penalty0 (1):\penalty0 35--53, January 2004.

\bibitem[Bertsimas et~al.(2011)Bertsimas, Brown, and
  Caramanis]{bertsimas2011theory}
D.~Bertsimas, D.~B. Brown, and C.~Caramanis.
\newblock Theory and applications of robust optimization.
\newblock \emph{SIAM review}, 53\penalty0 (3):\penalty0 464--501, 2011.

\bibitem[Chen et~al.(2017)Chen, Lucier, Singer, and Syrgkanis]{chen2017robust}
R.~S. Chen, B.~Lucier, Y.~Singer, and V.~Syrgkanis.
\newblock Robust optimization for non-convex objectives.
\newblock In \emph{Advances in Neural Information Processing Systems}, pages
  4708--4717, 2017.

\bibitem[Freund and Schapire(1999)]{freund1999adaptive}
Y.~Freund and R.~E. Schapire.
\newblock Adaptive game playing using multiplicative weights.
\newblock \emph{Games and Economic Behavior}, 29\penalty0 (1-2):\penalty0
  79--103, 1999.

\bibitem[Hazan(2016)]{ocoBook}
E.~Hazan.
\newblock Introduction to online convex optimization.
\newblock \emph{Foundations and Trends in Optimization}, 2\penalty0
  (3--4):\penalty0 157--325, 2016.
\newblock \doi{10.1561/2400000013}.
\newblock URL \url{http://ocobook.cs.princeton.edu/}.

\bibitem[Iyengar(2005)]{Iyengar03}
G.~N. Iyengar.
\newblock Robust dynamic programming.
\newblock \emph{Mathematics of Operations Research}, 30\penalty0 (2):\penalty0
  257--280, 2005.

\bibitem[Kalai and Vempala(2005)]{kalai2005efficient}
A.~Kalai and S.~Vempala.
\newblock Efficient algorithms for online decision problems.
\newblock \emph{Journal of Computer and System Sciences}, 71\penalty0
  (3):\penalty0 291--307, 2005.

\bibitem[Madry et~al.(2017)Madry, Makelov, Schmidt, Tsipras, and
  Vladu]{madry2017towards}
A.~Madry, A.~Makelov, L.~Schmidt, D.~Tsipras, and A.~Vladu.
\newblock Towards deep learning models resistant to adversarial attacks.
\newblock \emph{arXiv preprint arXiv:1706.06083}, 2017.

\bibitem[Neu and Bart{\'o}k(2013)]{neu2013efficient}
G.~Neu and G.~Bart{\'o}k.
\newblock An efficient algorithm for learning with semi-bandit feedback.
\newblock In \emph{Algorithmic Learning Theory}, pages 234--248. Springer,
  2013.

\bibitem[Nilim and {El Ghaoui}(2005)]{Nilim04}
A.~Nilim and L.~{El Ghaoui}.
\newblock Robust control of {M}arkov decision processes with uncertain
  transition matrices.
\newblock \emph{Operations Research}, 53\penalty0 (5):\penalty0 780--798,
  September 2005.

\bibitem[Puterman(1994)]{Puterman94}
M.~L. Puterman.
\newblock \emph{Markov Decision Processes}.
\newblock John Wiley \& Sons, New York, 1994.

\bibitem[Sinha et~al.(2017)Sinha, Namkoong, and Duchi]{sinha2017certifiable}
A.~Sinha, H.~Namkoong, and J.~Duchi.
\newblock Certifiable distributional robustness with principled adversarial
  training.
\newblock \emph{arXiv preprint arXiv:1710.10571}, 2017.

\bibitem[Wiesemann et~al.(2013)Wiesemann, Kuhn, and
  Rustem]{wiesemann2013robust}
W.~Wiesemann, D.~Kuhn, and B.~Rustem.
\newblock Robust markov decision processes.
\newblock \emph{Mathematics of Operations Research}, 38\penalty0 (1):\penalty0
  153--183, 2013.

\end{thebibliography}
\bibliographystyle{abbrvnat}

\newpage

\appendix

\section{Proofs}
\label{sec:missing-proofs}

\subsection{Proofs from Section~\ref{sec:robOnl}}

  \begin{proof}[Proof of Theorem~\ref{thm:main-result}] Since $\mbx_t$ and $\mbu_t$ are obtained by
  $\mathcal{L}_{x}$ and $\mathcal{L}_{u}$, we have that $\{\mbx_t\}_{t}$
  and $\{\mbu_t\}_{t}$ are NONA. Thus, by Assumption~\ref{ass:learn},
\begin{eqnarray}
&& \sum_{t=1}^T f(\mbx_t, \mbu_t) -\min_{\mbx\in \mathcal{X}}    \sum_{t=1}^T f(\mbx, \mbu_t) \leq R_x(T, \delta),
\end{eqnarray}
and 
\begin{eqnarray}
&&   \max_{\mbu\in \mathcal{U}}    \sum_{t=1}^T f(\mbx_t, \mbu) - \sum_{t=1}^T f(\mbx_t, \mbu_t)  \leq R_u(T, \delta),
\end{eqnarray}
hold simultaneously with probability $1-2\delta$. Summing up the two inequalities leads to 
\begin{equation}
\label{equ.onlinegap}
\begin{split}
&\max_{\mbu\in \mathcal{U}}    \sum_{t=1}^T f(\mbx_t, \mbu)-\min_{\mbx\in \mathcal{X}}    \sum_{t=1}^T f(\mbx, \mbu_t)\\ &  \leq  R_x(T, \delta)+R_u(T, \delta).
\end{split}
\end{equation}
By convexity of $f(\cdot, \mbu)$, we have
$T f(\overline{\mbx}, \mbu) \leq  \sum_{t=1}^T f(\mbx_t, \mbu)$ for
all $\mbu\in \mathcal{U}$,
so that
\begin{equation}
\label{equ:xgap}
T\max_{\mbu'\in \mathcal{U}} f(\overline{\mbx}, \mbu') \leq \max_{\mbu\in \mathcal{U}}  \sum_{t=1}^T f(\mbx_t, \mbu).
\end{equation}
Since $\mbu_t\in \mathcal{U}$ for all \(t\)
we also have for any $\mathbf{x}$ that
$f(\mbx, \mbu_t)\leq \max_{\mbu\in \mathcal{U}} f(\mbx, \mbu)$, which
implies
$\sum_{t=1}^T f(\mbx, \mbu_t)\leq  T\max_{\mbu\in \mathcal{U}} f(\mbx, \mbu),
$
further leading to
\begin{equation}
\label{equ:ugap} 
\min_{\mbx\in \mathcal{X}}   \sum_{t=1}^T f(\mbx, \mbu_t)\leq T  \min_{\mbx^*\in \mathcal{X}}    \max_{\mbu\in \mathcal{U}} f(\mbx^*, \mbu).
\end{equation}
Combining Equations~\eqref{equ:xgap} and \eqref{equ:ugap} we obtain
\begin{equation*}
\begin{split}
&T\max_{\mbu'\in \mathcal{U}} f(\overline{\mbx}, \mbu')  -  T  \min_{\mbx^*\in \mathcal{X}}    \max_{\mbu\in \mathcal{U}} f(\mbx^*, \mbu)\\
 \leq &  \max_{\mbu\in \mathcal{U}}  \sum_{t=1}^T f(\mbx_t, \mbu) - \min_{\mbx\in \mathcal{X}}   \sum_{t=1}^T f(\mbx, \mbu_t),
\end{split}
\end{equation*}
which together with~\eqref{equ.onlinegap} establishes the theorem.
\end{proof}

\begin{proof}[Proof of Theorem~\ref{thm:main-result-strongX}]  By Assumption~\ref{ass:slearn-x}, with probability $1-\delta$ we have
\begin{eqnarray*}
&& \max_{\mbu\in \mathcal{U}} \sum_{t=1}^Tf(\mbx_t, \mbu)- \sum_{t=1}^T f(\mbx_t, \mbu_t)   \leq R_u(T, \delta).
\end{eqnarray*}
And by definition of $\mbx_t$, 
\begin{equation*}\begin{split} 
\sum_{t=1}^T f(\mbx_t, \mbu_t)=\sum_{t=1}^T \min_{\mbx\in \mathcal{X}}f(\mbx, \mbu_t)\leq \min_{\mbx \in \mathcal{X}} \sum_{t=1}^T f(\mbx, \mbu_t).
\end{split}\end{equation*}
 Combining the two inequalities leads to 
\begin{equation*}
\max_{\mbu\in \mathcal{U}}    \sum_{t=1}^T f(\mbx_t, \mbu)-\min_{\mbx\in \mathcal{X}}    \sum_{t=1}^T f(\mbx, \mbu_t) \leq  R_u(T, \delta).
\end{equation*}
The argument follows now identically to the proof of Theorem~\ref{thm:main-result}.
\end{proof}

We obtain an analogous statement for \emph{Randomized Robust
  Optimization via Strong Primal Learner} whose proof is almost identical to
the proof of Theorem~\ref{thm:main-result-strongX} from above.

\begin{algorithm}
  \caption{Randomized Robust Optimization via Strong Primal Learner}
  \label{alg:rspl}
  \begin{algorithmic}[1]
    \REQUIRE function \(f\), learners \(\mathcal L_x\)
    satisfying ~\ref{ass:slearn-x}. 
    \ENSURE distribution $\overline{\mu}$
    \FOR{\(t = 1, \dots, T\)}
      \STATE \(\mbx_t \leftarrow \mathcal L_x (\mbx_1,\mbu_1,\cdots,\mbx_{t-1}, \mbu_{t-1})\)
      \STATE \(\mbu_t \leftarrow \arg\max_{\mbu\in \mathcal{U}} f(\mbx_t, \mbu) \)
    \ENDFOR
 \STATE \(\overline{\mu} \leftarrow \mbox{empirical distribution of } (\mbx_1, \cdots, \mbx_T)\)
 
  \end{algorithmic}
\end{algorithm}
 \begin{theorem} \label{thm:main-result-distribution-strongx}
With probability $1-\delta$, Algorithm~\ref{alg:rspl} returns a
distribution $\overline{\mu}$ satisfying
\begin{equation*}\begin{split}
&\max_{\nu'\in \Delta(\mathcal{U})} \underset{{\mbx\sim \overline{\mu}, \mbu\sim \nu'} }{\mathbb{E}}f(\mbx, \mbu)\\&\quad -\min_{\mu^* \in \Delta(\mathcal{X}) } \max_{\nu\in \Delta(\mathcal{U})} \underset{{\mbx\sim \mu^*, \mbu\sim \nu}}{\mathbb{E}} f(\mbx, \mbu)
 \leq \frac{ R_x(T, \delta)}{T}.
\end{split}\end{equation*}
\end{theorem}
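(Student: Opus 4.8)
The plan is to mirror the proof of Theorem~\ref{thm:main-result-strongX} almost verbatim, replacing the averaged iterate $\overline{\mbx}$ by the empirical distribution $\overline{\mu}$ over $\mbx_1,\dots,\mbx_T$, and reinterpreting every ``max over a point'' as a ``max over a distribution'' that is attained at a Dirac mass. First I would observe that in Algorithm~\ref{alg:rspl} the dual iterate $\mbu_t=\arg\max_{\mbu\in\mathcal U}f(\mbx_t,\mbu)$ depends on $\mbx_t$, hence on the private randomness $\xi_t$; thus $\{\mbu_t\}$ is an \emph{anticipatory} sequence with respect to $\{\mbx_t\}$. Consequently the strong learnability of $\mathcal L_x$ (the symmetric analog of Assumption~\ref{ass:slearn-x} required by the algorithm) applies to this sequence, giving, with probability $1-\delta$,
\[
\sum_{t=1}^T f(\mbx_t,\mbu_t)-\min_{\mbx\in\mathcal X}\sum_{t=1}^T f(\mbx,\mbu_t)\le R_x(T,\delta).
\]

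Next I would exploit the definition of $\mbu_t$ to pass to a regret-style expression exactly as in the deterministic proof: since $f(\mbx_t,\mbu_t)=\max_{\mbu\in\mathcal U}f(\mbx_t,\mbu)$ for each $t$, we have $\sum_t f(\mbx_t,\mbu_t)\ge\max_{\mbu\in\mathcal U}\sum_t f(\mbx_t,\mbu)$, so the previous bound upgrades to
\[
\max_{\mbu\in\mathcal U}\sum_{t=1}^T f(\mbx_t,\mbu)-\min_{\mbx\in\mathcal X}\sum_{t=1}^T f(\mbx,\mbu_t)\le R_x(T,\delta).
\]

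The only genuinely new step is translating these two sums into the distributional minimax quantities of the statement. For the first term I would use that $\tfrac1T\sum_t f(\mbx_t,\mbu)=\mathbb E_{\mbx\sim\overline\mu}f(\mbx,\mbu)$ and that maximizing the linear functional $\nu'\mapsto\mathbb E_{\mbu\sim\nu'}\mathbb E_{\mbx\sim\overline\mu}f$ over $\Delta(\mathcal U)$ is attained at a point mass, giving the exact identity $\max_{\mbu\in\mathcal U}\sum_t f(\mbx_t,\mbu)=T\max_{\nu'\in\Delta(\mathcal U)}\mathbb E_{\mbx\sim\overline\mu,\,\mbu\sim\nu'}f$. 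For the second term I would avoid the von Neumann minimax theorem entirely and instead plug the empirical distribution $\hat\nu$ of $\mbu_1,\dots,\mbu_T$ in as a feasible dual: for every $\mu^*\in\Delta(\mathcal X)$,
\[
\max_{\nu\in\Delta(\mathcal U)}\mathbb E_{\mbx\sim\mu^*,\,\mbu\sim\nu}f\ge\mathbb E_{\mbx\sim\mu^*,\,\mbu\sim\hat\nu}f=\mathbb E_{\mbx\sim\mu^*}\bigl[\tfrac1T\textstyle\sum_t f(\mbx,\mbu_t)\bigr]\ge\min_{\mbx\in\mathcal X}\tfrac1T\sum_t f(\mbx,\mbu_t),
\]
and taking the minimum over $\mu^*$ yields $\min_{\mbx}\sum_t f(\mbx,\mbu_t)\le T\min_{\mu^*\in\Delta(\mathcal X)}\max_{\nu\in\Delta(\mathcal U)}\mathbb E_{\mbx\sim\mu^*,\,\mbu\sim\nu}f$. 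Combining this with the identity for the first term and the regret bound, then dividing by $T$, delivers the claim with probability $1-\delta$.

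The step I expect to require the most care is this last distributional comparison. Here the role played in Theorem~\ref{thm:main-result} by convexity of $f(\cdot,\mbu)$ and of $\mathcal X$ is taken over by two facts: averaging in expectation is exact (so no Jensen slack is incurred when moving to $\overline\mu$), and the empirical measures $\overline\mu,\hat\nu$ are \emph{automatically} feasible in $\Delta(\mathcal X),\Delta(\mathcal U)$. This is precisely why the convexity hypotheses on $\mathcal X$ and $f$ can be dropped in the randomized setting; the remainder of the argument is then identical to the proof of Theorem~\ref{thm:main-result-strongX}.
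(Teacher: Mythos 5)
Your proposal is correct and matches the paper's intended argument: the paper states that the proof is ``almost identical to the proof of Theorem~\ref{thm:main-result-strongX},'' i.e., the strong-learner regret bound combined with the oracle identity $f(\mbx_t,\mbu_t)=\max_{\mbu\in\mathcal U}f(\mbx_t,\mbu)$ (with the roles of the players swapped), followed by exactly the empirical-distribution translation used in the proof of Theorem~\ref{thm:main-result-distribution}. Your handling of the anticipatory nature of $\{\mbu_t\}$, the passage to $\overline\mu$ and $\hat\nu$, and the observation that convexity of $\mathcal X$ and $f(\cdot,\mbu)$ is no longer needed all agree with the paper's reasoning.
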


\subsection{Proofs from Section~\ref{sec.multi}}
\begin{proof}[Proof of Proposition~\ref{pro.multi.dual}]By Assumption~\ref{assu.lambdalearn}, with probability
  $1-(n+1)\delta$, we have for all \(i\)
\begin{align}\label{equ.proof.constrait.approach2}
  &\max_{\mbu^i\in \mathcal{U}^i}    \sum_{t=1}^T f^i(\mbx'_t, \mbu^i) - \sum_{t=1}^T f(\mbx'_t, \mbu_t^i)  \leq R_u^i(T, \delta),\\
  \text{and} \quad &\max_{\blambda\in \Lambda} \sum_{t=1}^T \langle \blambda, f(\mbx'_t, \vec{\mbu_t}) \rangle -  \sum_{t=1}^T \langle \blambda_t, f(\mbx'_t, \vec{\mbu}_t) \leq R_{\lambda}(T, \delta).
\end{align}
With the above the following holds:
\begin{equation*}
\begin{split}
&\max_{\blambda\in \Lambda, \vec{\mbu}\in \prod_i\mathcal{U}^i} \sum_{t=1}^T \langle \blambda, f(\mbx'_t, \vec{\mbu}) \rangle -  \sum_{t=1}^T \langle \blambda_t, f(\mbx'_t, \vec{\mbu}_t) \rangle \\
\leq & \max_{\blambda \in \Lambda}  \left[ \max_{ \vec{\mbu}\in \prod_i\mathcal{U}^i}\sum_{t=1}^T\langle \blambda, f(\mbx'_t, \vec{\mbu}) \rangle - \sum_{t=1}^T \langle \blambda, f(\mbx'_t, \vec{\mbu}_t) \rangle\right] +  \left[ \max_{\blambda \in \Lambda}\sum_{t=1}^T\langle \blambda, f(\mbx'_t, \vec{\mbu}_t) \rangle - \sum_{t=1}^T \langle \blambda_t, f(\mbx'_t, \vec{\mbu}_t) \rangle\right] \\
 {= }& \max_{\blambda  \in \Lambda}  \left[ \max_{ \vec{\mbu}\in \prod_i\mathcal{U}^i}\left( \sum_{i=1}^n \blambda_i \left(\sum_{t=1}^T   f^i(\mbx'_t, \mbu^i)   - \sum_{t=1}^T   f^i(\mbx'_t, \mbu^i_t) \right) \right)\right] +  \left[ \max_{\blambda  \in \Lambda}\sum_{t=1}^T\langle \blambda, f(\mbx'_t, \vec{\mbu}_t) \rangle - \sum_{t=1}^T \langle \blambda_t, f(\mbx'_t, \vec{\mbu}_t) \rangle\right]\\
= & \max_{\blambda  \in \Lambda}  \left[  \left( \sum_{i=1}^n \blambda_i \left(\max_{\mbu^i\in \mathcal{U}^i}\sum_{t=1}^T   f^i(\mbx'_t, \mbu^i)   - \sum_{t=1}^T   f^i(\mbx'_t, \mbu^i_t) \right) \right)\right] +  \left[ \max_{\blambda  \in \Lambda}\sum_{t=1}^T\langle \blambda, f(\mbx'_t, \vec{\mbu}_t) \rangle - \sum_{t=1}^T \langle \blambda_t, f(\mbx'_t, \vec{\mbu}_t) \rangle\right]\\
\stackrel{(a)}{\leq} & \max_{\blambda\in \Lambda}  \left(\sum_{i=1}^n \blambda_i  R^i_{u}(T,\delta)\right) +R_{\lambda}(T, \delta)\\
\stackrel{(b)}{\leq} & \max_i  R^i_{u}(T,\delta)  +R_{\lambda}(T, \delta),
\end{split}
\end{equation*}
where (a) follows with~\eqref{equ.proof.constrait.approach2} and (b)
holds from $\Lambda\subseteq \Delta(n)$, which completes the proof.
\end{proof}
\begin{proof}[Proof of Theorem~\ref{thm.multi.dist.main}]
Observe that $\{\vec{\mbu}_t\}$, $\{\mbx_t\}$ and $\{\blambda_t\}$ are all NONA.
By Proposition~\ref{pro.multi.dual} we have 
 $$\max_{\blambda\in \Lambda, \vec{\mbu}\in \prod_i\mathcal{U}^i} \sum_{t=1}^T g(\mbx_t, \vec{\mbu},  \blambda)
  -  \sum_{t=1}^T g(\mbx_t, \vec{\mbu}_t,  \blambda_t)
   \leq \max_i  R^i_{\mbu}(T,\delta)  +R_{\blambda}(T, \delta),
 $$
 and by Assumption~\ref{assu.lambdalearn} (iii) we have
$$\sum_{t=1}^T  g(\mbx_t, \vec{\mbu}_t,  \blambda_t) - \min_{\mbx \in \mathcal X} \sum_{t=1}^T g(\mbx, \vec{\mbu}_t,  \blambda_t) \leq R_{x}(T,\delta),
$$
The rest of the proof follows the proof of Theorem~\ref{thm:main-result} and is omitted to avoid redundancy.
\end{proof}

\section{Randomized Case}
\label{sec:randomized-case}
Our main framework can be extended to the case where the $\mbx$ player
is allowed to randomize her action. Indeed, under such a setup, we can
further lift the convexity requirement for $f(\cdot, \mbu)$ and
$\mathcal{X}$. Specifically, we consider solving the following
optimization problem

\begin{equation}
\begin{split}
  \label{eq:robOptProblem-expected}
&\underset{\mu \in \Delta(\mathcal{X})}{\mbox{Minimize}}\,\,\underset{\nu \in \Delta(\mathcal{U})}{\mbox{Maximize}}  \,\ \mathbb{E}_{\mbx\sim \mu, \mbu\sim \nu} f(\mbx, \mbu)\\
=  &\underset{\mu \in \Delta(\mathcal{X})}{\mbox{Minimize}}\,\, \underset{\mbu \in \mathcal{U}}{\mbox{Maximize}}  \,\ \mathbb{E}_{\mbx\sim \mu} f(\mbx, \mbu)
\end{split}
\end{equation}
This can be useful in cases where the decision maker wants to make her
decision against an unknown, but {\em non-adaptive} environment; or
even against an adversarial agent, as long as the adversarial agent
does not take his action after observing the decision maker's
action, i.e., we allow for an oblivious (true) adversary not to be
confused with the learnability assumption and the implied imaginary adversary. 

In the setup
described above, if the decision maker chooses a randomized
policy following a distribution $\mu$, then the expected loss she
incurs is upper bounded by
$\max_{\mbu \in \mathcal{U}} \,\ \mathbb{E}_{\mbx\sim \mu} f(\mbx,
\mbu)$,
which is at least as small as, and can be significantly smaller than
the more pessimistic upper bound
$ \mathbb{E}_{\mbx\sim \mu} \max_{\mbu \in \mathcal{U}}f(\mbx,
\mbu)$.
We next show that our method can readily be applied to this setup
while relaxing the convexity requirement for $\mathcal{X}$ and
$f(\cdot, \mbu)$.

\begin{algorithm}
  \caption{Randomized Robust Optimization via Online Learning (R2OOL)}
  \label{alg:r2ool}
  \begin{algorithmic}[1]
    \REQUIRE function \(f\), learners \(\mathcal L_u,\mathcal L_x\)
    satisfying~Assumption \ref{ass:learn}. 
    \ENSURE distribution $\overline{\mu}$ \\
\emph{Run Algorithm~\ref{alg:rool} with changed output}
      \STATE \(\overline{\mu} \leftarrow \mbox{empirical distribution of } (\mbx_1, \cdots, \mbx_T)\)

  \end{algorithmic}
\end{algorithm}

\begin{theorem} \label{thm:main-result-distribution} 
With probability $1-2\delta$, Algorithm~\ref{alg:r2ool} returns a
distribution $\overline{\mu}$ satisfying
\begin{equation*}
\begin{split}
  &\max_{\nu'\in \Delta(\mathcal{U})} \underset{\mbx\sim \overline{\mu},\mbu\sim \nu'}{\mathbb{E} } f(\mbx, \mbu) -\min_{\mu^* \in \Delta(\mathcal{X}) } \max_{\nu\in \Delta(\mathcal{U})} \underset{\mbx\sim \mu^*, \mbu\sim \nu}{ \mathbb{E}}  f(\mbx, \mbu)\\
  & \leq \frac{ R_x(T, \delta)+R_u(T, \delta)}{T}.
\end{split}
\end{equation*}
\end{theorem}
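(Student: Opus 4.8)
The plan is to reuse the skeleton of the proof of Theorem~\ref{thm:main-result}, replacing its two appeals to convexity by the linearity of the expectation in the mixing distributions. Since the output $\overline{\mu}$ of Algorithm~\ref{alg:r2ool} is produced by running Algorithm~\ref{alg:rool}, the sequences $\{\mbx_t\}$ and $\{\mbu_t\}$ are again NONA, so both weak-learnability guarantees hold simultaneously with probability $1-2\delta$ and the combined regret inequality~\eqref{equ.onlinegap}, $\max_{\mbu\in\mathcal U}\sum_{t=1}^T f(\mbx_t,\mbu)-\min_{\mbx\in\mathcal X}\sum_{t=1}^T f(\mbx,\mbu_t)\le R_x(T,\delta)+R_u(T,\delta)$, is available verbatim.

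The first convexity step of Theorem~\ref{thm:main-result} is now an exact identity: because $\overline{\mu}$ is the empirical measure of $(\mbx_1,\dots,\mbx_T)$, we have $\mathbb{E}_{\mbx\sim\overline{\mu}}f(\mbx,\mbu)=\frac1T\sum_{t=1}^T f(\mbx_t,\mbu)$ for every $\mbu$. Since $\mathbb{E}_{\mbu\sim\nu'}[\cdot]$ is linear in $\nu'$, the maximum over $\nu'\in\Delta(\mathcal U)$ is attained at a point mass, so $\max_{\nu'\in\Delta(\mathcal U)}\mathbb{E}_{\mbx\sim\overline{\mu},\mbu\sim\nu'}f(\mbx,\mbu)=\frac1T\max_{\mbu'\in\mathcal U}\sum_{t=1}^T f(\mbx_t,\mbu')$. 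This identifies the first term of the left-hand side with the first term of~\eqref{equ.onlinegap}.

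Next I would lower-bound the saddle value. Linearity in $\nu$ reduces the inner maximum to $\max_{\mbu\in\mathcal U}\mathbb{E}_{\mbx\sim\mu^*}f(\mbx,\mbu)$, and since the worst-case $\mbu$ dominates the average over the realized sequence $\mbu_1,\dots,\mbu_T$, for every $\mu^*$ we obtain $\max_{\mbu\in\mathcal U}\mathbb{E}_{\mbx\sim\mu^*}f(\mbx,\mbu)\ge\mathbb{E}_{\mbx\sim\mu^*}\frac1T\sum_{t=1}^T f(\mbx,\mbu_t)\ge\frac1T\min_{\mbx\in\mathcal X}\sum_{t=1}^T f(\mbx,\mbu_t)$, the last bound holding because an expectation over $\mu^*$ dominates the minimum (equivalently, a linear functional over $\Delta(\mathcal X)$ is minimized at a vertex). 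Taking $\min_{\mu^*}$ gives $\min_{\mu^*\in\Delta(\mathcal X)}\max_{\nu\in\Delta(\mathcal U)}\mathbb{E}_{\mbx\sim\mu^*,\mbu\sim\nu}f\ge\frac1T\min_{\mbx\in\mathcal X}\sum_{t=1}^T f(\mbx,\mbu_t)$, matching the second term of~\eqref{equ.onlinegap}.

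Subtracting the two estimates, the left-hand side of the theorem is at most $\frac1T\big[\max_{\mbu'\in\mathcal U}\sum_{t=1}^T f(\mbx_t,\mbu')-\min_{\mbx\in\mathcal X}\sum_{t=1}^T f(\mbx,\mbu_t)\big]$, which by~\eqref{equ.onlinegap} is at most $\frac{R_x(T,\delta)+R_u(T,\delta)}{T}$. I do not expect a genuine obstacle here: the point is precisely that dropping the convexity hypotheses costs nothing once both players may randomize, because the two convexity inequalities of Theorem~\ref{thm:main-result} degenerate into an exact identity (for $\overline{\mu}$) and a trivial ``expectation $\ge$ minimum'' bound (for $\mu^*$), each valid for arbitrary $f$ and arbitrary $\mathcal X$. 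The only point needing a little care is the reduction from mixtures $\nu,\nu'$ over $\mathcal U$ to single points, which is immediate from the linearity of $\nu\mapsto\mathbb{E}_{\mbu\sim\nu}f(\mbx,\mbu)$.
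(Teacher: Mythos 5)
Your proposal is correct and follows essentially the same route as the paper's proof: both establish the combined regret inequality for the NONA sequences, identify $\max_{\nu'}\mathbb{E}_{\mbx\sim\overline{\mu},\mbu\sim\nu'}f$ with $\frac1T\max_{\mbu\in\mathcal U}\sum_t f(\mbx_t,\mbu)$ via the empirical distribution, and lower-bound the saddle value using the empirical distribution of $(\mbu_1,\dots,\mbu_T)$ together with the fact that a linear functional over $\Delta(\mathcal X)$ is minimized at a point of $\mathcal X$. The only cosmetic difference is that the paper chains these inequalities starting from $\min_{\mbx}\sum_t f(\mbx,\mbu_t)$ while you start from $\max_{\nu}\mathbb{E}_{\mu^*,\nu}f$; the inequalities used are identical.
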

\begin{proof}[Proof of Theorem~\ref{thm:main-result-distribution}] Since $\mbx_t$ and $\mbu_t$ are obtained by
  $\mathcal{L}_{x}$ and $\mathcal{L}_{u}$, we have that $\{\mbx_t\}_{t}$
  and $\{\mbu_t\}_{t}$ are NONA. Thus, by Assumption~\ref{ass:learn},
\begin{eqnarray}
&& \sum_{t=1}^T f(\mbx_t, \mbu_t) -\min_{\mbx\in \mathcal{X}}    \sum_{t=1}^T f(\mbx, \mbu_t) \leq R_x(T, \delta)\\
&&   \max_{\mbu\in \mathcal{U}}    \sum_{t=1}^T f(\mbx_t, \mbu) - \sum_{t=1}^T f(\mbx_t, \mbu_t)  \leq R_u(T, \delta),
\end{eqnarray}
holds simultaneously with probability $1-2\delta$. Summing up the two inequalities leads to 
\begin{equation}
\label{equ.onlinegap-distr}\begin{split}
&\max_{\mbu\in \mathcal{U}}    \sum_{t=1}^T f(\mbx_t, \mbu)-\min_{\mbx\in \mathcal{X}}    \sum_{t=1}^T f(\mbx, \mbu_t)\\
& \leq  R_x(T, \delta)+R_u(T, \delta).
\end{split}\end{equation}
Since $\overline{\mu}$ is the empirical distribution of $(\mbx_1,\cdots, \mbx_T)$, we have
\begin{equation*}\begin{split} 
&\max_{\mbu\in \mathcal{U}}  \sum_{t=1}^T f(\mbx_t, \mbu) = T \max_{\mbu\in \mathcal{U}}  \mathbb{E}_{\mbx\sim \overline{\mu} }  f(\mbx, \mbu)\\
& = T \max_{\nu'\in \Delta(\mathcal{U})}  \mathbb{E}_{\mbx\sim \overline{\mu}, \mbu\sim \nu' }  f(\mbx, \mbu).
\end{split}\end{equation*}
Further let $\overline{\nu}$ be the empirical distribution of $(\mbu_1,\cdots, \mbu_T)$. Then we have
\begin{equation*}\begin{split}
&\min_{\mbx\in \mathcal{X}}    \sum_{t=1}^T f(\mbx, \mbu_t) = T \min_{\mbx\in \mathcal{X}} \mathbb{E}_{\mbu \sim \overline{\nu}}f(\mbx, \mbu)\\ &= T \min_{\mu^*\in \Delta(\mathcal{X})} \mathbb{E}_{\mbx\sim \mu^*,\mbu \sim \overline{\nu}}f(\mbx, \mbu)\\
& \leq T\min_{\mu^*\in \Delta(\mathcal{X})} \max_{\nu\in \Delta(\mathcal{U})} \mathbb{E}_{\mbx\sim \mu^*,\mbu \sim \nu}f(\mbx, \mbu). 
\end{split}\end{equation*}
Substituting the two equations to Equation~\eqref{equ.onlinegap-distr}
establishes the theorem.
\end{proof}

As before note that the second summand of the statement in the theorem
is Problem~\eqref{eq:robOptProblem-expected}.  We remark that
Problem~\eqref{eq:robOptProblem-expected} cannot be reduced to
Problem~\eqref{eq:robOptProblem} with the decisions being $\mu$ and $\nu$, and
the objective function being
$g(\mu, \nu)=\mathbb{E}_{\mbx\sim \mu} f(\mbx, \mbu)$; in particular
Algorithm~\ref{alg:rool} does not apply. This is due to
two key differences: first we do not assume that $\mu$ and $\nu$ are
learnable. Furthermore, for given $\mu$ and $\nu$ we do not observe
the value of $g(\mu, \nu)$, instead, we only observe a noisy
realization whose expected value equals $g(\mu, \nu)$.

\section{Multiple Objectives: Online Learning for Robust
  Optimization \\  --- Additional Results ---}\label{sec.multiadd}
In this appendix we provide omitted details for Section~\ref{sec.multi}.

\subsection{Approach via Explicit Maximum}
In the first case we model the maximum over the different functions
\(f^i\)
explicitly. To this end, let $\vec{\mbu}\in \prod_i \mathcal{U}^i$
denote the concatenation of $\mbu^i$, i.e.,
$\vec{\mbu} =\{\mbu^1, \mbu^2,\cdots \mbu^n\} $ and define the function
$$F(\mbx, \vec{\mbu}) \triangleq \max_{\boldsymbol{\lambda}\in \Lambda} \lambda^i f^i(\mbx, \mbu^i).$$
As before we have to specify the learnability for the two players.
\begin{assumption}[Learnability] \label{ass:learn_const1}\ 
\begin{enumerate} 
\item There exists an online learning algorithm $\mathcal{L}_{x}$ for
  $\mbx$, such that for any NONA sequence $\{\vec{\mbu}'_t\}_{t=1}^\infty$,
  the following holds with a probability $1-\delta$
$$ \sum_{t=1}^T F(\mbx_t, \vec{\mbu}'_t) -\min_{\mbx\in \mathcal{X}}    \sum_{t=1}^T F(\mbx, \vec{\mbu}'_t) \leq R_x(T, \delta),
$$
where $\{\mbx_t\}$ is the output of $\mathcal{L}_{x}$.
\item For each  $i=1,\cdots, n$, there exists an online learning algorithm
  $\mathcal{L}^i_{u}$ for $\mbu^i$, such that for any NONA sequence
  $\{\mbx'_t\}_{t=1}^\infty$, the following holds with a probability
  $1-\delta$
$$ \max_{\mbu^i\in \mathcal{U}^i}    \sum_{t=1}^T f^i(\mbx'_t, \mbu^i) - \sum_{t=1}^T f(\mbx'_t, \mbu_t^i)  \leq R_u^i(T, \delta),
$$
where $\{\mbu^i_t\}$ is the output of $\mathcal{L}^i_{u}$.
\end{enumerate}
\end{assumption}

 It is important to observe that for the $\mbu$ player, we do not require learnability for \(F\) the
maximum 
over the \(f^i\)
but only the learnability for each separate \(f^i\).
In fact, as we will see later, these two notions of learnability are
not equivalent. In the remainder of this subsection we will work under
Assumptions~\ref{ass:setup} and~\ref{ass:learn_const1}.

We are ready to formulate our algorithm, which is similar in spirit to
Algorithm~\ref{alg:rool}.

\begin{algorithm}
  \caption{Robust Optimization via Online Learning (ROOL) for
    maximum over functions (optimal $\lambda$)}
  \label{alg:rool-const}
  \begin{algorithmic}[1]
    \REQUIRE function \(f^1,\cdots, f^n\), learners \(\mathcal L_u,\mathcal L_x\)
    satisfying Assumptions~\ref{ass:setup} and~\ref{ass:learn_const1}. 
    \ENSURE point $\overline{\mbx}$
    \FOR{\(t = 1, \dots, T\)}
      \STATE \(\mbx_t \leftarrow \mathcal L_x (\mbx_1,\vec{\mbu}_1,\cdots,\mbx_{t-1}, \vec{\mbu}_{t-1})\)
      \STATE \(\mbu^i_t \leftarrow \mathcal L^i_u
      (\mbx_1,\mbu^i_1,\cdots,\mbx_{t-1}, \mbu^i_{t-1});\quad i=1,\cdots, n\)
    \ENDFOR
      \STATE \(\overline{\mbx} \leftarrow \frac{1}{T}\sum_{t=1}^T \mbx_t\)
  \end{algorithmic}
\end{algorithm}

We establish the following guarantee.

\begin{theorem}\label{thm.constraint-app1-main} With probability
  $1-(n+1)\delta$, Algorithm~\ref{alg:rool-const} returns a point
  \(\overline{\mbx} = \frac{1}{T}\sum_{t=1}^T \mbx_t\) satisfying 
$$
T\max_{\vec{\mbu}\in \prod_i\mathcal{U}^i} F(\overline{\mbx},
\vec{\mbu}) -T\min_{\mbx\in \mathcal{X}} \max_{\vec{\mbu}\in
  \prod_i\mathcal{U}^i} F(\mbx, \vec{\mbu}) \leq  R_x(T,\delta) + \max_i R^i_u(T, \delta).
$$
\end{theorem}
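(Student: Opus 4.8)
The plan is to mirror the proof of Theorem~\ref{thm:main-result}, with $F$ playing the role of $f$ and $\vec{\mbu}\in\prod_i\mathcal{U}^i$ playing the role of the single uncertainty variable. The first thing to record is that $F(\cdot,\vec{\mbu})$ is convex in $\mbx$ for every fixed $\vec{\mbu}$: since $\Lambda\subseteq\Delta(n)$ each weight $\lambda^i$ is nonnegative, so $\langle\blambda, f(\cdot,\vec{\mbu})\rangle=\sum_i\lambda^i f^i(\cdot,\mbu^i)$ is a nonnegative combination of the convex functions $f^i(\cdot,\mbu^i)$ (convex by Assumption~\ref{ass:setup}), and $F(\cdot,\vec{\mbu})$ is a pointwise maximum of such functions, hence convex. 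With this in hand the two ``directional'' inequalities from the proof of Theorem~\ref{thm:main-result} go through verbatim: by convexity (Jensen), $T\max_{\vec{\mbu}}F(\overline{\mbx},\vec{\mbu})\le\max_{\vec{\mbu}}\sum_t F(\mbx_t,\vec{\mbu})$, and since each $\vec{\mbu}_t\in\prod_i\mathcal{U}^i$ we have $\min_{\mbx}\sum_t F(\mbx,\vec{\mbu}_t)\le T\min_{\mbx}\max_{\vec{\mbu}}F(\mbx,\vec{\mbu})$. It therefore suffices to bound the online gap $\max_{\vec{\mbu}}\sum_t F(\mbx_t,\vec{\mbu})-\min_{\mbx}\sum_t F(\mbx,\vec{\mbu}_t)$ by $R_x(T,\delta)+\max_i R_u^i(T,\delta)$.

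As argued in the proof of Theorem~\ref{thm:main-result}, both $\{\mbx_t\}$ and each $\{\mbu^i_t\}$ are generated by the respective weak learners and hence constitute NONA sequences, so all $n+1$ learnability guarantees of Assumption~\ref{ass:learn_const1} hold simultaneously with probability $1-(n+1)\delta$ by a union bound. The primal half is immediate: applying the $\mbx$-learner against the NONA sequence $\{\vec{\mbu}_t\}$ gives $\sum_t F(\mbx_t,\vec{\mbu}_t)-\min_{\mbx}\sum_t F(\mbx,\vec{\mbu}_t)\le R_x(T,\delta)$. It remains to control the dual gap $D:=\max_{\vec{\mbu}}\sum_t F(\mbx_t,\vec{\mbu})-\sum_t F(\mbx_t,\vec{\mbu}_t)$ and show $D\le\max_i R_u^i(T,\delta)$.

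For $D$ I would expand the maximum defining $F$. Let $\vec{\mbu}^\star$ attain $\max_{\vec{\mbu}}\sum_t F(\mbx_t,\vec{\mbu})$ and let $\blambda_t^\star\in\arg\max_{\blambda\in\Lambda}\langle\blambda, f(\mbx_t,\vec{\mbu}^\star)\rangle$. Since $\blambda_t^\star\in\Lambda$ is also feasible at round $t$ against $\vec{\mbu}_t$, we have $F(\mbx_t,\vec{\mbu}_t)\ge\langle\blambda_t^\star, f(\mbx_t,\vec{\mbu}_t)\rangle$, and subtracting term by term yields
$$D\le\sum_{i=1}^n\sum_{t=1}^T(\lambda_t^\star)^i\bigl[f^i(\mbx_t,\mbu^{\star i})-f^i(\mbx_t,\mbu^i_t)\bigr].$$
The intent is then to combine this with the per-coordinate guarantees $\sum_t[f^i(\mbx_t,\mbu^{\star i})-f^i(\mbx_t,\mbu^i_t)]\le R_u^i(T,\delta)$ (valid since $\mbu^{\star i}\in\mathcal{U}^i$ is an admissible comparator for $\mathcal{L}^i_u$) together with $\blambda_t^\star\in\Delta(n)$, paralleling step (b) in the proof of Proposition~\ref{pro.multi.dual}, so as to collapse the weighted double sum to $\max_i R_u^i(T,\delta)$.

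This last collapse is exactly where I expect the main difficulty. Unlike Proposition~\ref{pro.multi.dual}, where the comparator uses a single fixed $\blambda$ and the $\mbu^i$-maximization factors cleanly through the convex combination, here the weights $\blambda_t^\star$ vary with $t$ because the maximum over $\blambda$ sits \emph{inside} each summand of $F$. One is thus left with a $\blambda_t^\star$-weighted combination of the per-round dual-regret increments, and the naive estimate $(\lambda_t^\star)^i\le 1$ does not suffice, since the increments $f^i(\mbx_t,\mbu^{\star i})-f^i(\mbx_t,\mbu^i_t)$ are not signed round by round. The step I would scrutinize most carefully is therefore the passage from the $\blambda_t^\star$-weighted sum to $\max_i R_u^i(T,\delta)$; it goes through cleanly whenever the dual learners enjoy the stronger property that down-weighting individual rounds cannot increase their regret (as for Follow-the-Perturbed-Leader or online gradient descent on the linear and bilinear losses used in the instantiations), and I would either invoke that property of the concrete learners or strengthen Assumption~\ref{ass:learn_const1} to a weighted-regret guarantee to make the argument watertight.
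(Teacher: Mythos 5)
There is a genuine gap, and it sits exactly where you suspected. Your reduction asks for the bound $D:=\max_{\vec{\mbu}}\sum_t F(\mbx_t,\vec{\mbu})-\sum_t F(\mbx_t,\vec{\mbu}_t)\le\max_i R^i_u(T,\delta)$, but this inequality is \emph{false} under Assumption~\ref{ass:learn_const1} alone; the paper itself gives a counterexample in the remark following the theorem. Take $\mathcal{U}^1=\mathcal{U}^2=[-1,1]$, $\mathcal{X}=\{-1,+1\}^2$, $f^i(\mbx,\mbu^i)=\tfrac12|\mbx^i-\mbu^i|$, $\Lambda=\Delta(2)$, and $\{\mbx_t\}$ iid uniform: then $\mbu^1_t\equiv\mbu^2_t\equiv 0$ has vanishing regret for each $f^i$ separately, yet $\sum_t F(\mbx_t,\vec{\mbu}_t)=T/2$ while $\mathbb{E}\sum_t F(\mbx_t,(1,1)^\top)=\tfrac34 T$, so $D$ grows linearly in $T$. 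In other words, concatenating per-coordinate no-regret learners does \emph{not} yield a no-regret learner for $F$, and no weighted-regret strengthening is needed to prove the theorem as stated — the theorem is simply not proved via $D$.

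The missing idea is a sharper Jensen step that exploits the asymmetry between $\overline{\mbx}$ and the iterates. Since $F(\overline{\mbx},\vec{\mbu})=\max_{\blambda\in\Lambda}\langle\blambda,f(\overline{\mbx},\vec{\mbu})\rangle$ and each $\langle\blambda,f(\cdot,\vec{\mbu})\rangle$ is convex, averaging gives
\begin{equation}
T\,F(\overline{\mbx},\vec{\mbu})\;\le\;\max_{\blambda\in\Lambda}\sum_{t=1}^T\langle\blambda,f(\mbx_t,\vec{\mbu})\rangle,
\end{equation}
with the maximum over a \emph{single} $\blambda$ sitting \emph{outside} the time-sum. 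This is strictly tighter than your bound $\sum_t F(\mbx_t,\vec{\mbu})=\sum_t\max_{\blambda_t}\langle\blambda_t,f(\mbx_t,\vec{\mbu})\rangle$, and the slack between the two is exactly the quantity that blows up in the counterexample. With a fixed $\blambda$ as comparator everything decomposes: using $\sum_t F(\mbx_t,\vec{\mbu}_t)\ge\max_{\blambda_0\in\Lambda}\sum_t\langle\blambda_0,f(\mbx_t,\vec{\mbu}_t)\rangle$, one gets
\begin{equation}
\max_{\vec{\mbu},\blambda}\sum_{t}\langle\blambda,f(\mbx_t,\vec{\mbu})\rangle-\sum_{t}F(\mbx_t,\vec{\mbu}_t)\;\le\;\max_{\blambda\in\Lambda}\sum_{i=1}^n\lambda_i\Bigl[\max_{\mbu^i\in\mathcal{U}^i}\sum_t f^i(\mbx_t,\mbu^i)-\sum_t f^i(\mbx_t,\mbu^i_t)\Bigr]\;\le\;\max_i R^i_u(T,\delta),
\end{equation}
where the last step uses the unweighted per-coordinate guarantees and $\Lambda\subseteq\Delta(n)$. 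Your primal half and the inequality $\min_{\mbx}\sum_t F(\mbx,\vec{\mbu}_t)\le T\min_{\mbx}\max_{\vec{\mbu}}F(\mbx,\vec{\mbu})$ are correct and match the paper; replacing your first Jensen step by the display above and summing the pieces completes the proof without any additional hypotheses on the learners.
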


\begin{proof}
  For the sake of exposition  we use
  \(f(\overline{\mbx}, \vec{\mbu})\)
  to denote the vector \((f(\overline{\mbx}, \mbu_i))_i\). First
  observe that for any $\vec{\mbu}\in \prod_i\mathcal{U}^i$, we have
\begin{equation*}
\begin{split}
T \, F(\overline{\mbx}, \vec{\mbu})&=T\max_{\blambda\in \Lambda} \langle \blambda, f(\overline{\mbx}, \vec{\mbu})\rangle\\
& \leq \max_{\blambda\in \Lambda} \langle \blambda, \sum_{t=1}^T f(\mbx_t, \vec{\mbu})\rangle\\
&= \max_{\blambda\in \Lambda} \sum_{t=1}^T  \langle \blambda, f(\mbx_t, \vec{\mbu})\rangle,
\end{split}
\end{equation*}
and taking maximization of $\mbu$ over both sides, we obtain
\begin{equation}\label{equ.proof.constraintone-1}
T\max_{\vec{\mbu}\in \prod_i\mathcal{U}^i} F(\overline{\mbx}, \vec{\mbu}) \leq \max_{\vec{\mbu}\in \prod_i\mathcal{U}^i}  \max_{\blambda\in \Lambda} \sum_{t=1}^T  \langle \blambda, f(\mbx_t, \vec{\mbu})\rangle.
\end{equation}
Moreover, we have
\begin{equation}\label{equ.proof.constraintone-2}
\begin{split}
 &\max_{\vec{\mbu}\in \prod_i\mathcal{U}^i}  \max_{\blambda\in \Lambda} \sum_{t=1}^T  \langle \blambda, f(\mbx_t, \vec{\mbu})\rangle -\sum_{t=1}^T F(\mbx_t, \vec{\mbu}_t)\\
 =& \max_{\vec{\mbu}\in \prod_i\mathcal{U}^i}  \max_{\blambda\in \Lambda} \sum_{t=1}^T  \langle \blambda, f(\mbx_t, \vec{\mbu})\rangle -\sum_{t=1}^T \max_{\blambda_t \in \Lambda}\langle \blambda_t, f(\mbx_t, \vec{\mbu}_t) \rangle\\
 \leq & \max_{\blambda\in \Lambda}  \max_{\vec{\mbu}\in \prod_i\mathcal{U}^i} \sum_{t=1}^T  \langle \blambda, f(\mbx_t, \vec{\mbu})\rangle -\max_{\blambda_0 \in \Lambda}\sum_{t=1}^T \langle \blambda_0, f(\mbx_t, \vec{\mbu}_t) \rangle\\
 \leq & \max_{\blambda\in \Lambda}\left\{ \max_{\vec{\mbu}\in \prod_i\mathcal{U}^i} \sum_{t=1}^T  \langle \blambda, f(\mbx_t, \vec{\mbu})\rangle
 - \sum_{t=1}^T \langle \blambda, f(\mbx_t, \vec{\mbu}_t) \rangle \right\}\\
 =&  \max_{\blambda\in \Lambda}\left\{ \sum_{i=1}^n \lambda_i \left[ \max_{\mbu^i\in \mathcal{U}^i} \sum_{t=1}^T f^i(\mbx_t, \mbu^i ) -\sum_{t=1}^T f^i(\mbx_t, \mbu^i_t )\right]\right\}\\
 \stackrel{(a)}{\leq} &  \max_{\blambda\in \Lambda}\left\{ \sum_{i=1}^n \blambda_i \left[ R^i_u(T, \delta)\right]\right\}\\
 \leq  &  \max_{\blambda\in \Lambda}\left\{ \sum_{i=1}^n \blambda_i \left[\max_{j=1,2,\cdots, n} R^j_u(T, \delta)\right]\right\}\\
 \stackrel{(b)}{\leq} & \max_i R^i_u(T, \delta),
\end{split}
\end{equation}
holds with probability $1-n\delta$. Here, (a) holds from the
assumption that the $\mbu_i$ are learnable, and (b) holds because
$\Lambda\subseteq \Delta(n)$.

We also have
\begin{equation}\label{equ.proof.constraintone-3}
\sum_{t=1}^T F(\mbx_t, \vec{\mbu}_t) -\min_{\mbx\in \mathcal{X}} \sum_{t=1}^T F(\mbx, \vec{\mbu}_t) \leq R_x(T,\delta),
\end{equation}
and
\begin{equation}\label{equ.proof.constraintone-4}
\begin{split}
&\min_{\mbx\in \mathcal{X}} \sum_{t=1}^T F(\mbx, \vec{\mbu}_t) -T\min_{\mbx\in \mathcal{X}} \max_{\vec{\mbu}\in \prod_i\mathcal{U}^i} F(\mbx, \vec{\mbu})\\
&=\min_{\mbx\in \mathcal{X}} \sum_{t=1}^T F(\mbx, \vec{\mbu}_t) -\min_{\mbx\in \mathcal{X}} \sum_{t=1}^T \max_{\vec{\mbu}\in \prod_i\mathcal{U}^i} F(\mbx, \vec{\mbu})\\
& \leq 0.
\end{split}
\end{equation}
Summing up Equation~\eqref{equ.proof.constraintone-1},~\eqref{equ.proof.constraintone-2},~\eqref{equ.proof.constraintone-3},~\eqref{equ.proof.constraintone-4}, we get
$$T\max_{\vec{\mbu}\in \prod_i\mathcal{U}^i} F(\overline{\mbx}, \vec{\mbu}) -T\min_{\mbx\in \mathcal{X}} \max_{\vec{\mbu}\in \prod_i\mathcal{U}^i} F(\mbx, \vec{\mbu}) \leq  \max_i R^i_u(T, \delta)+ R_x(T,\delta),
$$
with probability $1-(n+1)\delta$.
\end{proof}

\begin{remark}
  We remark that Theorem~\ref{thm.constraint-app1-main} exploits the
  asymmetric nature of the primal player and the dual player, and in
  particular it is not a
  direct consequence of Theorem \ref{thm:main-result}. Indeed when
  $\mbu^i$ are outputs of learning algorithms for the $f^i(\cdot,\cdot)$,
  then the concatenation of these outputs is not necessarily a learning algorithm
  for $F(\cdot,\cdot)$: For example, let
  $\mathcal{U}^1 =\mathcal{U}^2 =[-1,+1]$ and
  $\mathcal{X} \in \{-1, +1\}^2$. Furthermore we pick functions
  $f^1(\mbx_t,\mbu^1) =\frac{1}{2}|\mbx^1_t-\mbu^1|$, and
  $f^2(\mbx_t,\mbu^2) =\frac{1}{2}|\mbx_t^2-\mbu^2|$. Now suppose
  $\{\mbx_t\}$ are an iid sequence uniformly sampled from $\{-1, +1\}^2$. It is
  easy to see that for $f^1$ and $f^2$ the choice
  $\mbu^1_t\equiv \mbu^2_t \equiv 0$ achieves zero regret.   However,
  $\mathbb{E}\sum_{t=1}^T F(\mbx_t,\vec{\mbu}_t) =\frac{1}{2}T$,
  whereas
  $\mathbb{E}\sum_{t=1}^T F(\mbx_t, [1,1]^{\top}) =\frac{3}{4}T$
  because with probability $3/4$, either $f^1(\mbx_t,1)=1$ or
  $f^2(\mbx_t,1)=1$. Thus, $\mbu_t$ is not a no-regret sequence
  for $F(\mbx_t,\cdot)$.
\end{remark}

\begin{remark} Notice that the learnability assumption in this
  approach is asymmetric: for the primal player we require $F(\cdot)$
  to be learnable, whereas for the dual players we only require
  $f^i(\cdot)$ to be learnable. We discuss some implications of this
  asymmetry. Recall that taking the maximum over $\blambda\in \Lambda$
  preserves convexity but not linearity w.r.t.\ the $\mbx$
  argument. Thus, if $f^i(\cdot)$ are linear w.r.t.\ $\mbx$ and learnable
  via {\em Follow the Perturbed Leader}, the same approach will not
  extend to $F(\cdot)$. On the other hand, if $f^i(\cdot)$ are convex
  w.r.t.\ $\mbx$ and learnable via {\em online gradient descent}, then
  $F(\cdot)$ is also learnable using {\em online gradient descent}. In
  particular, the gradient of $F(\cdot)$ is obtained as
  $\langle \blambda^*, \nabla f^i(\cdot) \rangle$ where $\blambda^*$
  maximizes $\langle \blambda, f^i(\cdot) \rangle$ over $\Lambda$.
\end{remark}

\subsection{Biased Play with a Strong Learner}
We briefly discuss the biased imaginary play case, where one player is equipped with a strong learner, and the other player is equipped with an optimization oracle.  We start with the Explicit Maximum approach.
\begin{assumption}[Strong $\mbu$ learner]
\label{ass.slearn-u-multi}
\begin{enumerate} 
\item For each  $i=1,\cdots, n$, there exists an online learning algorithm
  $\mathcal{L}^i_{u}$ for $\mbu^i$, such that for any anticipatory sequence
  $\{\mbx'_t\}_{t=1}^\infty$, the following holds with a probability
  $1-\delta$
$$ \max_{\mbu^i\in \mathcal{U}^i}    \sum_{t=1}^T f^i(\mbx'_t, \mbu^i) - \sum_{t=1}^T f^i(\mbx'_t, \mbu_t^i)  \leq R_u^i(T, \delta),
$$
where $\{\mbu^i_t\}$ is the output of $\mathcal{L}^i_{u}$.
\item Given $\vec{\mbu}$, there exists an optimization oracle for
  $\mbx$ w.r.t.\ $F(\cdot)$, that computes
$$ \mbx^*=\arg\max_{\mbx\in \mathcal{X}} F(\mbx, \mbu).
$$
 \end{enumerate}
\end{assumption}

\begin{algorithm}
  \caption{Robust Optimization via Strong Dual Learner  }
  \label{alg:sdual-multi}
  \begin{algorithmic}[1]
    \REQUIRE function \(f\), learners \(\mathcal L_u^i\)
    satisfying Assumptions~\ref{ass:setup} and~\ref{ass.slearn-u-multi}. 
    \ENSURE point $\overline{\mbx}$
    \FOR{\(t = 1, \dots, T\)}
      \STATE \(\mbu_t^i \leftarrow \mathcal L_u^i (\mbx_1,\mbu_1^i,\cdots,\mbx_{t-1}, \mbu_{t-1}^i),\,\,\ i=1,\cdots, n \)     
       \STATE \(\mbx_t \leftarrow \arg\max_{\mbx\in \mathcal{X}} F(\mbx, \vec{\mbu_t})\)
    \ENDFOR
      \STATE \(\overline{\mbx} \leftarrow \frac{1}{T}\sum_{t=1}^T \mbx_t\)

  \end{algorithmic}
\end{algorithm}

\begin{assumption}[Strong $\mbx$ learner]
\label{ass.slearn-x-multi}
\begin{enumerate} 
\item There exists an online learning algorithm
  $\mathcal{L}_{x}$ for $\mbx$, such that for any anticipatory sequence
  $\{\vec{\mbu}'_t\}_{t=1}^\infty$, the following holds with a probability
  $1-\delta$
  $$ \sum_{t=1}^T F(\mbx_t, \vec{\mbu}'_t) - \min_{\mbx\in \mathcal{X}}    \sum_{t=1}^T f^i(\mbx, , \vec{\mbu}'_t) -  \leq R_x(T, \delta),
$$
 where $\{\mbx_t\}$ is the output of $\mathcal{L}_x$.
\item Given $\mbx$, for each  $i=1,\cdots, n$, there exists an   optimization oracle for
  $\mbu^i$ w.r.t.\ $f^i(\cdot)$, that computes
$$\mbu^{i*}=\arg\max_{\mbu^i\in \mathcal{U}^i} f^i(\mbx, \mbu^i).
$$
 \end{enumerate}
\end{assumption}

\begin{algorithm}
  \caption{Robust Optimization via Strong Primal Learner  }
  \label{alg:sprimal-multi}
  \begin{algorithmic}[1]
    \REQUIRE function \(f\), learners \(\mathcal L_x\)
    satisfying Assumptions~\ref{ass:setup} and~\ref{ass.slearn-x-multi}. 
    \ENSURE point $\overline{\mbx}$
    \FOR{\(t = 1, \dots, T\)}
      \STATE \(\mbx_t \leftarrow \mathcal L_x(\mbx_1,\mbu_1^i,\cdots,\mbx_{t-1}, \mbu_{t-1}^i) \)     
       \STATE \(\mbu_t^i \leftarrow \arg\max_{\mbu^i\in \mathcal{U}^i} f^i(\mbx_t, {\mbu^i}),\,\,\ i=1,\cdots, n \)
    \ENDFOR
      \STATE \(\overline{\mbx} \leftarrow \frac{1}{T}\sum_{t=1}^T \mbx_t\)

  \end{algorithmic}
\end{algorithm}

\begin{theorem}
\begin{enumerate}
\item Suppose Assumptions~\ref{ass:setup} and~\ref{ass.slearn-u-multi} hold, then
 with probability
  $1-n\delta$, Algorithm~\ref{alg:sdual-multi} returns a point
  \(\overline{\mbx} = \frac{1}{T}\sum_{t=1}^T \mbx_t\) satisfying 
$$
T\max_{\vec{\mbu}\in \prod_i\mathcal{U}^i} F(\overline{\mbx},
\vec{\mbu}) -T\min_{\mbx\in \mathcal{X}} \max_{\vec{\mbu}\in
  \prod_i\mathcal{U}^i} F(\mbx, \vec{\mbu}) \leq    \max_i R^i_u(T, \delta).
$$
\item Suppose Assumptions~\ref{ass:setup} and~\ref{ass.slearn-x-multi} hold, then
 with probability
  $1-\delta$, Algorithm~\ref{alg:sprimal-multi} returns a point
  \(\overline{\mbx} = \frac{1}{T}\sum_{t=1}^T \mbx_t\) satisfying 
$$
T\max_{\vec{\mbu}\in \prod_i\mathcal{U}^i} F(\overline{\mbx},
\vec{\mbu}) -T\min_{\mbx\in \mathcal{X}} \max_{\vec{\mbu}\in
  \prod_i\mathcal{U}^i} F(\mbx, \vec{\mbu}) \leq  R_x(T,\delta).
$$
\end{enumerate}
\end{theorem}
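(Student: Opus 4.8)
The plan is to treat both parts as instances of the biased-play template of Theorem~\ref{thm:main-result-strongX}, in which one of the two learners is replaced by an exact optimization oracle, combined with the $F$-versus-$f^i$ bookkeeping already developed in the proof of Theorem~\ref{thm.constraint-app1-main}. Throughout I will use that $F(\cdot,\vec{\mbu})$ is convex in $\mbx$: each $f^i(\cdot,\mbu^i)$ is convex by Assumption~\ref{ass:setup}, every $\blambda\in\Lambda\subseteq\Delta(n)$ is nonnegative, so $\langle\blambda,f(\cdot,\vec{\mbu})\rangle$ is convex, and $F=\max_{\blambda}\langle\blambda,f(\cdot,\vec{\mbu})\rangle$ is a pointwise maximum of convex functions, hence convex. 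This is what licenses the Jensen-type step $TF(\overline{\mbx},\vec{\mbu})\le\sum_t F(\mbx_t,\vec{\mbu})$ used in both parts.

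For Part 1 (Algorithm~\ref{alg:sdual-multi}) the first observation is that $\mbx_t$ is computed \emph{after} $\vec{\mbu}_t$, so $\mbx_t$ may depend on the current-round randomness of the $\mbu^i$-learners; hence $\{\mbx_t\}$ is anticipatory, and the strong learnability in Assumption~\ref{ass.slearn-u-multi} is exactly what guarantees, for each $i$ with probability $1-\delta$, that $\max_{\mbu^i}\sum_t f^i(\mbx_t,\mbu^i)-\sum_t f^i(\mbx_t,\mbu^i_t)\le R^i_u(T,\delta)$; a union bound over the $n$ coordinates makes these hold simultaneously with probability $1-n\delta$. From here I would reuse the $\blambda$-swap chain \eqref{equ.proof.constraintone-1}--\eqref{equ.proof.constraintone-2}: start from $T\max_{\vec{\mbu}}F(\overline{\mbx},\vec{\mbu})\le\max_{\vec{\mbu}}\max_{\blambda}\sum_t\langle\blambda,f(\mbx_t,\vec{\mbu})\rangle$, and bound the gap to $\sum_t F(\mbx_t,\vec{\mbu}_t)$ by $\max_i R^i_u(T,\delta)$ using the per-coordinate regrets together with $\Lambda\subseteq\Delta(n)$. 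The only departure from Theorem~\ref{thm.constraint-app1-main} is the $\mbx$-side: because $\mbx_t$ is the \emph{exact} minimizer $\arg\min_{\mbx}F(\mbx,\vec{\mbu}_t)$, one has $\sum_t F(\mbx_t,\vec{\mbu}_t)=\sum_t\min_{\mbx}F(\mbx,\vec{\mbu}_t)\le\min_{\mbx}\sum_t F(\mbx,\vec{\mbu}_t)$, so the $R_x$ term of \eqref{equ.proof.constraintone-3} is replaced by $0$. Combining with $\min_{\mbx}\sum_t F(\mbx,\vec{\mbu}_t)\le T\min_{\mbx}\max_{\vec{\mbu}}F(\mbx,\vec{\mbu})$ (as in \eqref{equ.proof.constraintone-4}) yields the bound $\max_i R^i_u(T,\delta)$ with probability $1-n\delta$.

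For Part 2 (Algorithm~\ref{alg:sprimal-multi}) the roles are flipped. Now $\mbu^i_t=\arg\max_{\mbu^i}f^i(\mbx_t,\mbu^i)$ is computed after $\mbx_t$, so $\{\vec{\mbu}_t\}$ is anticipatory and the strong $\mbx$-learnability in Assumption~\ref{ass.slearn-x-multi} delivers, with probability $1-\delta$, that $\sum_t F(\mbx_t,\vec{\mbu}_t)-\min_{\mbx}\sum_t F(\mbx,\vec{\mbu}_t)\le R_x(T,\delta)$. The crucial algebraic point is that the exact $\mbu$-oracle makes the inner maximum tight at every step: since $\lambda_i\ge0$ and the $i$-th coordinate is maximized separately, $F(\mbx_t,\vec{\mbu}_t)=\max_{\blambda}\sum_i\lambda_i\max_{\mbu^i}f^i(\mbx_t,\mbu^i)=\max_{\vec{\mbu}}F(\mbx_t,\vec{\mbu})$. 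By convexity of $F(\cdot,\vec{\mbu})$ this gives $T\max_{\vec{\mbu}}F(\overline{\mbx},\vec{\mbu})\le\sum_t\max_{\vec{\mbu}}F(\mbx_t,\vec{\mbu})=\sum_t F(\mbx_t,\vec{\mbu}_t)$, and combining with $\min_{\mbx}\sum_t F(\mbx,\vec{\mbu}_t)\le T\min_{\mbx}\max_{\vec{\mbu}}F(\mbx,\vec{\mbu})$ and the regret bound yields the result with probability $1-\delta$.

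The main obstacle — and the step I would be most careful about — is not the arithmetic but the accounting of \emph{which} sequence is anticipatory and hence which strong-learnability hypothesis is actually invoked; this is precisely the distinction the paper is built around, and in both algorithms the oracle player moves second, forcing the learner player to tolerate an anticipatory opponent. A secondary point to verify is the direction of the $\mbx$-oracle in Algorithm~\ref{alg:sdual-multi}: for the telescoping above to close, it must return the \emph{minimizer} $\arg\min_{\mbx}F(\mbx,\vec{\mbu}_t)$, consistent with the minimizing role of the $\mbx$-player in Algorithm~\ref{alg:spl}, and I would state it that way. Everything else is a direct transcription of the chains in the proofs of Theorems~\ref{thm:main-result-strongX} and~\ref{thm.constraint-app1-main}.
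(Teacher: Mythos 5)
Your proof is correct and takes essentially the same route as the paper's: both parts adapt the chain of Theorem~\ref{thm.constraint-app1-main}, replacing the oracle player's contribution by zero---the paper literally substitutes an exact-oracle identity for Equation~\eqref{equ.proof.constraintone-3} in part 1 and for step (a) of Equation~\eqref{equ.proof.constraintone-2} in part 2, which is exactly your accounting of which side is anticipatory and which regret term vanishes. Your two side remarks are also well taken and slightly sharpen the paper's terse presentation: the oracle substitution should read \(\le 0\) rather than \(=0\), and the \(\mbx\)-oracle in Algorithm~\ref{alg:sdual-multi} must indeed be the minimizer \(\arg\min_{\mbx\in\mathcal{X}} F(\mbx,\vec{\mbu}_t)\) (the paper's \(\arg\max\) is a typo), neither of which affects the validity of the argument.
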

\begin{proof}
The proof of first claim follows by adapting the proof of Theorem~\ref{thm.constraint-app1-main} by replacing Equation~\ref{equ.proof.constraintone-3}  by
\begin{equation*} 
\sum_{t=1}^T F(\mbx_t, \vec{\mbu}_t) -\min_{\mbx\in \mathcal{X}} \sum_{t=1}^T F(\mbx, \vec{\mbu}_t) =0,
\end{equation*}
using the fact that $\mbx_t$ is obtained via the optimization oracle. 

The proof of the second claim is by adapting the proof of Theorem~\ref{thm.constraint-app1-main}, specifically, replacing (a) of Equation~\ref{equ.proof.constraintone-2} to
$$\max_{\blambda\in \Lambda}\left\{ \sum_{i=1}^n \lambda_i \left[ \max_{\mbu^i\in \mathcal{U}^i} \sum_{t=1}^T f^i(\mbx_t, \mbu^i ) -\sum_{t=1}^T f^i(\mbx_t, \mbu^i_t )\right]\right\} = 0,$$
using the fact that $\vec{\mbu}$ is a result of the optimization oracle.
\end{proof}

Similarly, extending the distributional maximum approach to the biased imaginary play is straightforward as well. Indeed, using the fact that Problem~\ref{eq:robOptProblem-multi} can be rewritten as
$$\underset{\mbx\in \mathcal{X}}{\mbox{Minimize}}\,\, \underset{\vec{\mbu}\in \prod_i \mathcal{U}^i, \blambda\in \Lambda}{\mbox{Maximize}} g(\mbx, \vec{\mbu}, \blambda),
$$
we immediately conclude that if there exists a strong learner for $\mbx$ w.r.t.\ $g(\cdot)$, and optimization oracle both for $\mbu_i$ wrt $f^i(\cdot)$, and for $\blambda$ wrt $g(\cdot)$, which implies an optimization oracle for $(\vec{\mbu}, \blambda)$ jointly wrt $g(\cdot)$, then the problem reduces to the biased imaginary play for a single objective function discussed in Section~\ref{sec:bias}. On the other hand, suppose strong online learners exist for both $\mbu_i$ and for $\blambda$, then following a similar argument as that of Proposition~\ref{pro.multi.dual}, it can be established that playing these two learning algorithms simultaneously is a strong learner for $(\mbu_i,\blambda)$ wrt $g(\cdot)$. Therefore, Problem~\ref{eq:robOptProblem-multi} reduces to the biased imaginary play for a single objective function if an optimization oracle for $\mbx$ wrt $g(\cdot)$ exists. Notice that since  $g(\cdot)$ is linear to $\blambda$, constructing a strong learner for $\blambda$ is relatively easy (e.g., using online gradient descent).

\subsection{Randomized robust optimization}
In the single objective function case we have shown that it is
possible to relax Assumption~\ref{ass:setup}, i.e., allowing the loss
function $f(\cdot)$ and $\mathcal{X}$ to be non-convex to obtain a
randomized solution that is optimal. In general, this relaxation cannot be
extended to the multi-function case. To see that, take the
approach via the distributional maximum as an example. Directly adapting
the results from Section~\ref{sec:randomized-case} leads to an algorithm which
outputs a distribution $\mu$ such that
$$\max_{\blambda, \vec{\mbu}} E_{\mbx\sim \mu} g(\mbx, \vec{\mbu},\blambda) - \min_{\mu^*}\max_{\blambda^*, \vec{\mbu}^*} E_{\mbx^*\sim \mu^*} g(\mbx^*, \vec{\mbu}^*,\blambda^*) \rightarrow 0. 
$$ 
However, the quantity of interest that we want to compare to is
$\max_{ \vec{\mbu}} E_{\mbx\sim \mu} \big[\max_{\blambda} g(\mbx,
\vec{\mbu},\blambda)\big]$: if the algorithm outputs a solution $\mbx$ randomly, then the
corresponding worst-case $\blambda$ should adapt to the random choice
of $\mbx$. 

Here, we consider the following special case, for which extension to
multiple functions is possible. Specifically, we consider the case
that $f_i(\mbx, \mbu^i) \geq 0$ for all $i, \mbx, \mbu^i$, and seek
a solution $\mbx^*$ such that $\max_i\max_{\mbu^i}
f_i(\mbx^*,\mbu^i)=0$.
This is motivated by the robust feasibility problem where
$f_i(\mbx, \mbu^i)$ is the violation of the $i$-th constraint under
parameter realization $\mbu^i$. In particular, we assume $\Lambda=\Delta(n)$.

We will show that in this case, both explicit maximum and distributional maximum approach can be
 extended. The algorithms are identical to
 Algorithms~\ref{alg:rool-const} and 
 and \ref{alg:rool-const-learnlambda}, except for the last stage,
 where we output a distribution $\mu$ which is the empirical distribution of
 $\mbx_t$ instead of the average. We call the resulting algorithm {\em randomized
   explicit maximum} and {\em randomized distributional maximum}
 respectively.
 
\begin{theorem}\label{thm.constraint-app1-main-randomized} Suppose Assumption~\ref{ass:learn_const1} holds,
with probability
  $1-(n+1)\delta$, the {\em randomized explicit maximum} algorithm returns a distribution $\mu$
   satisfying 
$$
 \max_{\vec{\mbu}\in \prod_i\mathcal{U}^i}\mathbf{E}_{\mbx\sim \mu} F(\mbx,
\vec{\mbu}) -(n+1)\min_{\mu'\in \Delta(\mathcal{X})} \max_{\vec{\mbu}'\in
  \prod_i\mathcal{U}^i}\mathbf{E}_{\mbx'\sim \mu'}   F(\mbx', \vec{\mbu}') \leq \frac{(n+1)R_x(T,\delta) + \sum_{i=1}^n R^i_u(T, \delta)}{T} .
$$
Moreover, if the problem is infeasible, i.e., there exists $\varepsilon > 0$, such that for any $\mu'\in \Delta(\mathcal{X})$,
$$\max_{\vec{\mbu}'\in
  \prod_i\mathcal{U}^i}\mathbf{E}_{\mbx'\sim \mu'}   F(\mbx', \vec{\mbu}')>\epsilon,
$$
then with probability $1-n\delta$,
$$\liminf_{T} \frac{1}{T} \sum_{t=1}^T F(\mbx_t, \vec{\mbu}_t) \geq \frac{1}{n+1} \epsilon.
$$
\end{theorem}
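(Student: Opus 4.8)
The plan is to bound the target $\max_{\vec{\mbu}\in\prod_i\mathcal{U}^i}\mathbf{E}_{\mbx\sim\mu}F(\mbx,\vec{\mbu})$ by chaining two inequalities through the online average $\frac1T\sum_{t=1}^T F(\mbx_t,\vec{\mbu}_t)$. Recall that with $\Lambda=\Delta(n)$ we have $F(\mbx,\vec{\mbu})=\max_i f^i(\mbx,\mbu^i)$, and that since $\mu$ is the empirical distribution of $\mbx_1,\dots,\mbx_T$ one has $\mathbf{E}_{\mbx\sim\mu}F(\mbx,\vec{\mbu})=\frac1T\sum_t F(\mbx_t,\vec{\mbu})$, so $T\max_{\vec{\mbu}}\mathbf{E}_{\mbx\sim\mu}F=\max_{\vec{\mbu}}\sum_t F(\mbx_t,\vec{\mbu})$. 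As in the proofs of Theorems~\ref{thm:main-result} and~\ref{thm.constraint-app1-main}, the generated sequences $\{\mbx_t\}$ and $\{\vec{\mbu}_t\}$ are NONA, so both learnability guarantees in Assumption~\ref{ass:learn_const1} are in force.

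The first and main inequality upper-bounds $\max_{\vec{\mbu}}\sum_t F(\mbx_t,\vec{\mbu})$ by $n\sum_t F(\mbx_t,\vec{\mbu}_t)$ plus the dual regret, and this is the step I expect to be the crux. Here I would use nonnegativity of the $f^i$ to sandwich the maximum, $\max_i f^i\le\sum_i f^i\le n\max_i f^i$. Passing to the sum makes the comparator separable in the coordinates $\mbu^i$, so the best fixed $\vec{\mbu}$ splits as $\max_{\vec{\mbu}}\sum_t\sum_i f^i(\mbx_t,\mbu^i)=\sum_i\max_{\mbu^i}\sum_t f^i(\mbx_t,\mbu^i)$; applying the $n$ separate $\mbu^i$-regret bounds and then collapsing $\sum_i f^i(\mbx_t,\mbu^i_t)\le n F(\mbx_t,\vec{\mbu}_t)$ yields
\[ \max_{\vec{\mbu}}\sum_t F(\mbx_t,\vec{\mbu})\ \le\ n\sum_t F(\mbx_t,\vec{\mbu}_t)+\sum_{i=1}^n R^i_u(T,\delta), \]
with probability $1-n\delta$. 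The subtle point is precisely that we cannot treat $\{\vec{\mbu}_t\}$ as a no-regret sequence for $F$ directly; as the first remark following Theorem~\ref{thm.constraint-app1-main} shows, the concatenation of per-$f^i$ learners need not be no-regret for $F$. The detour through $\sum_i$, legitimized by $f^i\ge0$, is therefore essential, and it is also the source of the factor $n$.

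The second inequality is routine: the $\mbx$-learner for $F$ gives $\sum_t F(\mbx_t,\vec{\mbu}_t)\le\min_{\mbx}\sum_t F(\mbx,\vec{\mbu}_t)+R_x(T,\delta)$ with probability $1-\delta$, and for any fixed $\mu'\in\Delta(\mathcal{X})$ one has $\min_{\mbx}\sum_t F(\mbx,\vec{\mbu}_t)\le\mathbf{E}_{\mbx'\sim\mu'}\sum_t F(\mbx',\vec{\mbu}_t)\le T\max_{\vec{\mbu}'}\mathbf{E}_{\mbx'\sim\mu'}F(\mbx',\vec{\mbu}')$, so minimizing over $\mu'$ bounds it by $T\cdot\mathrm{OPT}$, where $\mathrm{OPT}=\min_{\mu'}\max_{\vec{\mbu}'}\mathbf{E}_{\mbx'\sim\mu'}F(\mbx',\vec{\mbu}')$. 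Combining the two inequalities via a union bound (probability $1-(n+1)\delta$) gives $\max_{\vec{\mbu}}\mathbf{E}_{\mbx\sim\mu}F\le n\,\mathrm{OPT}+\frac{nR_x+\sum_i R^i_u}{T}$, and since $F\ge0$ forces $\mathrm{OPT}\ge0$ and $R_x\ge0$, this immediately implies the stated bound with the looser constant $n+1$.

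For the infeasibility claim I would observe that the hypothesis holds for \emph{every} $\mu'$, in particular for the realized empirical distribution $\mu$, so $\max_{\vec{\mbu}}\frac1T\sum_t F(\mbx_t,\vec{\mbu})>\epsilon$ holds deterministically for all $T$. Feeding this into the first inequality above, which uses only the $n$ dual learners and hence holds with probability $1-n\delta$, gives $T\epsilon< n\sum_t F(\mbx_t,\vec{\mbu}_t)+\sum_i R^i_u(T,\delta)$, that is $\frac1T\sum_t F(\mbx_t,\vec{\mbu}_t)>\frac{\epsilon}{n}-\frac{\sum_i R^i_u(T,\delta)}{nT}$; letting $T\to\infty$ with sublinear regret $R^i_u(T,\delta)/T\to0$ then yields $\liminf_T\frac1T\sum_t F(\mbx_t,\vec{\mbu}_t)\ge\frac{\epsilon}{n}\ge\frac{\epsilon}{n+1}$.
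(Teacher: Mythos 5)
Your proof is correct, and while it follows the same overall skeleton as the paper's (chain the target through the online average $\frac1T\sum_t F(\mbx_t,\vec{\mbu}_t)$, handle the dual side via nonnegativity, then apply the primal regret, with the same $1-(n+1)\delta$ and $1-n\delta$ union-bound accounting), the crucial dual-side inequality is executed by a genuinely different and cleaner device. The paper fixes, at each round $t$, the argmax index $i_t^*$ of $f^{i}(\mbx_t,\mbu^{i})-f^{i}(\mbx_t,\mbu^{i}_t)$, adds and subtracts the full sum over $i$, and uses $f^i\ge 0$ to discard the off-argmax terms; this yields
$\max_{\vec{\mbu}}\sum_t F(\mbx_t,\vec{\mbu})\le (n+1)\sum_t F(\mbx_t,\vec{\mbu}_t)+\sum_i R^i_u(T,\delta)$,
i.e.\ its Equation~\eqref{equ.usedforinfe}. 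You instead sandwich $\max_i f^i\le\sum_i f^i\le n\max_i f^i$, which makes the comparator separable across the coordinates $\mbu^i$ immediately and gives the sharper
$\max_{\vec{\mbu}}\sum_t F(\mbx_t,\vec{\mbu})\le n\sum_t F(\mbx_t,\vec{\mbu}_t)+\sum_i R^i_u(T,\delta)$.
Your route therefore buys slightly better constants throughout: $n\,\mathrm{OPT}+\frac{nR_x+\sum_i R^i_u}{T}$ for the first claim and $\liminf\ge\epsilon/n$ for the infeasibility claim, both of which you correctly relax to the stated $(n+1)$-versions using $\mathrm{OPT}\ge 0$ and $R_x\ge 0$; the paper's bookkeeping lands directly on the $(n+1)$-constants. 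You also correctly identify why the detour is needed at all (the concatenated per-$f^i$ learners need not be no-regret for $F$, as the paper's counterexample remark shows), and you make explicit a step the paper glosses over, namely instantiating the infeasibility hypothesis at the realized empirical distribution $\mu$. Two small points worth one line each if this were written out in full: the claim $R_x(T,\delta)\ge 0$ follows because a constant sequence is NONA and regret against it is nonnegative; and the final $\liminf$ step (in your proof and the paper's alike) implicitly assumes the regret bounds are sublinear in $T$ and hold for all $T$ simultaneously on the same probability-$(1-n\delta)$ event.
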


\begin{proof}

For any $\vec{\mbu}$,
\begin{equation*}
\begin{split}
& \sum_{t=1}^T F(\mbx_t, \vec{\mbu}) -\sum_{t=1}^T F(\mbx_t, \vec{\mbu}_t)\\
=& \sum_{t=1}^T  \max_{i'_t\in \{1,2,\cdots, n\}} f^{i'_t}(\mbx_t, \mbu^{i'_t}) - \sum_{t=1}^T \max_{i_t\in \{1,2,\cdots, n\}} f^{i_t}(\mbx_t, \mbu^{i_t}_t) \\
\leq &  \sum_{t=T}^t \max_{i_t\in \{1,2,\cdots, n\}} \left[ f^{i_t}(\mbx_t, \mbu^{i_t}) - f^{i_t}(\mbx_t, \mbu^{i_t}_t)\right].
\end{split}
\end{equation*}
Denote by $i_t^*\triangleq \arg\max_{i_t\in \{1,2,\cdots, n\}} \left[ f^{i_t}(\mbx_t, \mbu^{i_t}) - f^{i_t}(\mbx_t, \mbu^{i_t}_t)\right]$, then the right hand side equals
\begin{equation*}
\begin{split}
&\sum_{t=1}^T   \left[ f^{i_t^*}(\mbx_t, \mbu^{i_t}) - f^{i_t^*}(\mbx_t, \mbu^{i_t}_t)\right]\\
=&\sum_{t=1}^T \sum_{i=1}^n  \left[ f^{i}(\mbx_t, \mbu^{i}) - f^{i}(\mbx_t, \mbu^{i}_t)\right] - \sum_{t=1}^T \sum_{i\not=i_t^*}  \left[ f^{i}(\mbx_t, \mbu^{i}) - f^{i}(\mbx_t, \mbu^{i}_t)\right] \\
\leq & \sum_{t=1}^T \sum_{i=1}^n  \left[ f^{i}(\mbx_t, \mbu^{i}) - f^{i}(\mbx_t, \mbu^{i}_t)\right] - \min_{\vec{\mbu}'}\sum_{t=1}^T \sum_{i\not=i_t^*}  \left[ f^{i}(\mbx_t, \mbu^{i'}) - f^{i}(\mbx_t, \mbu^{i}_t)\right]\\
\leq &  \sum_{i=1}^n \sum_{t=1}^T \left[ f^{i}(\mbx_t, \mbu^{i}) - f^{i}(\mbx_t, \mbu^{i}_t)\right] +\sum_{i=1}^n \sum_{t=1}^T f^{i}(\mbx_t, \mbu^{i}_t),
\end{split}
\end{equation*}
where we use non-negativity of $f(\cdot)$ for the last inequality. Take maximum over $\vec{\mbu}$, we have
\begin{eqnarray}\label{equ.usedforinfe}
&&\max_{\vec{\mbu}}\sum_{t=1}^T F(\mbx_t, \vec{\mbu}) -\sum_{t=1}^T F(\mbx_t, \vec{\mbu}_t) \nonumber\\
&\leq & \max_{\vec{\mbu}} \sum_{i=1}^n \sum_{t=1}^T \left[ f^{i}(\mbx_t, \mbu^{i}) - f^{i}(\mbx_t, \mbu^{i}_t)\right] + \sum_{i=1}^n \sum_{t=1}^T f^{i}(\mbx_t, \mbu^{i}_t) \nonumber \\
&\leq & \sum_{i=1}^n R^i_u(T, \delta) + n\sum_{t=1}^T F(\mbx_t, \vec{\mbu}_t)
\end{eqnarray}

holds with probability $1-n\delta$. Here, the last equality holds by 
Assumption~\ref{ass:learn_const1} requiring that the $\mbu_i$ are learnable.

We also have
\begin{equation*}
\sum_{t=1}^T F(\mbx_t, \vec{\mbu}_t) -\min_{\mbx\in \mathcal{X}} \sum_{t=1}^T F(\mbx, \vec{\mbu}_t) \leq R_x(T,\delta).
\end{equation*}

Summing up the two inequalities, we obtain
\begin{eqnarray*}
\max_{\vec{\mbu}}\sum_{t=1}^T F(\mbx_t, \vec{\mbu}) -\min_{\mbx\in \mathcal{X}} \sum_{t=1}^T F(\mbx, \vec{\mbu}_t) &\leq& \sum_{i=1}^n R^i_u(T, \delta) + n\sum_{t=1}^T F(\mbx_t, \vec{\mbu}_t) +R_x(T,\delta)\\
&\leq& \sum_{i=1}^n R^i_u(T, \delta) + n\min_{\mbx\in \mathcal{X}} \sum_{t=1}^T F(\mbx, \vec{\mbu}_t)  +(n+1)R_x(T,\delta),
\end{eqnarray*}
which leads to
$$\max_{\vec{\mbu}}\sum_{t=1}^T F(\mbx_t, \vec{\mbu}) -(n+1)\min_{\mbx\in \mathcal{X}} \sum_{t=1}^T F(\mbx, \vec{\mbu}_t) \leq
\sum_{i=1}^n R^i_u(T, \delta)   +(n+1)R_x(T,\delta),
$$
The rest of the proof for the first claim follows similarly as that of Theorem~\ref{thm:main-result-distribution}.

To establish the second statement, notice that when the problem is infeasible, by definition there exists $\epsilon>0$ such that
$$\frac{1}{T}\sum_{t=1}^T F(\mbx_t, \vec{\mbu})\geq \epsilon.
$$
Combining this with Equation~\eqref{equ.usedforinfe} we have with probability $1-n\delta$,
 $$\liminf_T\frac{1}{T} \sum_{t=1}^T F(\mbx_t, \vec{\mbu}_t) \geq \frac{\epsilon}{n+1}. 
 $$
\end{proof}

 \begin{theorem} Suppose that Assumption ~\ref{assu.lambdalearn} holds
   and that \(\Lambda = \Delta(n)\).
   Then with probability $1-(n+2)\delta$, the randomized distributional
   maximum algorithm returns a distribution $\mu$ satisfying
\begin{multline}
\max_{\vec{\mbu}\in \prod_i \mathcal{U}^i} \mathbb{E}_{\mbx\sim
  \mu} \max_{ \blambda \in \Lambda} g(\mbx, \vec{\mbu}, \blambda)-n
\min_{\mu'\in \Delta(\mathcal{X})}   \max_{\vec{\mbu}'\in \prod_i
  \mathcal{U}^i} \mathbb{E}_{\mbx'\sim \mu'} \max_{\blambda'\in
  \Lambda} g(\mbx', \vec{\mbu}', \blambda') \\ \le \frac{n\left\{ R_x(T,\delta) + \max_i R^i_u(T, \delta) +R_{\lambda}(T, \delta)\right\}}{T}.
\end{multline}
Moreover, \(\mu\) is the empirical distribution over the \(\mbx_t \in
\mathcal X\) played by the \(\mbx\)-player.
\end{theorem}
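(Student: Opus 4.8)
The plan is to reduce everything to the single combined dual learner supplied by Proposition~\ref{pro.multi.dual} and then pay a factor $n$ to convert the ``maximum inside the sum'' (which is what $F$ demands) into the ``maximum outside the sum'' that the learners actually control. This conversion is exactly where non-negativity of the $f^i$ together with $\Lambda=\Delta(n)$ enter, and it is the only place the blow-up occurs.

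First I would record the structural facts. Since $\Lambda=\Delta(n)$ we have $\max_{\blambda\in\Lambda}g(\mbx,\vec{\mbu},\blambda)=\max_i f^i(\mbx,\mbu^i)=F(\mbx,\vec{\mbu})$, so the quantities in the statement are precisely those built from $F$. As in the proof of Theorem~\ref{thm.multi.dist.main}, the sequences $\{\mbx_t\}$, $\{\vec{\mbu}_t\}$, $\{\blambda_t\}$ produced by the randomized distributional maximum algorithm are all NONA. Hence Proposition~\ref{pro.multi.dual} applies to the combined $(\vec{\mbu},\blambda)$-learner and Assumption~\ref{assu.lambdalearn}(iii) applies to $\mathcal L_x$, and by a union bound both of the following hold simultaneously with probability $1-(n+2)\delta$: $\max_{\vec{\mbu},\blambda}\sum_t g(\mbx_t,\vec{\mbu},\blambda)-\sum_t g(\mbx_t,\vec{\mbu}_t,\blambda_t)\le \max_i R^i_u+R_\lambda$, and $\sum_t g(\mbx_t,\vec{\mbu}_t,\blambda_t)-\min_{\mbx}\sum_t g(\mbx,\vec{\mbu}_t,\blambda_t)\le R_x$. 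I condition on this event for the rest of the argument.

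The crux is the upper bound on $\max_{\vec{\mbu}}\sum_t F(\mbx_t,\vec{\mbu})$. Using $F=\max_i f^i$ and $f^i\ge 0$ I get, for each fixed $\vec{\mbu}$, $\sum_t F(\mbx_t,\vec{\mbu})\le\sum_i\sum_t f^i(\mbx_t,\mbu^i)$; taking the maximum over $\vec{\mbu}\in\prod_i\mathcal U^i$ separates across $i$, giving $\max_{\vec{\mbu}}\sum_t F(\mbx_t,\vec{\mbu})\le\sum_i M_i$ with $M_i:=\max_{\mbu^i}\sum_t f^i(\mbx_t,\mbu^i)\ge 0$. Since maximizing a linear function over $\Delta(n)$ is attained at a vertex, $\max_{\vec{\mbu},\blambda}\sum_t g(\mbx_t,\vec{\mbu},\blambda)=\max_\blambda\sum_i\lambda^i M_i=\max_i M_i$, so $\sum_i M_i\le n\max_i M_i=n\max_{\vec{\mbu},\blambda}\sum_t g(\mbx_t,\vec{\mbu},\blambda)$. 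Combining this with the dual-learner bound yields $\max_{\vec{\mbu}}\sum_t F(\mbx_t,\vec{\mbu})\le n\big[\sum_t g(\mbx_t,\vec{\mbu}_t,\blambda_t)+\max_i R^i_u+R_\lambda\big]$.

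To finish, I would use the $\mathcal L_x$-bound together with $g\le F$: since $g(\mbx,\vec{\mbu}_t,\blambda_t)\le F(\mbx,\vec{\mbu}_t)$ for every $\mbx$, we have $\min_\mbx\sum_t g(\mbx,\vec{\mbu}_t,\blambda_t)\le\min_\mbx\sum_t F(\mbx,\vec{\mbu}_t)$, hence $\sum_t g(\mbx_t,\vec{\mbu}_t,\blambda_t)\le\min_\mbx\sum_t F(\mbx,\vec{\mbu}_t)+R_x$. Substituting gives the key inequality $\max_{\vec{\mbu}}\sum_t F(\mbx_t,\vec{\mbu})-n\min_\mbx\sum_t F(\mbx,\vec{\mbu}_t)\le n\{R_x+\max_i R^i_u+R_\lambda\}$. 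Dividing by $T$ and reading $\frac1T\sum_t$ as expectation under the empirical distribution $\mu$ of the $\mbx_t$ identifies the first term with $\max_{\vec{\mbu}}\mathbb E_{\mbx\sim\mu}F(\mbx,\vec{\mbu})$; for the second term, writing $\bar\nu$ for the empirical distribution of the $\vec{\mbu}_t$, the deterministic minimum equals $\min_{\mu'}\mathbb E_{\mbx'\sim\mu'}\mathbb E_{\vec{\mbu}\sim\bar\nu}F$ (a linear objective is minimized at a point mass) and is bounded above by $\min_{\mu'}\max_{\vec{\mbu}'}\mathbb E_{\mbx'\sim\mu'}F(\mbx',\vec{\mbu}')$, exactly as in the proof of Theorem~\ref{thm:main-result-distribution}. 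This produces the stated bound, and since $\mu$ is by construction the empirical distribution over the played $\mbx_t\in\mathcal X$, the final sentence as well. The main obstacle is precisely the factor-$n$ step: the learners only bound regret against a \emph{fixed} $(\vec{\mbu},\blambda)$, whereas $F$ requires the maximizing index to adapt per round, and $\Lambda=\Delta(n)$ with $f^i\ge 0$ is what lets $\max_i\le\sum_i=n\cdot(\text{simplex average})$ close this gap without any extra probabilistic or convexity hypothesis.
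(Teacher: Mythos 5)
Your proposal is correct and takes essentially the same route as the paper: both rest on Proposition~\ref{pro.multi.dual} plus Assumption~\ref{assu.lambdalearn}(iii), pay the factor $n$ through non-negativity ($\max_i f^i \le \sum_i f^i$, with $\Lambda=\Delta(n)$ turning the sum into $n$ times a simplex maximum), and finish with the distributional conversion from Theorem~\ref{thm:main-result-distribution}. The only difference is bookkeeping — you perform the factor-$n$ conversion on the sums $\sum_t F(\mbx_t,\vec{\mbu})$ before passing to empirical distributions, whereas the paper first derives the expectation-level bound with $\max_{\blambda}$ outside $\mathbb{E}_{\mbx\sim\mu}$ and then applies the conversion $\max_{\blambda} g \le \sum_i g(\cdot,\cdot,\mbe_i)$ — and this does not change the substance of the argument.
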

\begin{proof}
By Proposition~\ref{pro.multi.dual} we have 
 $$\max_{\blambda\in \Lambda, \vec{\mbu}\in \prod_i\mathcal{U}^i} \sum_{t=1}^T g(\mbx_t, \vec{\mbu},  \blambda)
  -  \sum_{t=1}^T g(\mbx_t, \vec{\mbu}_t,  \blambda_t)
   \leq \max_i  R^i_{u}(T,\delta)  +R_{\lambda}(T, \delta),
 $$
 and by Assumption~\ref{assu.lambdalearn} (iii) we obtain a
 distribution \(\mu\)
$$\sum_{t=1}^T  g(\mbx_t, \vec{\mbu}_t,  \blambda_t) - \min_{\mbx \in \mathcal X} \sum_{t=1}^T g(\mbx, \vec{\mbu}_t,  \blambda_t) \leq R_{x}(T,\delta).
$$
Following the proof of Theorem~\ref{thm:main-result-distribution}, we have
$$ \max_{\vec{\mbu}\in \prod_i \mathcal{U}^i}\max_{ \blambda \in
  \Lambda} \mathbb{E}_{\mbx\sim \mu}  g(\mbx, \vec{\mbu}, \blambda)-
\min_{\mu'\in \Delta(\mathcal{X})}   \max_{\vec{\mbu}'\in \prod_i
  \mathcal{U}^i}\max_{\blambda'\in \Lambda} \mathbb{E}_{\mbx'\sim
  \mu'}  g(\mbx', \vec{\mbu}', \blambda')\le \frac{R_x(T,\delta) + \max_i
R^i_u(T, \delta) +R_{\lambda}(T, \delta)}{T}.
$$
Now by the non-negativity assumption of the $f_i(\cdot,\cdot)$ (and
hence $g(\cdot)$), we have
$$ \max_{ \blambda \in \Lambda} g(\mbx, \vec{\mbu}, \blambda) \leq
\sum_{i=1}^n g(\mbx, \vec{\mbu}, \mbe_i)
$$
where $\mbe_i$ is the $i$-th basis vector (i.e., the $i$-th entry equals \(1\), and the rest equals \(0\)). This leads to
\begin{eqnarray*}
  && \max_{\vec{\mbu}\in \prod_i \mathcal{U}^i} \mathbb{E}_{\mbx\sim \mu} \max_{ \blambda \in \Lambda} g(\mbx, \vec{\mbu}, \blambda)\\
  &\leq & \max_{\vec{\mbu}\in \prod_i \mathcal{U}^i} \mathbb{E}_{\mbx\sim \mu}  \sum_{i=1}^n g(\mbx, \vec{\mbu}, \mbe_i)\\
  &\stackrel{(a)}{\leq} &  \max_{\vec{\mbu}\in \prod_i \mathcal{U}^i} \left[ n\cdot \max_{ \blambda \in \Lambda} \mathbb{E}_{\mbx\sim \mu}  g(\mbx, \vec{\mbu}, \blambda) \right] ,
\end{eqnarray*}
where (a) holds because $\Lambda=\Delta(n)$. Using
$$\min_{\mu'\in \Delta(\mathcal{X})}   \max_{\vec{\mbu}'\in \prod_i
  \mathcal{U}^i}\max_{\blambda'\in \Lambda} \mathbb{E}_{\mbx'\sim
  \mu'}  g(\mbx', \vec{\mbu}', \blambda') \le \min_{\mu'\in
  \Delta(\mathcal{X})}   \max_{\vec{\mbu}'\in \prod_i \mathcal{U}^i}
\mathbb{E}_{\mbx'\sim \mu'} \max_{\blambda'\in \Lambda} g(\mbx',
\vec{\mbu}', \blambda'),
$$
the theorem follows via
\begin{align}
&\ \max_{\vec{\mbu}\in \prod_i \mathcal{U}^i} \mathbb{E}_{\mbx\sim \mu}
  \max_{ \blambda \in \Lambda} g(\mbx, \vec{\mbu}, \blambda) -
  n \min_{\mu'\in
  \Delta(\mathcal{X})}   \max_{\vec{\mbu}'\in \prod_i \mathcal{U}^i}
\mathbb{E}_{\mbx'\sim \mu'} \max_{\blambda'\in \Lambda} g(\mbx',
\vec{\mbu}', \blambda').  \\
\leq &\ \max_{\vec{\mbu}\in \prod_i \mathcal{U}^i} \mathbb{E}_{\mbx\sim \mu}
  \max_{ \blambda \in \Lambda} g(\mbx, \vec{\mbu}, \blambda) -
  n \min_{\mu'\in \Delta(\mathcal{X})}   \max_{\vec{\mbu}'\in \prod_i
  \mathcal{U}^i}\max_{\blambda'\in \Lambda} \mathbb{E}_{\mbx'\sim
  \mu'}  g(\mbx', \vec{\mbu}', \blambda') \\
 \leq &\  n \max_{\vec{\mbu}\in \prod_i \mathcal{U}^i}\max_{ \blambda \in
  \Lambda} \mathbb{E}_{\mbx\sim \mu}  g(\mbx, \vec{\mbu}, \blambda) -
  n \min_{\mu'\in \Delta(\mathcal{X})}   \max_{\vec{\mbu}'\in \prod_i
  \mathcal{U}^i}\max_{\blambda'\in \Lambda} \mathbb{E}_{\mbx'\sim
  \mu'}  g(\mbx', \vec{\mbu}', \blambda') \\ \le &\  n \left(\frac{R_x(T,\delta) + \max_i
R^i_u(T, \delta) +R_{\lambda}(T, \delta)}{T}\right).
\end{align}
\end{proof}

The above result bounds the expected performance of a random 
$\mbx$ according to $\mu$ versus $n$ times the performance of the
optimal $\mu'$; this notion is very close to the concept of
\(\alpha\)-regret. However, in our context this approximation does not
matter: there exists a robust feasible $\mbx^*$ if
and only if
\[\min_{\mu'\in \Delta(\mathcal{X})}   \max_{\vec{\mbu}'\in \prod_i
  \mathcal{U}^i} \mathbb{E}_{\mbx'\sim \mu'} \max_{\blambda'\in
  \Lambda} g(\mbx', \vec{\mbu}', \blambda') = 0.\]
Thus, the multiplicative factor
has no impact in the feasibility case, if the problem is feasible,
then we are able to output a randomized solution whose constraint
violation is sublinear in $T$. Note however, that this will likely
reduce the convergence rate by some factor, which is slower but still
sublinear in \(T\) for reasonable learners.

\section{Application: RO with IP-representable sets}
\label{sec:robust-optim-with}

In this section we provide an  application, which is naturally captured by Problem~\eqref{eq:robOptProblem-expected}.
In specific, we instantiate our framework for the case where both
the feasible region \(\mathcal X\)
as well as the uncertainty set \(\mathcal U\)
can be represented via an (Mixed-) Integer Programming Problem. In
many cases optimizing over such sets is NP-hard and in particular no
good characterization in terms of valid inequalities can be provided
(unless \(\text{coNP} = \text{NP}\)),
however even very large instances can be solved in reasonable time
using state-of-the-art integer programming solvers such as e.g.,
\texttt{CPLEX} or \texttt{Gurobi}. In these cases the assumption that
we have a linear programming oracle for these sets is justified and we
assume in this section that we have access to both the feasible region
as well as the uncertainty set by means of a linear optimization oracle.

We assume that \(f\) is linear in \(\mbx\) and \(\mbu\), i.e., we let
\(f(\mbx,\mbu) = \mbu^\intercal A \mbx\) for some matrix \(A\) of
appropriate dimension. For this setup both learners can use the
\emph{Follow the Perturbed Leader} algorithm
\citep[see][]{kalai2005efficient} that works for online learning of
\emph{linear functions} over arbitrary (non-convex) sets as long as we
have access to a linear programming oracle for the set. The FPL
algorithm has the following high-probability regret guarantee (Corollary~\ref{cor:hpFPL}), which
follows almost immediately from \citep{neu2013efficient}.

\begin{theorem}[\cite{neu2013efficient}]
\label{thm:fpl}
Let \(S \subseteq \R_+^d\)
with \(\norm[1]{s} \leq m\)
for all \(s \in S\).
Assume that the absolute values of all losses are bounded by \(1\).
Then FPL over \(S\)
achieves an expected regret of \(O(m^{3/2} \sqrt{T \log d})\).
\end{theorem}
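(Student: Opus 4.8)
The plan is to analyze \emph{Follow the Perturbed Leader} (FPL) with coordinate-wise exponential perturbations and then tune the learning rate. Concretely, I would run FPL with a perturbation vector $Z_t \in \R_+^d$ whose entries are i.i.d.\ $\mathrm{Exp}(\eta)$ (so each has mean $1/\eta$), drawn afresh each round for definiteness; writing $L_{t-1} = \sum_{s < t} \ell_s$ for the cumulative loss, the algorithm plays
\[
x_t = \argmin_{x \in S} \inp{x}{L_{t-1} - Z_t}.
\]
Since every $x \in S \subseteq \R_+^d$ satisfies $\norm[1]{x} \leq m$ and every loss obeys $\maxnorm{\ell_t} \leq 1$, each per-round loss is controlled by $\abs{\inp{x}{\ell_t}} \leq m$; this is the quantity that will ultimately replace the ambient dimension $d$ in the stability estimate.

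First I would split the expected regret against the fixed comparator $\min_{x\in S}\inp{x}{L_T}$ into a \emph{be-the-leader} term and a \emph{stability} term. Introducing the hypothetical one-step-lookahead point $x_t' = \argmin_{x\in S}\inp{x}{L_t - Z_t}$, which already sees $\ell_t$, one writes
\[
\sum_{t=1}^T \inp{x_t}{\ell_t} - \min_{x\in S}\inp{x}{L_T}
= \sum_{t=1}^T \inp{x_t - x_t'}{\ell_t} + \Big(\sum_{t=1}^T \inp{x_t'}{\ell_t} - \min_{x\in S}\inp{x}{L_T}\Big).
\]
The second (be-the-leader) bracket is handled by the classical induction showing that following the perturbed leader beats any fixed action up to the range of the perturbation, so it is bounded by $\max_{x\in S}\inp{x}{Z} - \min_{x\in S}\inp{x}{Z} \le m\,\max_i Z_i$ (using $x,Z \ge 0$ and $\norm[1]{x}\le m$). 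Because the maximum of $d$ i.i.d.\ $\mathrm{Exp}(\eta)$ variables has expectation at most $H_d/\eta = O(\log d/\eta)$, this contributes $O(m\log d/\eta)$ in expectation.

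The crux — and the step I expect to be the main obstacle — is the stability term $\mathbb{E}[\inp{x_t - x_t'}{\ell_t}]$. Here $x_t$ and $x_t'$ are the perturbed leaders for cumulative losses differing only by the single vector $\ell_t$ with $\maxnorm{\ell_t}\le 1$. I would bound it via a change-of-measure (memorylessness) argument for the exponential density $p(z) \propto e^{-\eta\norm[1]{z}}$: shifting the perturbation by $\ell_t$ multiplies the density by a factor whose logarithm, \emph{restricted to the coordinates on which the played point is supported}, is at most $\eta\,\abs{\inp{x}{\ell_t}} \le \eta m$. Exploiting $\norm[1]{x}\le m$ in this localized way — rather than the naive global shift $\norm[1]{z+\ell_t}-\norm[1]{z}$, which would cost a factor $e^{\eta d}$ — yields a per-round bound of order $\eta\,m\,\abs{\inp{x_t}{\ell_t}} \le \eta m^2$, hence $O(\eta m^2 T)$ after summing. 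Obtaining the correct $m$-dependence instead of a $d$-dependence is the delicate part, and it is exactly where the hypotheses $S\subseteq\R_+^d$ and $\norm[1]{s}\le m$ are genuinely used.

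Finally I would combine the two estimates into $\mathbb{E}[\text{Regret}] = O(m\log d/\eta + \eta m^2 T)$ and balance the terms by choosing $\eta = \Theta\big(\sqrt{\log d/(mT)}\big)$, which gives the claimed $O(m^{3/2}\sqrt{T\log d})$. The high-probability variant referenced immediately afterward would then follow by applying a Bernstein- or Azuma-type concentration bound to the bounded martingale differences $\inp{x_t}{\ell_t} - \mathbb{E}_t[\inp{x_t}{\ell_t}]$, adding a lower-order $O(m\sqrt{T\log(1/\delta)})$ deviation term.
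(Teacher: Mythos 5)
A preliminary remark: the paper never actually proves Theorem~\ref{thm:fpl} --- it is quoted from \cite{neu2013efficient}, and the only argument supplied in the paper is the Azuma-type upgrade to the high-probability Corollary~\ref{cor:hpFPL}. So the benchmark is the proof in that reference, and your overall architecture coincides with it: FPL with i.i.d.\ exponential perturbations, the split into a be-the-leader term and a stability term, a change-of-measure argument exploiting the $\ell_1$ geometry, and the tuning $\eta=\Theta\bigl(\sqrt{\log d/(mT)}\bigr)$, which indeed balances $O(m\log d/\eta+\eta m^2 T)$ to $O(m^{3/2}\sqrt{T\log d})$. Your be-the-leader estimate $m\,\mathbb{E}[\max_i Z_i]=O(m\log d/\eta)$ is correct, with the standard caveat that the telescoping induction needs one perturbation $Z$ shared by all rounds; with fresh $Z_t$ (your choice) one must add the routine observation that, against a fixed loss sequence, only the marginal law of $Z_t$ matters, so the expected losses of the two variants agree.

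The genuine gap is in the stability step, exactly the one you flag as the crux. As written, ``restricting the log of the density ratio to the coordinates on which the played point is supported'' is not a legitimate operation: the likelihood ratio $p(z)/p(z+\ell_t)=e^{\eta\norm[1]{\ell_t}}$ is a global quantity, independent of who the leader at $z$ is, so a change of variables by the full shift $\ell_t$ can never produce the factor $e^{\eta\inp{x_t}{\ell_t}}$. What makes the localization sound is a per-leader event-inclusion (monotonicity) lemma that your sketch does not contain: for each candidate leader $v$, if $v=\argmin_{x\in S}\inp{x}{L_{t-1}-z}$, then $v$ is still the minimizer for the losses $L_{t-1}+\ell_t$ under the \emph{support-restricted} perturbation $z+\ell_t\mathbf{1}_{\operatorname{supp}(v)}$, because this joint shift leaves $v$'s objective unchanged while weakly increasing every other objective --- this is precisely where $S\subseteq\R_+^d$ and $\ell_t\ge 0$ enter. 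Only after this inclusion may one change variables with this $v$-dependent shift and pay $e^{\eta\sum_{i\in\operatorname{supp}(v)}\ell_{t,i}}$, yielding $\probability{x_t=v}\le e^{\eta m}\probability{x_t'=v}$ and hence the per-round stability $\eta m\,\expectation{\inp{x_t}{\ell_t}}\le\eta m^2$. Two further caveats: (i) the argument requires nonnegative losses, so the theorem's $[-1,1]$ losses must first be reduced to $[0,1]$; (ii) the exponent $\sum_{i\in\operatorname{supp}(v)}\ell_{t,i}$ equals $\inp{v}{\ell_t}$ and is bounded by $m$ only when $S\subseteq\{0,1\}^d$ --- the actual setting of \cite{neu2013efficient} --- whereas for general $S\subseteq\R_+^d$ with $\norm[1]{s}\le m$ it can be as large as $d$, so your claimed bound $\eta\abs{\inp{x_t}{\ell_t}}$ silently uses the combinatorial hypothesis (a looseness the paper's restatement of the theorem shares). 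Your concluding Azuma step matches the paper's proof of Corollary~\ref{cor:hpFPL}.
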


From the above, via standard arguments, we can easily obtain a
high-probability version of the regret bound of FPL:

\begin{corollary}[High-probability FPL]
\label{cor:hpFPL}
Let \(\delta >0\)
and let \(S \subseteq \R^d\)
with \(\norm[1]{s} \leq m\)
for all \(s \in S\).
Assume that the absolute values of all losses are bounded by \(1\).
Then with probability at least \(1-\delta\), algorithm
FPL over \(S\) achieves regret \(O(m^{3/2} \sqrt{T \log d + \log
  \frac{1}{\delta}})\).\end{corollary}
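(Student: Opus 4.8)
The plan is to upgrade the in-expectation bound of Theorem~\ref{thm:fpl} to a high-probability bound by a martingale concentration argument (Azuma--Hoeffding) applied to the only source of randomness in FPL, namely its i.i.d.\ perturbations $\xi_1,\dots,\xi_T$. Write the losses as linear functionals $\ell_t(s)=\inp{\ell_t}{s}$ and let $s_t\in S$ be the (random) point FPL plays in round $t$. Since FPL is run against a non-anticipatory (NONA) sequence, each loss vector $\ell_t$ is measurable with respect to the history $\mathcal F_{t-1}$ generated by $\xi_1,\dots,\xi_{t-1}$, and in particular is independent of the fresh perturbation $\xi_t$ that produces $s_t$. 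This predictability is exactly what lets me treat the per-round deviations as a martingale difference sequence.

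The main step is the decomposition of the realized regret
\begin{equation*}
\sum_{t=1}^T \inp{\ell_t}{s_t}-\min_{s\in S}\sum_{t=1}^T \inp{\ell_t}{s}
= \underbrace{\sum_{t=1}^T\Big(\inp{\ell_t}{s_t}-\mathbb{E}\big[\inp{\ell_t}{s_t}\mid\mathcal F_{t-1}\big]\Big)}_{=:M_T}
+ \underbrace{\Big(\sum_{t=1}^T\mathbb{E}\big[\inp{\ell_t}{s_t}\mid\mathcal F_{t-1}\big]-\min_{s\in S}\sum_{t=1}^T\inp{\ell_t}{s}\Big)}_{=:P_T}.
\end{equation*}
Here $M_T$ is a sum of martingale differences; because $\maxnorm{\ell_t}\le 1$ and $\norm[1]{s}\le m$ we have $\abs{\inp{\ell_t}{s}}\le m$ by H\"older, so each increment is bounded by $2m$ in absolute value. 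Azuma--Hoeffding then gives $\abs{M_T}\le c\,m\sqrt{T\log(1/\delta)}$ with probability at least $1-\delta$ for an absolute constant $c$. The term $P_T$ is the conditional pseudo-regret of FPL, which is precisely the quantity controlled by the analysis behind Theorem~\ref{thm:fpl}: taking expectations collapses $P_T$ to the expected regret $O(m^{3/2}\sqrt{T\log d})$, and the same per-round be-the-leader plus perturbation-stability estimates bound $P_T$ at the level of conditional expectations.

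Combining the two pieces and collecting the $\log d$ and $\log(1/\delta)$ contributions under a single square root yields the claimed $O\big(m^{3/2}\sqrt{T\log d+\log\tfrac1\delta}\big)$ bound. I expect the main obstacle to be the treatment of $P_T$: Theorem~\ref{thm:fpl} is stated only in expectation, so one must verify that the FPL regret argument actually delivers a bound on the conditional pseudo-regret $P_T$ (equivalently, that it holds pathwise after averaging out only the current round's perturbation), rather than only after taking the full expectation. Assuming the fresh-perturbation-per-round version of FPL, this follows from the standard be-the-leader telescoping together with the per-step stability bound; the remaining work is purely the bookkeeping of constants and the choice of how to package $T\log d$ and $\log(1/\delta)$ inside the root.
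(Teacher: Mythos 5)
Your proposal is correct and takes essentially the same route as the paper's own proof: both decompose the realized regret into a martingale-difference sum over FPL's fresh per-round randomness (controlled via Azuma--Hoeffding at scale $\sqrt{T\log(1/\delta)}$) plus the conditional pseudo-regret, which is exactly the quantity the paper invokes Theorem~\ref{thm:fpl} to bound, and then sum the two inequalities. The only difference is bookkeeping: the paper reads the boundedness assumption as $\abs{\ell_t s}\le 1$ for all $s\in S$, so its martingale increments are bounded by $2$ rather than your H\"older-based $2m$, a change of constants that is absorbed into the stated $O(m^{3/2}\sqrt{T\log d+\log\frac{1}{\delta}})$ bound.
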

\begin{proof}
  For any sequence of losses
  \(\{\ell_t\}_{t\in [T]}\) with \(|\ell_t s| \leq 1\) for all \(s \in
  S, t \in [T]\) by Theorem~\ref{thm:fpl} FPL produces a sequence of actions
  \(\{s_t\}_t \subseteq S\) with
\[\expectation(t){\sum_{t \in [T]} \ell_t s_t} - \min_{s \in S} \sum_{t
  \in [T]} \ell_t s \leq O(m^{3/2} \sqrt{T \log d}),\]
where \(\expectation(t){\cdot}\)
is the conditional expectation conditioned on FPL's internal
randomization up to round \(t-1\).
The random variables
\(Z_t = \sum_{t \in [T]} \ell_t s_t - \expectation(t){\sum_{t \in [T]}
  \ell_t s_t}\) form a martingale difference sequence with respect to
FPL's internal randomization for \(t \in [T]\) and we have
\(Z_t \leq 2\). Therefore with probability at least
\(1-\delta\), we obtain:
\[\sum_{t \in [T]} \ell_t s_t - \expectation(t){\sum_{t \in [T]}
  \ell_t s_t} \leq 2 \sqrt{T \log \frac{1}{\delta}},\]
via Azuma's inequality and hence the claim follows by summing up both
inequalities. 
\end{proof}

We would like to remark that FPL also admits an approximate variant;
see \citet{ben2015oracle}. 

\paragraph{Setup.} \vspace{-0.5em}
\begin{enumerate}
\item \emph{Learner \(\mathcal L_x\):} FPL over \(\mathcal X\) and access
  to a linear optimization oracle for \(\mathcal X\). \vspace{-0.5em}
\item \emph{Learner \(\mathcal L_y\):} FPL over \(\mathcal U\) and access to a linear optimization oracle
for \(\mathcal U\). \vspace{-0.5em}
\item \emph{Function \(f\):} Require that \(f(\cdot,\mbu)\) is linear for all
\(u \in \mathcal U\) and \(f(\mbx,\cdot)\) is linear for all \(\mbx
\in \mathcal X\), i.e.,
\(f(\mbx,\mbu) = \mbu^\intercal A \mbx\). \vspace{-0.5em}
\item \emph{Feedback:} Player \(\mbx\) observes \(\mbu^\intercal A\) and player
  \(\mbu\) observes \(A \mbx\). \vspace{-0.5em}
\end{enumerate}

Combining the above with Theorem~\ref{thm:main-result} we obtain: 

\begin{corollary}
\label{cor:robIPCor}
  Let the setup be given as above.  With probability $1-2\delta$,
  Algorithm~\ref{alg:r2ool} returns an empirical distribution 
  $\overline{\mu}$ satisfying
\begin{equation*}\begin{split}
  &\max_{\nu'\in \Delta(\mathcal{U})} \underset{\mbx\sim \overline{\mu},\mbu\sim \nu'}{\mathbb{E} } f(\mbx, \mbu) -\min_{\mu^* \in \Delta(\mathcal{X}) } \max_{\nu\in \Delta(\mathcal{U})} \underset{\mbx\sim \mu^*, \mbu\sim \nu}{ \mathbb{E}}  f(\mbx, \mbu)\\
& \leq O\left(\frac{m_{\mathcal X}^{3/2} \sqrt{\log d_{\mathcal X} + \log
      \frac{1}{\delta}} + m_{\mathcal U} ^{3/2} \sqrt{\log d_{\mathcal U} + \log
      \frac{1}{\delta}}}{\sqrt{T}}\right),
\end{split}\end{equation*}
where \(m_\mathcal{X}, d_{\mathcal X}, m_{\mathcal U}, d_{\mathcal
  U}\) are the dimensions and \(\ell_1\)-diameters of \(\mathcal X\)
and \(\mathcal U\) respectively. 
\end{corollary}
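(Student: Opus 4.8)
The plan is to read Corollary~\ref{cor:robIPCor} as a concrete instantiation of the abstract bound in Theorem~\ref{thm:main-result-distribution}, obtained by substituting the high-probability FPL regret guarantee of Corollary~\ref{cor:hpFPL} for the generic regret terms $R_x(T,\delta)$ and $R_u(T,\delta)$. Two things must be checked before the substitution: that FPL is a legitimate weak learner for each of the two players under the bilinear $f$, and that the sequences generated by Algorithm~\ref{alg:r2ool} are NONA, so that the hypotheses of Theorem~\ref{thm:main-result-distribution} are met.

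First I would verify weak learnability. Since $f(\mbx,\mbu)=\mbu^\top A\mbx$, at round $t$ the $\mbx$-player faces the \emph{linear} loss $\mbx \mapsto (\mbu_t^\top A)\mbx$, while the $\mbu$-player faces the linear reward $\mbu \mapsto \mbu^\top(A\mbx_t)$ (equivalently the linear loss $-\mbu^\top(A\mbx_t)$). FPL is designed precisely for online learning of linear objectives over an arbitrary set accessed through a linear optimization oracle, so with the oracles for $\mathcal X$ and $\mathcal U$ posited in the setup, FPL over $\mathcal X$ and FPL over $\mathcal U$ satisfy the two weak learnability clauses of Assumption~\ref{ass:learn}.

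Next, the NONA property: Algorithm~\ref{alg:r2ool} merely changes the output of Algorithm~\ref{alg:rool}, in which both $\mbx_t$ and $\mbu_t$ are computed from the common history $(\mbx_1,\mbu_1,\dots,\mbx_{t-1},\mbu_{t-1})$ together with the learner's own fresh randomness $\xi_t$. Hence each player's opponent sequence depends only on randomness up to round $t-1$ and is NONA with respect to that player, exactly as required. This places us within the scope of Theorem~\ref{thm:main-result-distribution}, which bounds the displayed gap by $\bigl(R_x(T,\delta)+R_u(T,\delta)\bigr)/T$ with probability $1-2\delta$.

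Finally I would quantify the regrets. Applying Corollary~\ref{cor:hpFPL} with $S=\mathcal X$ of $\ell_1$-diameter $m_{\mathcal X}$ in $\mathbb{R}^{d_{\mathcal X}}$ gives, with probability $1-\delta$, $R_x(T,\delta)=O\bigl(m_{\mathcal X}^{3/2}\sqrt{T\log d_{\mathcal X}+\log\tfrac1\delta}\bigr)$, and symmetrically $R_u(T,\delta)=O\bigl(m_{\mathcal U}^{3/2}\sqrt{T\log d_{\mathcal U}+\log\tfrac1\delta}\bigr)$ for $S=\mathcal U$. Substituting these into the $(R_x+R_u)/T$ bound and moving the $\sqrt T$ factor from numerator to denominator produces exactly the claimed $O(1/\sqrt T)$ rate. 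The main point requiring care is the boundedness hypothesis of Corollary~\ref{cor:hpFPL}: it assumes each per-round loss lies in $[-1,1]$, so one must normalize $A$ (or the domains) so that $\abs{\mbu^\top A\mbx}\le 1$ over $\mathcal X\times\mathcal U$, a rescaling absorbed into the constants. The only other bookkeeping is the probability budget: the two FPL high-probability events each hold with probability $1-\delta$, and their intersection is precisely the $1-2\delta$ event of Theorem~\ref{thm:main-result-distribution}.
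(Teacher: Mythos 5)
Your proposal is correct and follows essentially the same route as the paper, which obtains the corollary by plugging the high-probability FPL regret bounds of Corollary~\ref{cor:hpFPL} into the meta-theorem for the randomized algorithm (Theorem~\ref{thm:main-result-distribution}, needed here since the IP-representable sets $\mathcal X$ and $\mathcal U$ are non-convex), with the NONA-ness of the two learners' sequences justified exactly as in the proof of Theorem~\ref{thm:main-result}. If anything, you are more careful than the paper: you make explicit the loss-normalization required by the boundedness hypothesis of Corollary~\ref{cor:hpFPL} and the union bound giving the $1-2\delta$ probability, both of which the paper leaves implicit.
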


\begin{remark}[Infeasibility of dualization approach]
Note that this setup is a good example where we cannot solve the
robust problem via dualizing the uncertainty set. If the uncertainty set
\(\mathcal U\) is intractable (e.g., optimizing over \(\mathcal U\)
is NP-hard or \(\conv{\mathcal U}\) has high extension complexity),
then a tractable dual formulation cannot exist. However, having
access to linear optimization oracles still allows us to solve the
robust optimization problem. 
\end{remark}

\begin{remark}[Non-adaptivity of \(\mbu\)]
  We would like to stress, that we \emph{do not} solve \(\min_{\mbx \in
    \mathcal X} \max_{\mbu \in \mathcal U} f(\mbx,\mbu)\) with
  solution pair \((\mbx,\mbu) \in \mathcal X \times \mathcal U\) but rather
  \(\bar \mbx \in \conv{\mathcal X}\), i.e., it is a \emph{mixed
    strategy} and \(\min_{\mbx \in
    \mathcal X} \max_{\mbu \in \mathcal U} f(\mbx,\mbu)\) should be
  considered the value that we benchmark against. We showed that \(\bar \mbx\) can achieve the same
  value (with vanishing regret) as compared to choosing the minimal \(\mbx^*\), i.e., 
$$\max_{\mbu'\in \mathcal{U}} f(\overline{\mbx}, \mbu') \leq 
\min_{\mbx^*\in \mathcal{X}} \max_{\mbu\in \mathcal{U}} f(\mbx^*,
\mbu) + o(1),
$$
as \(\mbu\)
is not adaptive with respect to \(\mbx\).
Here non-adaptivity refers to the \(\mbu\)-player
having to make her decision \(\mbu_t\)
in round \(t\)
without knowing what the \(\mbx\)-player
will play as \(\mbx_t\) in round \(t\) (and also vice versa with swapped roles).
Put differently, the non-adaptivity ensures that we solve the problem
\begin{equation*}
\begin{split}
&\min_{\mu \in \Delta(\mathcal X)} \max_{\kappa \in \Delta(\mathcal{U})}
\expectation(\mbx \sim \mu, \mbu \sim \kappa){f(\mbx,
\mbu)}\\
 = & \min_{\mu \in \Delta(\mathcal X)} \max_{\mbu \in  \mathcal{U}}
\expectation(\mbx \sim \mu){f(\mbx,
\mbu)},\end{split}\end{equation*}
which is equivalent to solving the problem over the respective convex
hulls of \(\mathcal X\) and \(\mathcal U\). 
\end{remark}

We now present a sample application. 

\begin{example}[Robust Routing in Unrealiable Networks]
\label{exa:robustRouting}
  Let \(G = (V,E)\)
  be an undirected graph with distinguished source \(s\)
  and sink \(t\)
  with \(s,t \in V\).
  We consider the setup of robust minimal cost routing in \(G\)
  with unrealiable edges: we want to find a route (without revisiting edge) of minimum cost from \(s\)
  to \(t\)
  in \(G\)
  under various scenarios where edges might become unavailable. 

  We let the \(\mbx\)
  learner choose paths over \(G(V,E)\),
  which can be represented as
  \(\mathcal X \subseteq \{0,1\}^{E}\).
  The corresponding optimization problem over \(\mathcal X\)
  for nonnegative costs \(\{c_e\}_{e \in E}\)
  can be solved e.g., using network flows or shortest path
  computations. The \(\mbu\)
  learner is playing \emph{edge removal} subject to
    budget and connectivity constraints: The set \(\mathcal U\) is
    given by
\[\mathcal U \coloneqq \{ C \subseteq E \mid \size{C} \leq K,
G[E\setminus C] \text{ is connected}\}, \]
 i.e., the adversary can remove a small number of edges as long as the graph
remains connected, e.g., removing valuable shortcuts between nodes but
the adversary cannot adapt to the chosen route. 

It remains to specify the cost. We have the matrix \(A \in
\R_+^{E \times E}\) with entries \(A = M \cdot I\), where \(I\) is the
\(E \times E\) identity matrix and we use cost \(c \mbx +
\mbu^\intercal A \mbx\). In standard fashion this can be reformulated
as a cost matrix \(\tilde A = \left(
  \begin{matrix}
    c \\
    A
  \end{matrix}
\right)\) and we prepend the \(\mbu\)-actions with a \(1\), i.e.,
\(\tilde \mbu = (1,\mbu)\). We further assume that \(M \gg \max_{e \in
E} c_e\). By Corollary~\ref{cor:robIPCor} we obtain that after \(T
\geq O(1) \cdot \frac{M^2 (\size{V}^{3} + \size{K}^3) (\log \size{E} + \log
  \frac{1}{2 \delta})}{\varepsilon^2}\)
rounds with probability at least \(1-\delta\) the solution
\(\bar x\) satisfies:
$$\max_{\mbu'\in \mathcal{U}} f(\overline{\mbx}, \mbu') \leq
\min_{\mbx^*\in \mathcal{X}} \max_{\mbu\in \mathcal{U}} f(\mbx^*,
\mbu) + \varepsilon,
$$
and since \(\bar \mbx\) is a distribution over routes in the graph
\(G\) and \(\mbu\) is non-adaptive, we can implement the
solution by sampling from \(\mbx\) and playing the routes. 
\end{example}

Similar examples can be readily obtained, e.g., for spanning
trees, matchings, and permutahedra.

\section{Application: (Robust) MDPs} 
\label{sec:applications}

Another application of our framework
is solving robust MDPs with {\em
  non-rectangular} uncertainty sets, where {\em only the reward
  parameters} are subject to uncertainty; other applications have been
relegated to Supplementary Material~\ref{sec:robust-optim-with}. An MDP is defined by a
6-tuple:
$\langle \mathcal{T}, \gamma, \mathcal{S}, \mathcal{A}, \mathbf{p},
\mathbf{r} \rangle $, where $\mathcal{T}$ is the (possibly infinite)
decision horizon, $\gamma\in (0, 1]$ is the discount factor,
$\mathcal{S}$ is the state space, $\mathcal{A}$ is the action space,
and $\mathbf{p}$ and $\mathbf{r}$ are the transition probability and
reward respectively.  The decision criterion is to find a policy $\pi$ that
maximizes the expected cumulative discounted reward. See the
classical textbook for more
details~\cite{Puterman94}.  

Robust MDPs~\cite{Nilim04,Iyengar03} are concerned with solving MDPs
under {\em parametric uncertainty}: here $\bp$ and $\br$ are
unknown, but instead a so-called ``uncertainty set'' $\mathcal{U}$ is
given such that $(\bp, \br)\in \mathcal{U}$, and the decision
criterion is to find an optimal policy $\pi$ for the worst-case
parameter realization in $\mathcal{U}$:
 \begin{equation}\label{equ.rmdp.generalform}\underset{\pi}{\mbox{Maximize}}\,\, \min_{(\bp, \br)\in \mathcal{U}} \big\{\mathbb{E}_{\pi, \bp, \br}\big[ \sum_{t=1}^\mathcal{T} \gamma^{t-1} r(\tilde{s}_t, \tilde{a}_t) \big]\big\}.
 \end{equation}
 This problem is in general hard even when $\mathcal{U}$ is a
 relatively simple set~\cite{wiesemann2013robust}.  The two special cases where the problem
 can be solved efficiently are (1) when $\mathcal{U}$ is {\em rectangular},
 i.e., it is a Cartesian product of uncertainty sets of parameters
 of each state; or (2) only the reward parameters $\br$ are
 subject to uncertainty.  
 
 Specifically, when $\bp$ is known and only $\br$ is subject to
 uncertainty with $\br\in\mathcal{U}$,
 Problem~\eqref{equ.rmdp.generalform} can be reformulated
 as\begin{footnote}{for simplicity we consider $\mathcal{T}=\infty$,
     as the finite horizon case is easily converted into an infinite
     horizon case}\end{footnote}
 \begin{align}
   \label{equ.robustMDPinLP}
   \underset{\mathbf{x}}{\max}\,& \min_{\br\in \mathcal{U}} \sum_{s\in \mathcal{S}} \sum_{a\in \mathcal{A}} r(s,a)x(s,a)\\
\mbox{s.t.}\,  & \sum_{a\in \mathcal{A}} x(s',a) -\sum_{s\in \mathcal{S}} \sum_{a\in \mathcal{A}} \gamma p(s'|s, a) x(s,a) =\alpha(s'), \forall s',\\
 & x(s,a)\geq 0, \forall s, \forall a.
 \end{align}
 Here $\alpha(\cdot)$ is the distribution of the initial state. Let
 $\mathbf{x}^*$ be the optimal solution to the above problem, then the
 optimal policy to \eqref{equ.rmdp.generalform} is obtained by
 $q_s(a)=x^*(s,a)/(\sum_{a'\in \mathcal{A}} x^*(s, a'))$ where
 $q_s(a)$ stands for the probability of choosing action $a$ at state
 $s$.

 Notice that when $\mathcal{S}$ and $\mathcal{A}$ are even moderately
 large, solving~\eqref{equ.robustMDPinLP} can be computationally
 expensive, even when $\mathcal{U}$ is a nice and simple set. For
 example, suppose $\mathcal{U}=\{ \br \mid \|\br-\br_0\|_2\leq c \}$ for
 given $\br_0$ and $c$, i.e., the set of all reward vectors that are
 close (in the sense of Euclidean distance) to a ``nominal parameter''
 $\br_0$, then Problem~\eqref{equ.robustMDPinLP} is a second order
 cone programming with $|\mathcal{S}|\cdot |\mathcal{A}|$ linear
 constraints.
 
 We now discuss how to solve Problem~\eqref{equ.robustMDPinLP} using
 our proposed framework. Observe that
 Problem~\eqref{equ.robustMDPinLP} can be rewritten as
$$\underset{\mathbf{x}\in \mathcal{X}}{\mbox{Maximize}}\quad \underset{\br\in \mathcal{U}}{\mbox{Minimize}}\,\,\,  \mathbf{r}^\top \mathbf{x}
$$
where $\mathcal{X}$ is the feasible set of the constraints of
\eqref{equ.robustMDPinLP}. Thus, we can apply Algorithm~\ref{alg:rool}
to solve \eqref{equ.robustMDPinLP}. The key observation is that the
objective function is linear with respect to $\mathbf{x}$, which means
FPL is a weak learner for the primal player. Furthermore, each
iteration of FPL solves
$\underset{\mathbf{x}\in \mathcal{X}}{\mbox{Maximize}}\,\,
\hat{\mathbf{r}}^\top \mathbf{x} $ for some given
$\hat{\mathbf{r}}$. In the MDP context, we can solve such a problem by
first finding the optimal deterministic policy $\pi^*$ of the MDP
where the reward vector is $\hat{\mathbf{r}}$ (by value iteration or
policy iteration), and then find $\mathbf{x}^*$ corresponding to
$\pi^*$. In particular, one can obtain $\mathbf{x}^*$ via first
computing
$\mathbf{y}^* \triangleq (1-\gamma P_{\pi^*})^{-1} \alpha
=\sum_{i=1}^\infty \gamma^{i-1} P_{\pi^*}^{i-1} \alpha,$ where
$P_{\pi^*} \in \mathbb{R}^{|\mathcal{S}|\times |\mathcal{S}|}$ is the
transition matrix of the Markov chain induced by $\pi^*$, and then set
$$x^*(s,a) =\left\{\begin{array}{ll} y^*(s,a) & \mbox{if }a=\pi^*(s); \\
                                                      0         &\mbox{otherwise}.
                                                      \end{array}\right.  
$$
Thus, the computation required for obtaining an $\epsilon$-optimal
solution is
$\mathcal{O}\big( (|\mathcal{S}|^2 + |\mathcal{S}||\mathcal{A}|)\log
1/\epsilon \big)$ by combination with the computational requirement
for value iteration and for finding $\mathbf{y}^*$.  It is also
worthwhile to mention that $\|\mathbf{x}\|_1 =\frac{1}{1-\gamma}$,
which is independent of $|\mathcal{S}|$ and $|\mathcal{A}|$.

For the dual player, online gradient descent is a learner (see e.g., \cite{ocoBook}):
\begin{theorem}\label{thm.oco}
  Let $f_1(\cdot),f_2(\cdot),\cdots, f_n(\cdot),\cdots$ be a sequence
  of arbitrary convex loss functions, possibly anticipatory, and let
  $z_t$ be the output at $t$-stage of online gradient descent, then
$$ \sum_{t=1}^T f_t(z_t) -\min_{z\in \mathcal{Z}} \sum_{t=1}^T f_t(z) \leq \frac{3}{2}GD\sqrt{T}, 
$$ 
where $G$ is an upper bound of the Lipschitz continuities of
$f_t(\cdot)$ and $D$ is the diameter of $\mathcal{Z}$.
\end{theorem}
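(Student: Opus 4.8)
The plan is to prove the standard regret bound for \emph{Online Gradient Descent} (OGD), where the crucial conceptual point is that the entire argument is deterministic and hence holds verbatim for anticipatory sequences. Recall that OGD over the convex set $\mathcal{Z}$ produces its iterates via the projected subgradient step $z_{t+1} = \Pi_{\mathcal{Z}}(z_t - \eta_t g_t)$, where $g_t \in \partial f_t(z_t)$ is a subgradient and $\Pi_{\mathcal{Z}}$ is the Euclidean projection onto $\mathcal{Z}$. Since $z_t$ is a deterministic function of $z_1, f_1, \dots, z_{t-1}, f_{t-1}$ and carries no exogenous randomness $\xi_t$, the distinction between anticipatory and non-anticipatory input is immaterial: the bound we derive holds pointwise for \emph{every} sequence of convex losses, so it automatically covers the anticipatory case. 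This is precisely why OGD qualifies as a strong learner.

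First I would fix a comparator $z^* \in \arg\min_{z \in \mathcal{Z}} \sum_t f_t(z)$ and use convexity to linearize, i.e., $f_t(z_t) - f_t(z^*) \leq \langle g_t, z_t - z^*\rangle$. The next step is the standard descent lemma: expanding $\|z_{t+1} - z^*\|^2$ and invoking the nonexpansiveness of the projection $\Pi_{\mathcal{Z}}$ gives
\[\langle g_t, z_t - z^*\rangle \le \frac{\|z_t - z^*\|^2 - \|z_{t+1} - z^*\|^2}{2\eta_t} + \frac{\eta_t}{2}\|g_t\|^2.\]
Summing over $t$ and using $\|g_t\| \le G$ bounds the regret by $\sum_t \frac{\|z_t - z^*\|^2 - \|z_{t+1} - z^*\|^2}{2\eta_t} + \frac{G^2}{2}\sum_t \eta_t$.

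To obtain the constant $\tfrac{3}{2}$ I would take the time-varying step size $\eta_t = \frac{D}{G\sqrt{t}}$. An Abel summation (summation by parts) rearranges the first (telescoping) term; because $1/\eta_t$ is increasing and $\|z_t - z^*\|^2 \le D^2$ by the diameter bound, the telescoping sum is controlled by $\frac{D^2}{2\eta_T} = \frac{DG\sqrt{T}}{2}$. The second sum is $\frac{G^2}{2}\cdot\frac{D}{G}\sum_{t=1}^T t^{-1/2} \le \frac{GD}{2}\cdot 2\sqrt{T} = GD\sqrt{T}$, using the elementary estimate $\sum_{t=1}^T t^{-1/2} \le 2\sqrt{T}$. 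Adding the two contributions yields exactly $\frac{3}{2}GD\sqrt{T}$.

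The calculation is entirely routine; the only point requiring a little care is the Abel-summation step, where one must verify that the coefficients $\frac{1}{2\eta_t} - \frac{1}{2\eta_{t-1}}$ are nonnegative so that replacing $\|z_t - z^*\|^2$ by its upper bound $D^2$ preserves the inequality, together with the estimate $\sum_{t=1}^T t^{-1/2} \le 2\sqrt{T}$ that pins down the constant. The only genuine ``obstacle'', such as it is, is the observation highlighted above --- that the determinism of OGD removes any role for the anticipatory/non-anticipatory distinction --- rather than anything in the computation itself.
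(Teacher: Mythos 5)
Your proof is correct, and it is precisely the standard argument behind the bound: the paper itself gives no proof of this theorem, citing it directly from the online convex optimization textbook (where it appears with the same constant $\tfrac{3}{2}$), and your derivation --- the step sizes $\eta_t = \frac{D}{G\sqrt{t}}$, the projection nonexpansiveness, the Abel summation of the telescoping term to get $\frac{D^2}{2\eta_T} = \frac{GD\sqrt{T}}{2}$, and the estimate $\sum_{t=1}^T t^{-1/2} \le 2\sqrt{T}$ --- reproduces that textbook proof exactly. Your emphasis on the fact that the argument is deterministic and therefore holds pointwise for every loss sequence, anticipatory or not, is exactly the reason the paper classifies online gradient descent as a strong learner, so the conceptual framing is also on target.
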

Notice that for the dual player,
$G\leq \max \|\mathbf{x}\|_2\leq \|\mathbf{x}\|_1\leq
\frac{1}{1-\gamma}$. Also notice that computing the gradient for the
dual player is trivial as it is just $\mathbf{x}_t$.

Invoking Theorem~\ref{thm:main-result}, together with
Corollary~\ref{cor:hpFPL}, and Theorem~\ref{thm.oco}, we obtain the
following. We remark that the regret bound is almost independent of
the dimensionality $|\mathcal{S}||\mathcal{A}|$.
\begin{corollary}
  Let the setup be given as above.  With probability $1-2\delta$,
  Algorithm~\ref{alg:rool} returns a point
  $\overline{\mbx}=\frac{1}{T}\sum_{t=1}^T \mbx_t$ satisfying
\begin{equation*}
\begin{split}
&\max_{\mbu'\in \mathcal{U}} f(\overline{\mbx}, \mbu') -\min_{\mbx^*\in \mathcal{X}} \max_{\mbu\in \mathcal{U}} f(\mbx^*, \mbu) \\
& \leq O\left(\left[\frac{  \sqrt{\log (|\mathcal{S}|\cdot|\mathcal{A}|) + \log
  \frac{1}{\delta}}}{(1-\gamma)^{3/2}\sqrt{T}}+ \frac{ D}{(1-\gamma)\sqrt{T}}\right]M \right),
\end{split}\end{equation*}
where $D$ is the diameter of $\mathcal{U}$, and $M=\max_{\mathbf{x}, \mathbf{r}}|\mathbf{r}^\top\mathbf{x}| \leq \max_{\mathbf{r}}|\mathbf{r}|_{\infty}/(1-\gamma).$
\end{corollary}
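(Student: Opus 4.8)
The plan is to read off the bound as a direct instantiation of Theorem~\ref{thm:main-result}, substituting for $R_x$ and $R_u$ the regret guarantees of FPL (Corollary~\ref{cor:hpFPL}) and of online gradient descent (Theorem~\ref{thm.oco}). First I would cast the robust MDP \eqref{equ.robustMDPinLP} into the min-max template of Algorithm~\ref{alg:rool} by negating the objective, so that the occupancy player $\mbx \in \mathcal{X}$ is the minimizer of $f(\mbx,\br) = -\br^\top\mbx$ and the reward player $\br \in \mathcal{U}$ is the maximizer; the negation leaves the regret gap in the statement unchanged. Since $f(\cdot,\br)$ is linear, hence convex, in $\mbx$ and $\mathcal{X}$ is the convex occupancy polytope, Assumption~\ref{ass:setup} holds, so once both players are exhibited as weak learners Theorem~\ref{thm:main-result} yields $\max_{\br'}f(\overline{\mbx},\br') - \min_{\mbx^*}\max_{\br}f(\mbx^*,\br) \le (R_x+R_u)/T$ with probability $1-2\delta$.

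Next I would instantiate the primal learner. The $\mbx$-player runs FPL over $\mathcal{X}$, whose linear-optimization oracle is exactly the value/policy-iteration construction described above (solve $\max_{\mbx\in\mathcal{X}}\hat{\br}^\top\mbx$ via the optimal deterministic policy for reward $\hat{\br}$ and its occupancy measure $\mathbf{y}^\ast = (1-\gamma P_{\pi^\ast})^{-1}\alpha$). FPL is a weak learner, so the primal part of Assumption~\ref{ass:learn} is met, and because $\norm[1]{\mbx} = 1/(1-\gamma)$ for every occupancy vector I set $m_{\mathcal{X}} = 1/(1-\gamma)$ and $d_{\mathcal{X}} = \lvert\mathcal{S}\rvert\cdot\lvert\mathcal{A}\rvert$. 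Corollary~\ref{cor:hpFPL} assumes unit-bounded losses, whereas here $\lvert\br^\top\mbx\rvert \le M$; I therefore rescale the losses by $1/M$, apply the corollary, and rescale back, obtaining (with probability $1-\delta$) $R_x(T,\delta) = O\!\big(M\,(1-\gamma)^{-3/2}\sqrt{T\log(\lvert\mathcal{S}\rvert\lvert\mathcal{A}\rvert) + \log\tfrac{1}{\delta}}\big)$, which contributes the first summand after division by $T$.

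Then I would instantiate the dual learner. The $\br$-player runs online gradient descent on the losses $\br \mapsto \br^\top\mbx_t$, which are linear with gradient $\mbx_t$; these losses may be anticipatory, but OGD is a strong (hence weak) learner, so the dual part of Assumption~\ref{ass:learn} holds. The Lipschitz constant obeys $G \le \max_t\norm[2]{\mbx_t} \le \norm[1]{\mbx_t} = 1/(1-\gamma)$, so with $D$ the diameter of $\mathcal{U}$ Theorem~\ref{thm.oco} gives $R_u(T,\delta) = \tfrac{3}{2}GD\sqrt{T} = O\!\big(D\,(1-\gamma)^{-1}\sqrt{T}\big)$, deterministically. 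Plugging $R_x$ and $R_u$ into $(R_x+R_u)/T$ and union-bounding the (single, FPL-induced) randomness gives the claimed estimate with probability $1-2\delta$ as in Theorem~\ref{thm:main-result} (in fact $1-\delta$ suffices, as only FPL is randomized); the displayed factor $M$ multiplying the second summand is a harmless over-estimate in the standard regime of normalized rewards, where $M = \max_{\br}\lvert\br\rvert_\infty/(1-\gamma) \ge 1$.

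The main obstacle is not a single hard inequality but disciplined bookkeeping of the problem-specific scales across two heterogeneous learners: tracking the loss-normalization factor $M$ required by Corollary~\ref{cor:hpFPL} (which presumes unit-bounded losses) against the native, already-scaled regret of Theorem~\ref{thm.oco}, and verifying that the $\ell_1$-radius $1/(1-\gamma)$ of the occupancy polytope is the quantity $m_{\mathcal{X}}$ that drives the $m^{3/2}$ dependence of FPL. Everything else is a mechanical substitution into Theorem~\ref{thm:main-result}.
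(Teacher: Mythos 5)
Your proposal is correct and follows essentially the same route as the paper, which proves this corollary simply by invoking Theorem~\ref{thm:main-result} with FPL (Corollary~\ref{cor:hpFPL}, instantiated with $m_{\mathcal X}=1/(1-\gamma)$, $d_{\mathcal X}=|\mathcal S|\cdot|\mathcal A|$, and losses rescaled by $M$) as the primal weak learner and online gradient descent (Theorem~\ref{thm.oco}, with $G\le \norm[1]{\mbx}\le 1/(1-\gamma)$) as the dual learner over $\mathcal U$. Your additional observations---that only FPL contributes randomness so $1-\delta$ in fact suffices, and that the factor $M$ multiplying the OGD term is a harmless over-estimate---are refinements of bookkeeping the paper glosses over, not deviations in approach.
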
 
\begin{remark}
  In many cases $D$ is relatively small. For example, when
  $\mathcal{U}=\{ \br \mid \|\br-\br_0\|_2\leq c \}$, then $D\leq
  c$. Another interesting case is when the perturbations are sparse,
  e.g.,
  $\mathcal{U}=\{ \br \mid \|\br-\br_0\|_{\infty}\leq c, \|\br-\br_0\|_0
  \leq d \}$. That is, only $d$ out of $|\mathcal{S}| |\mathcal{A}|$
  entries of $\mathbf{r}$ are allowed to deviate from its nominal
  value, and each entry can deviate at most by $c$. Note that
  $\mathcal{U}$ is not a convex set. However, due to linearity of the
  objective function, we can optimize instead over the convex hull of
  $\mathcal{U}$, denoted by $\hat{\mathcal{U}}$, and the optimal
  solution to \eqref{equ.robustMDPinLP} over $\mathcal{U}$ and
  $\hat{\mathcal{U}}$ coincides. Moreover, the diameter of
  $\hat{\mathcal{U}}$ is bounded above by
  $2 \cdot\mbox{diameter}(\mathcal{U}) \leq 2 \sqrt{d} c$.
\end{remark}

\section{Computational Experiments}
\label{sec:comp-exper}

In this section we report computational experiments and while not a
focus of this paper, we demonstrate the real-world practicality of our
approach for Example~\ref{exa:robustRouting}. Following the approach
of Section~\ref{sec:robust-optim-with} we run two FPL learners where
the primal player is solving a min-cost flow problem over a graph,
whose polyhedral formulation is integral and the dual player is
playing the uncertainty.

Here we report results for three
representative instances, which where completed within a few minutes
of computational time. The smaller one has \(n=50\) nodes, the second
one is a very dense graph instance with \(n = 100\) nodes, and the
larger one has \(n=1000\) nodes and is relatively sparse. We plot
\(\max_{\mbu'\in \mathcal{U}} f(\overline{\mbx}, \mbu')
-\min_{\mbx^*\in \mathcal{X}} \max_{\mbu\in \mathcal{U}} f(\mbx^*,
\mbu)\) (which we refer to as \lq{}regret\rq{} slightly abusing
notions) against iterations \(t\) in
Figures~\ref{fig:experiments} and~\ref{fig:experiments2}. In all cases
our algorithm based on imaginary play converges very fast (note that
the figures report in log scale) and in fact much faster as predicted by
the conservative worst-case bound given in
Example~\ref{exa:robustRouting}.
 \begin{figure}[htb!]
  \centering
  \includegraphics[width=0.49\linewidth]{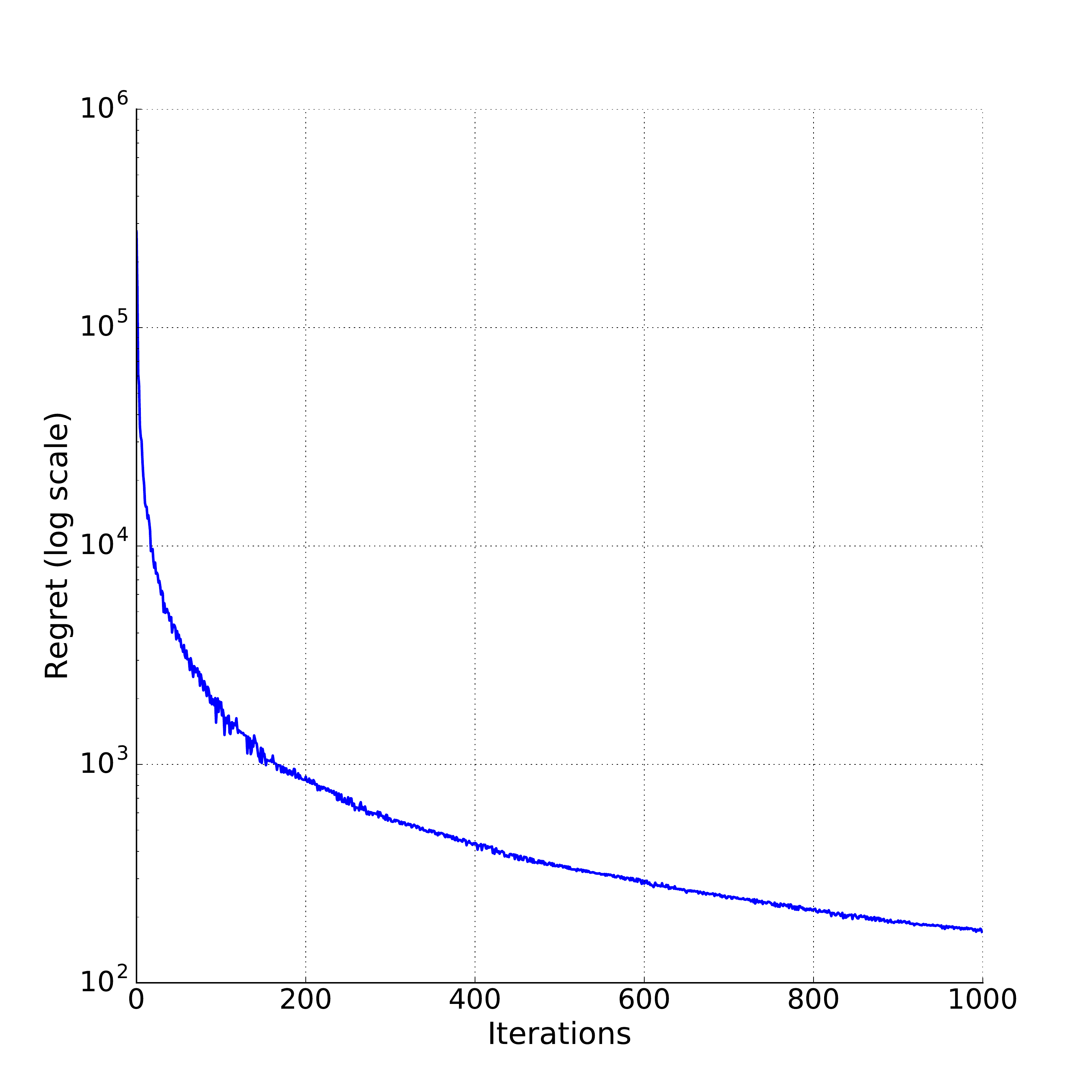}
  \includegraphics[width=0.49\linewidth]{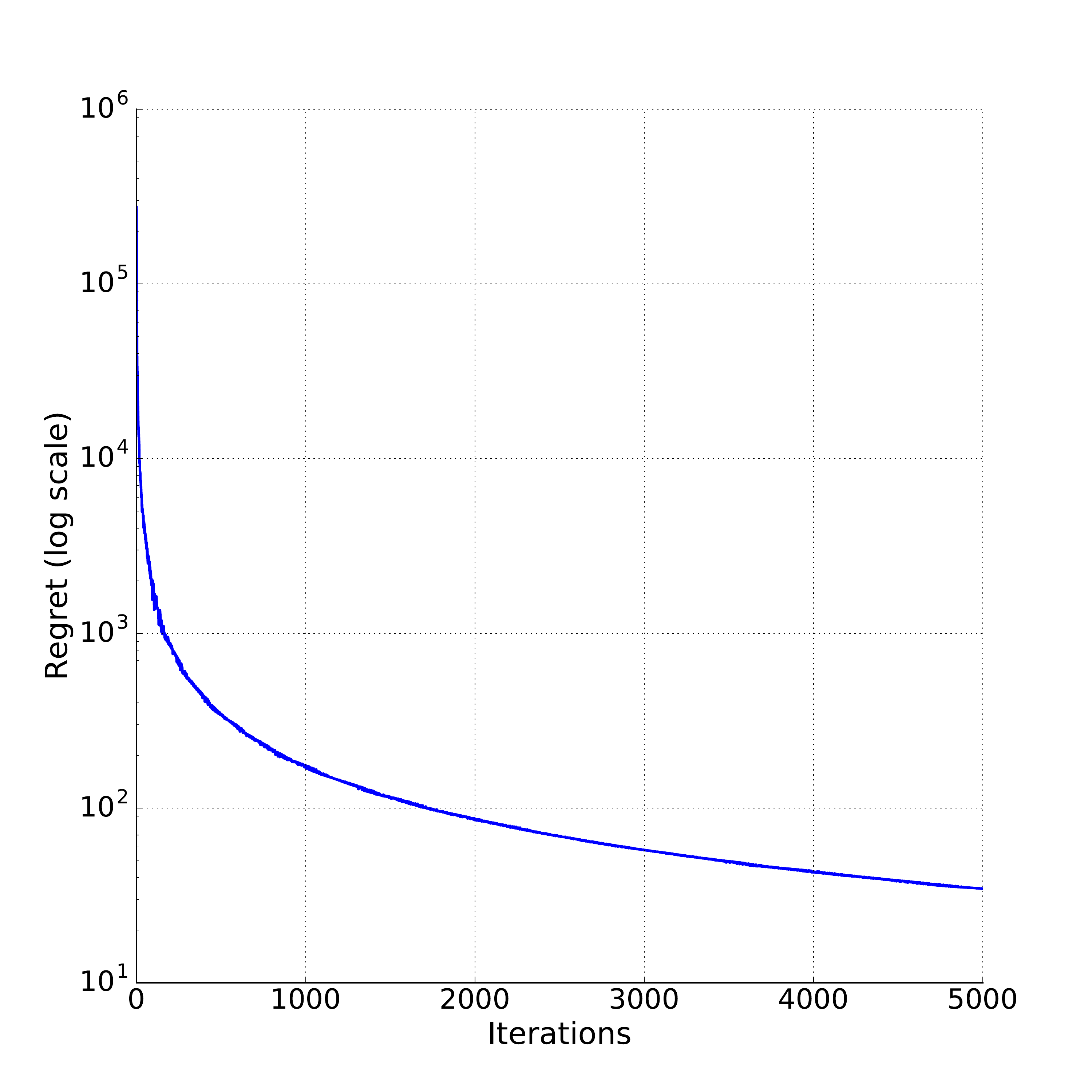}
  \caption{\label{fig:experiments} Regret (log scale)
    vs.~Iterations. Left: Robust min cost flow with \(n=50\) nodes run for
    \(1000\) iterations. Right: Same instance run for \(5000\);
    approaching zero regret. }
\end{figure}
\begin{figure}[htb!]
  \centering
  \includegraphics[width=0.49\linewidth]{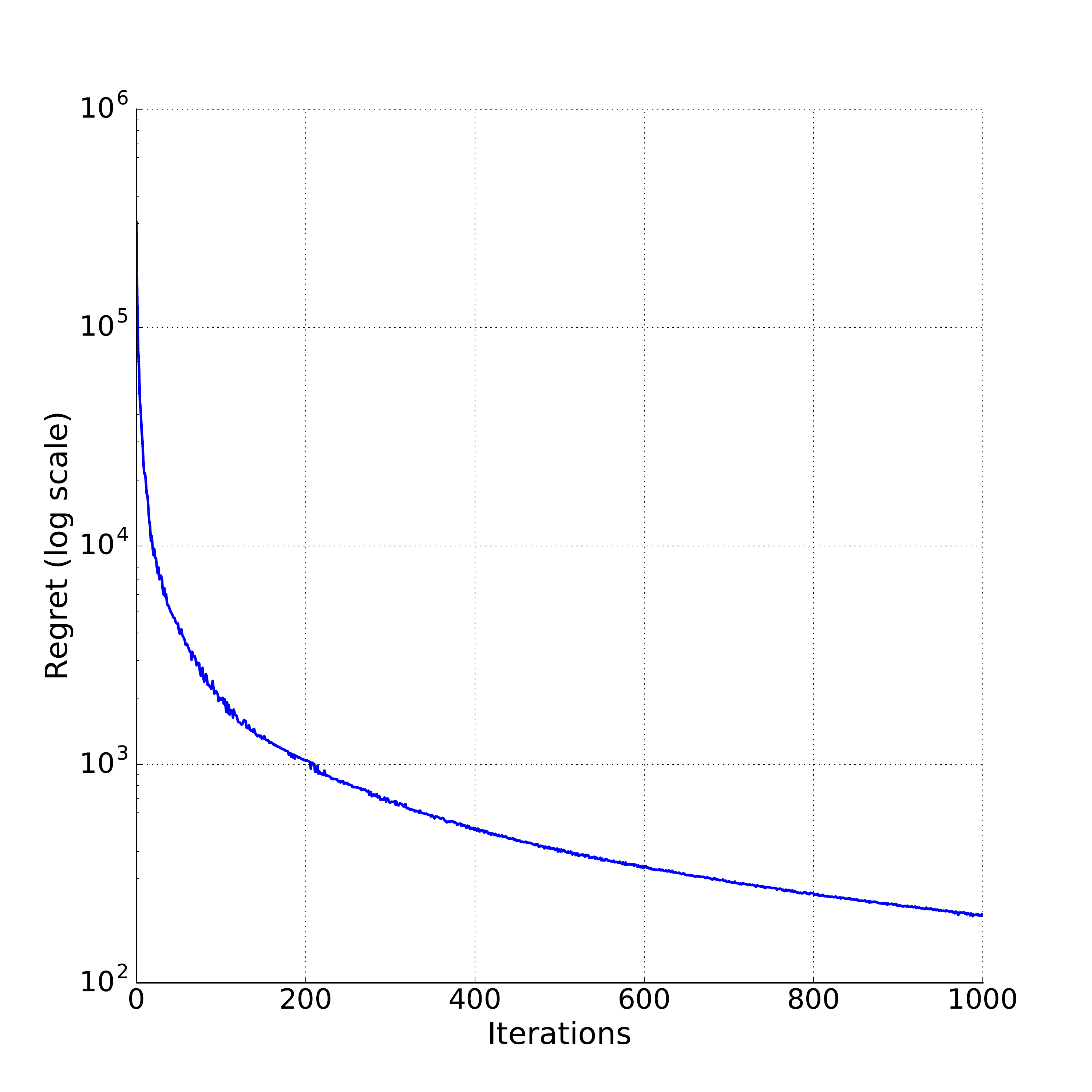}
  \includegraphics[width=0.49\linewidth]{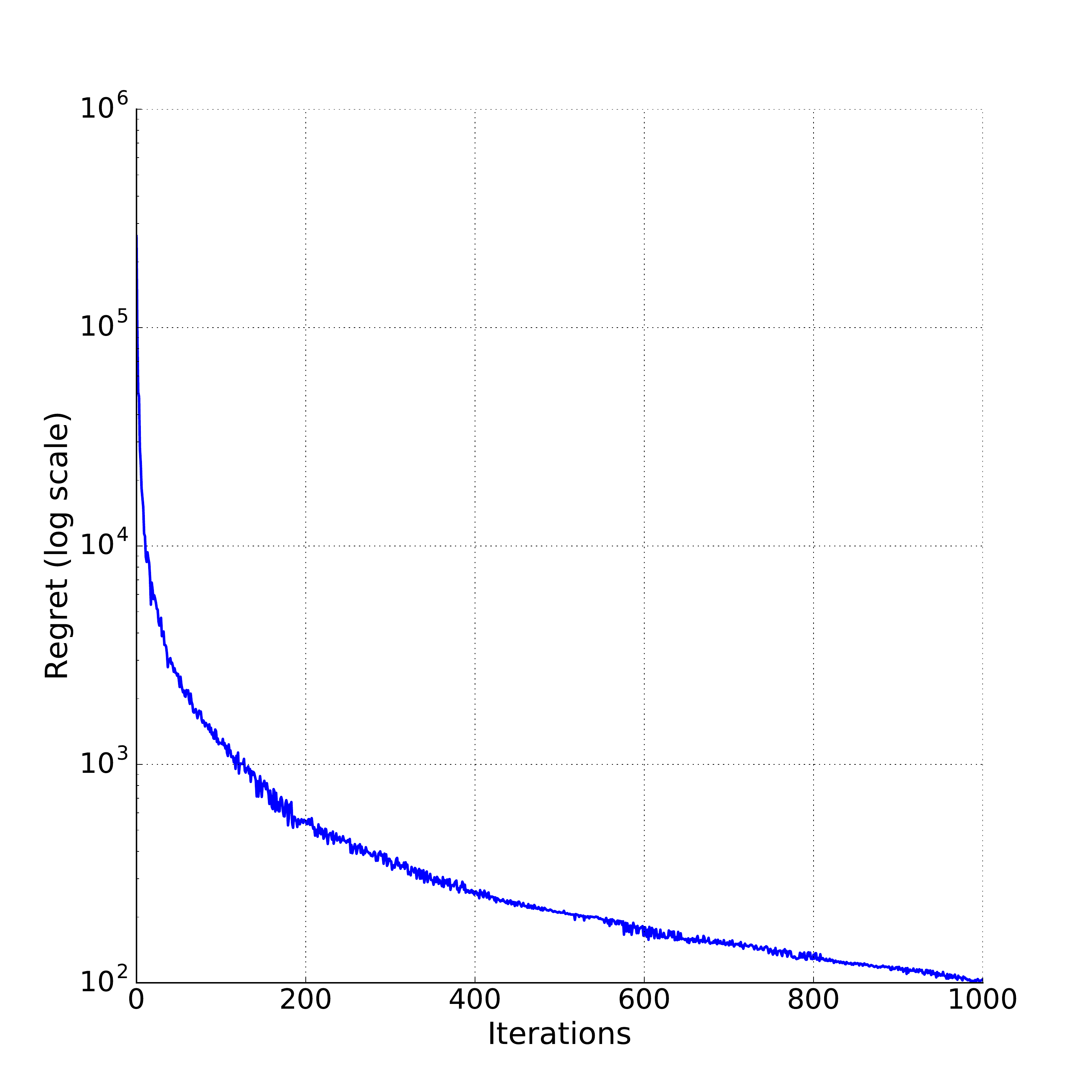}
  \caption{\label{fig:experiments2} Regret 
    (log scale) vs.~Iterations for larger instance. Left: Very dense
    graph instance on \(n=100\) nodes. Right: Instance with \(n=1000\)
    nodes run for \(1000\) iterations.}
\end{figure}

\subsection{Implementation details}
\label{sm:compRes}

We now provide implementation details for the computational experiments
reported above for reproducibility of results. The implementation is in
\texttt{python 3.5} using the network simplex algorithm from the
\texttt{networkx} library. All computational experiment were
completed in a few minutes on a standard laptop with a 2.7 GHz Intel
Core i5 processor; time per iteration is between \(0.2s\) and \(1s\)
depending on the size of the graph (note that the network simplex in
\texttt{networkx} is a pure python implementation).

The graph instances were generated using \texttt{networkx}'s
\texttt{fast\_gnp\_random\_graph} function with edge probabilities \(p\)
varying between \(1\%\) and \(30\%\). Demand was set to be \(1\) unit to
be sent from node \(0\) to node \(1\), without loss of generality due
to symmetry of the random graph generation. Both nominal edge
capacities as well as costs where set to uniform random values between
\(0\) and \(1\) inducing non-trivial min cost flows in the nominal
problem for the uncertainty realizations. 

Following the approach of Section~\ref{sec:robust-optim-with} we run
two FPL learners where the primal player is solving a min-cost flow
problem over the graph, whose polyhedral formulation is integral and the
dual player is playing uncertainty in terms of congestion, with high
cost \(M\) corresponding to edge removal. The value of \(M\) is
computed dynamically for each instance. The dual player plays congestion
uncertainty either (a) from an ellipsoidal uncertainty set initialized with a
random matrix (here the worst-case uncertainty can be computed
directly via algebraic manipulations), or (b) from a budget constraint
set of the form \(\{x \geq 0 \mid \sum_i {x_i} \leq K\}\) (here the
worst-case uncertainty can be obtained via sorting); the results and timing
are very similar for both uncertainty types. Our FPL implementations
are those of \cite{kalai2005efficient} as we are in the
full-information case.

In all figures we plot
\(\max_{\mbu'\in \mathcal{U}} f(\overline{\mbx}, \mbu') -
\min_{\mbx^*\in \mathcal{X}} \max_{\mbu\in \mathcal{U}} f(\mbx^*,
\mbu)\) in log-scale (which we refer to as \lq{}regret\rq{} slightly abusing
notions) against iterations \(t\). The right hand side value
\(\min_{\mbx^*\in \mathcal{X}} \max_{\mbu\in \mathcal{U}} f(\mbx^*,
\mbu)\) has been computed a priori to enable plotting of the regret;
in actual implementations this is of course not needed as the number of
iterations imply a strong bound on the quality of the solution but
here we wanted to make convergences to zero regret explicit. 

Generally it can be observed that in all cases the algorithm based on
imaginary play converges very fast and in fact much faster as predicted by the conservative
worst-case bound given in Example~\ref{exa:robustRouting}

We provide additional computations in Figures~\ref{fig:experiments3}.

\begin{figure}
  \centering
  \includegraphics[width=0.49\linewidth]{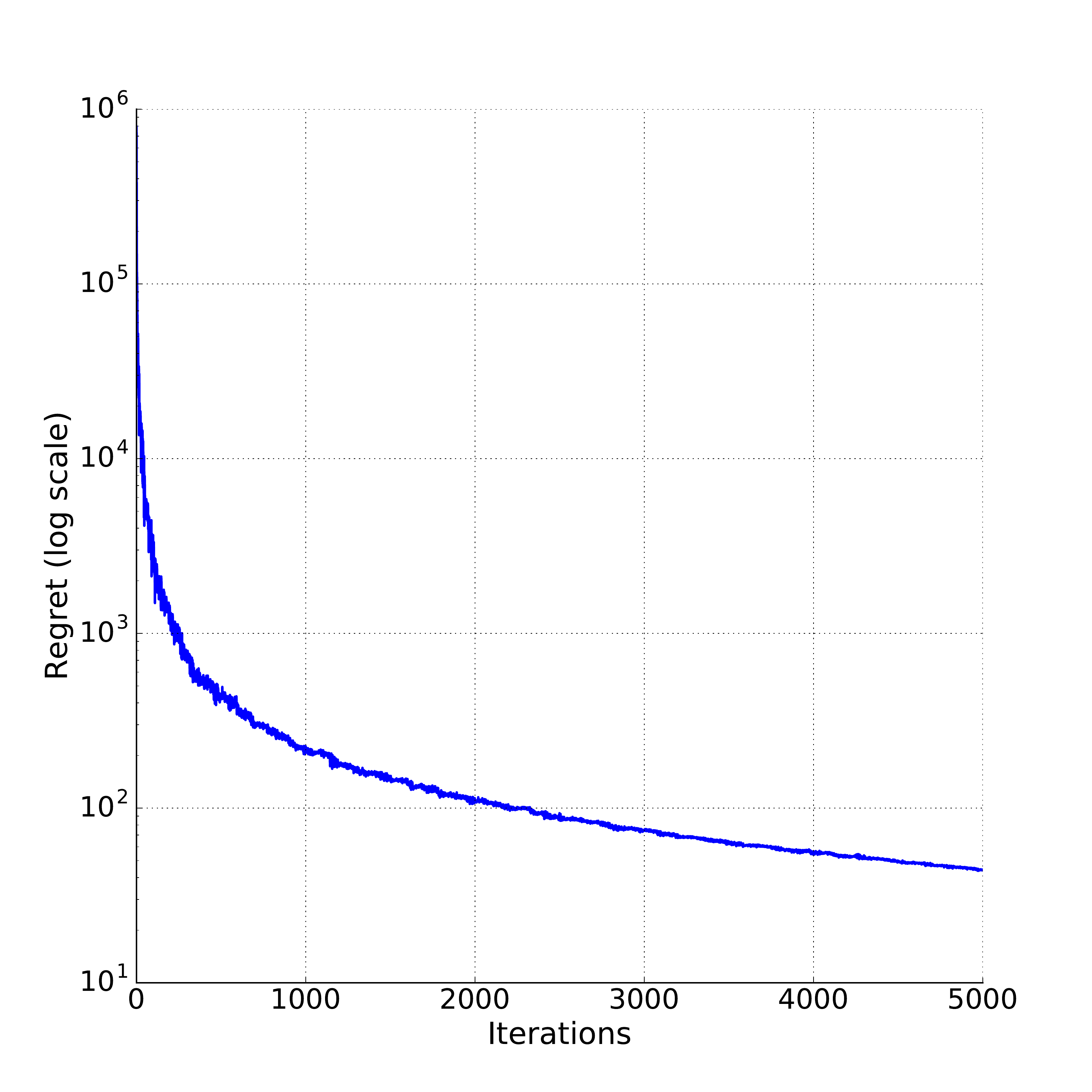}
  \includegraphics[width=0.49\linewidth]{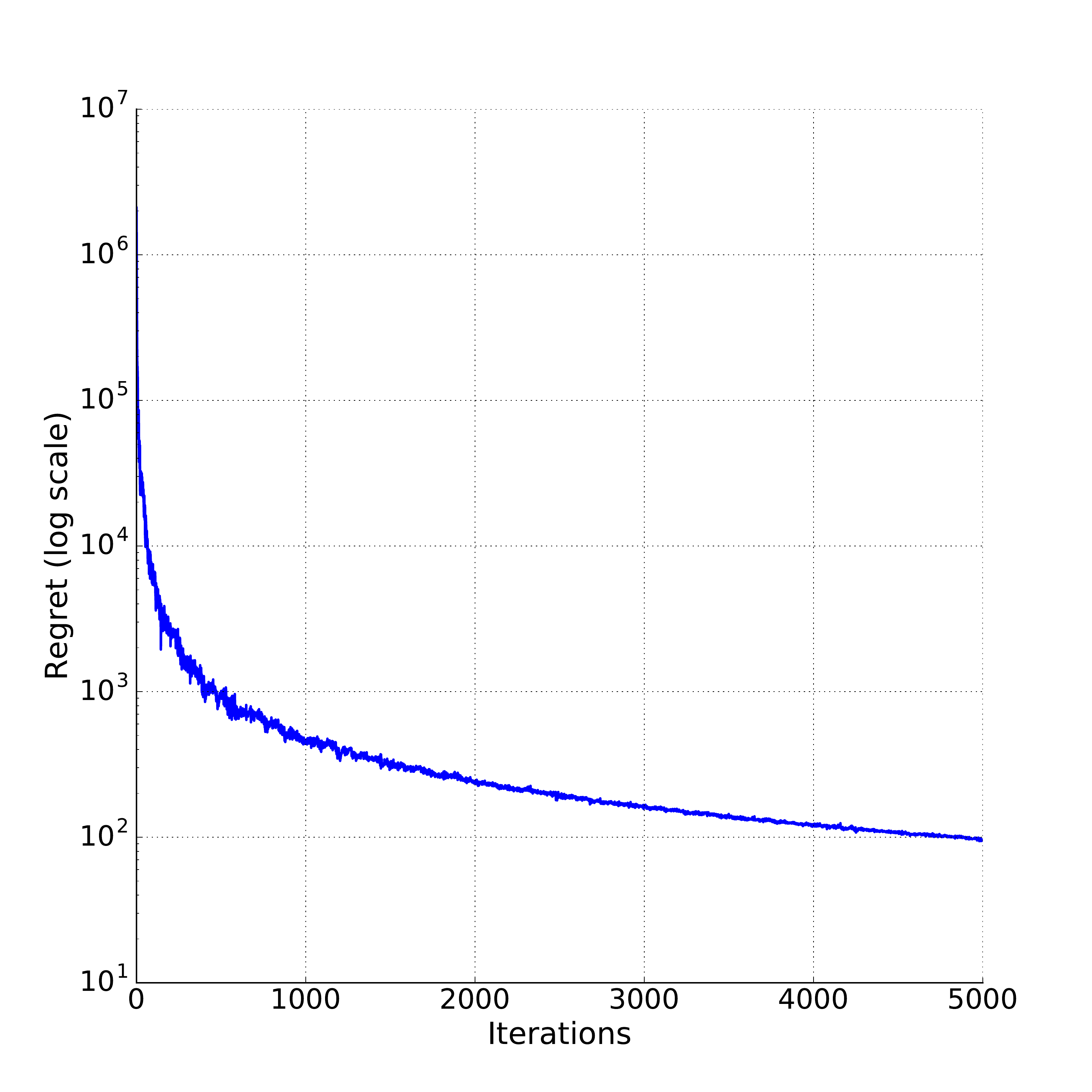} \\
  \includegraphics[width=0.49\linewidth]{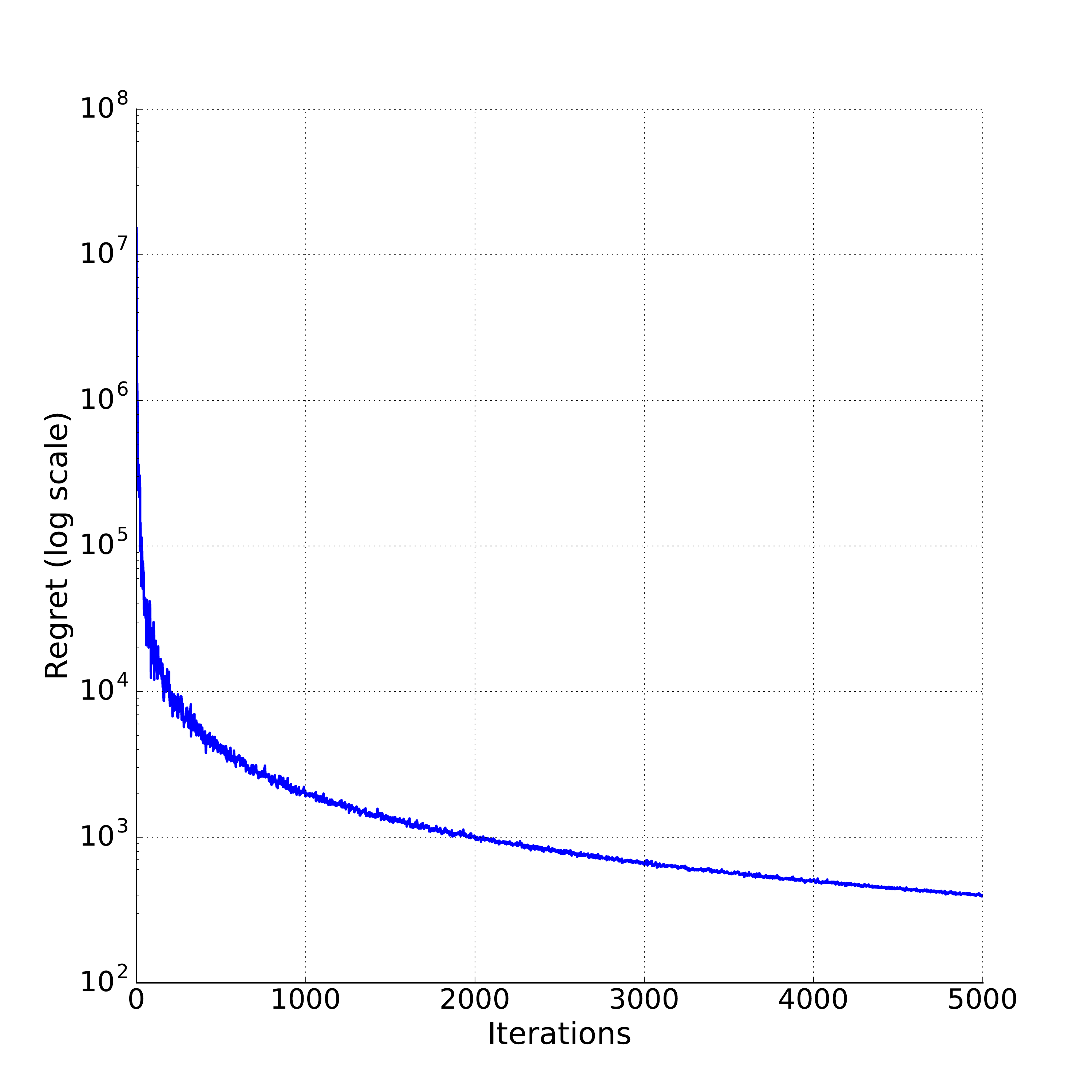}
  \includegraphics[width=0.49\linewidth]{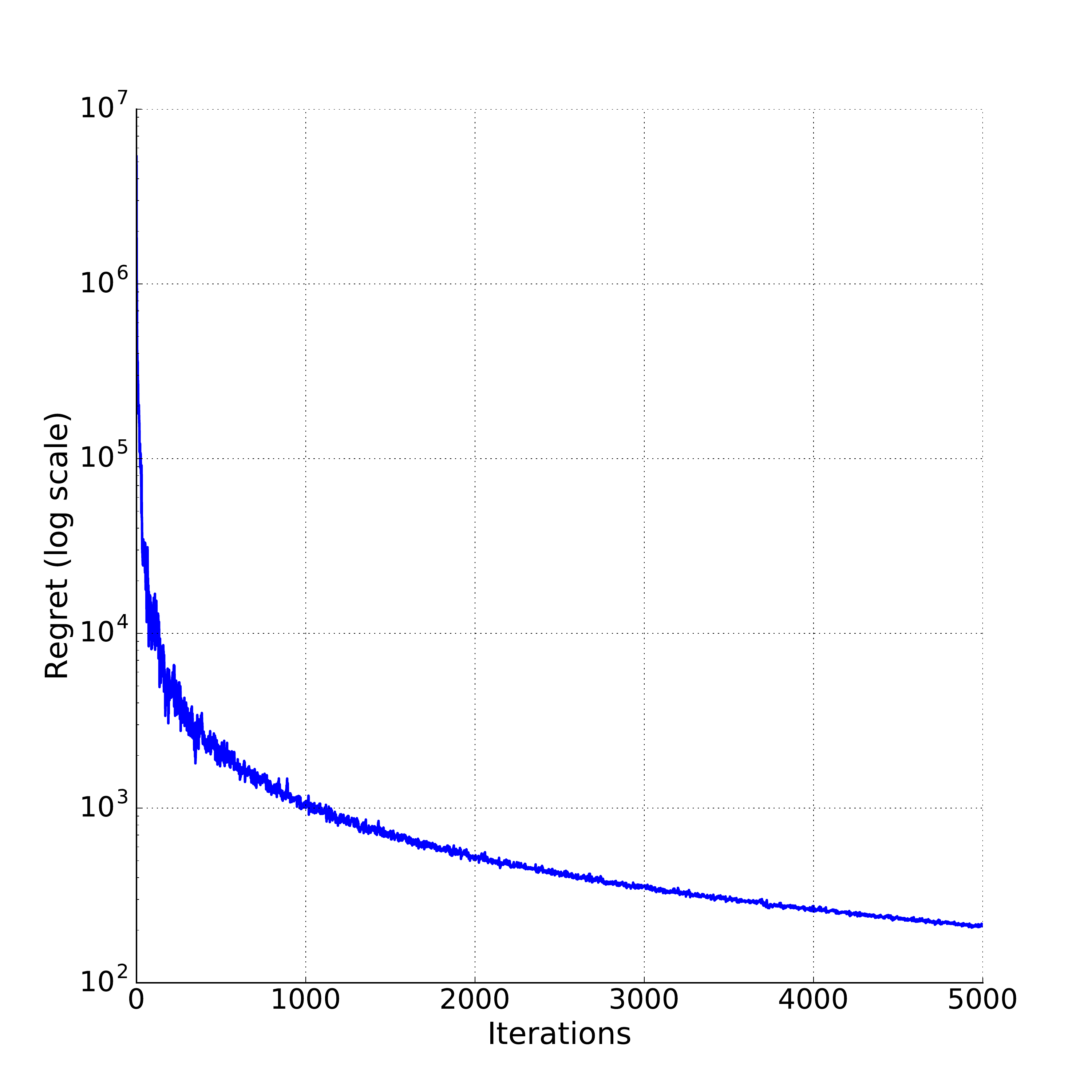} 
  \caption{\label{fig:experiments3} Regret (log scale)
    vs.~Iterations. Left/Top: Robust min cost flow with \(n=400\)
    nodes and \(p = 0.1\) run for
    \(5000\) iterations with \(\ell_2\)-uncertainty set with radius
    \(5\). Right/Top: Robust min cost flow with \(n=200\) and \(p = 0.3\) nodes run for
    \(5000\) iterations with \(\ell_2\)-uncertainty set with radius
    \(20\). Left/Bottom: Robust min cost flow with \(n=200\) and \(p = 0.3\) nodes run for
    \(5000\) iterations with \(\ell_2\)-uncertainty set with radius
    \(100\). Right/Bottom: Robust min cost flow with \(n=200\) and \(p = 0.3\) nodes run for
    \(5000\) iterations with budgeted uncertainty with
    \(K = 50\).}
\end{figure}

\end{document}